\documentclass[journal,onecolumn,12pt,twoside]{IEEEtran}

\newif\ifdoublecolumn
\newif\ifsinglecolumn
\IEEEoverridecommandlockouts   

\ifCLASSINFOpdf
   \usepackage[pdftex]{graphicx}
  \graphicspath{{figures/}}
  \usepackage{epstopdf}
  \DeclareGraphicsRule{.eps}{pdf}{.pdf}{`epstopdf --gsopt=-dPDFSETTINGS=/prepress #1}
  \DeclareGraphicsExtensions{.pdf,.jpeg,.png}
\else
   \usepackage[dvips]{graphicx}
   \graphicspath{{figures/}}
   \DeclareGraphicsExtensions{.eps}
\fi

\usepackage{pgfplots}
\pgfplotsset{compat=newest} 

\usetikzlibrary{plotmarks}
\usetikzlibrary{arrows.meta}
\usepgfplotslibrary{patchplots}
\usepackage{grffile}

\pgfplotsset{plot coordinates/math parser=false}
\newlength\figureheight
\newlength\figurewidth

\usepackage{pdfpages} 

\usepackage{stfloats}
\usepackage{nicefrac}
\usepackage{amsthm}
\usepackage{amssymb}
\usepackage{pifont}

\usepackage{latexsym}
\usepackage{times}

\usepackage{bm} 

\usepackage{array}
\usepackage[cmex10]{amsmath}

\usepackage{calrsfs}
\DeclareMathAlphabet{\pazocal}{OMS}{zplm}{m}{n}

\usepackage{multicol}
\usepackage{urwchancal}

\newcommand\copyrighttext{%
  \footnotesize \textcopyright 2022 IEEE. Personal use of this material is permitted.
  Permission from IEEE must be obtained for all other uses, in any current or future
  media, including reprinting/republishing this material for advertising or promotional
  purposes, creating new collective works, for resale or redistribution to servers or
  lists, or reuse of any copyrighted component of this work in other works.
  DOI: 10.1109/JSTSP.2022.3221681
  }
\newcommand\copyrightnotice{%
\begin{tikzpicture}[remember picture,overlay]
\node[anchor=south,yshift=10pt] at (current page.south) {\fbox{\parbox{\dimexpr\textwidth-\fboxsep-\fboxrule\relax}{\copyrighttext}}};
\end{tikzpicture}%
}

\usepackage{support-caption}
\usepackage[subrefformat=parens,labelformat=parens]{subcaption}
\captionsetup{font=footnotesize,labelfont={bf},skip=6pt}

\usepackage{multirow}
\usepackage{xcolor}

\hyphenation{op-tical net-works semi-conduc-tor sup-press sub-carriers discon-tinuities}
\usepackage[nolist,printonlyused]{acronym}      

\usepackage{optidef} 
\usepackage{algpseudocode, algorithm}

\makeatletter
\algrenewcommand\ALG@beginalgorithmic{\footnotesize}
\makeatother
\algrenewcommand\algorithmiccomment[2][\normalsize]{{#1\hfill\(\triangleright\) #2}}
\usepackage{comment}
\usepackage{cite}

\usepackage{titlecaps}

\usepackage{setspace}

\usepackage{xfrac} 

\newcommand{\altfrac}[2]{\ifmmode\def\tmp{$}\else\def\tmp{}\fi\mbox{%
    {\raisebox{.24\ht\strutbox}{\tmp#1\tmp}}%
    \kern-2.2pt\scalebox{1.6}[1.5]{/}\kern-1.8pt%
    {\tmp#2\tmp}%
    }}

\usepackage{textcomp}

\usepackage[export]{adjustbox} 

\usepackage{hyperref}

\newtheorem{theorem}{Theorem} 
\newtheorem{corollary}{Corollary}
\newtheorem{lemma}{Lemma}

\newtheorem{definition}{Definition}
\newtheorem{prop}{Proposition}
\newtheorem{remark}{Remark}

\newtheorem{assumption}{Assumption}

\usepackage{color}

\usepackage{xpatch}
\iftrue 
\makeatletter
\xpatchcmd{\proof}{\@addpunct{.}}{\normalfont\,\@addpunct{:}}{}{}
\makeatother
\fi

\DeclareMathAlphabet\mathbfcal{OMS}{cmsy}{b}{n}
\begin{acronym}[LTE-Advanced]
  \acro{2G}{Second Generation}
  \acro{3G}{3$^\text{rd}$~Generation}
  \acro{3GPP}{3$^\text{rd}$~Generation Partnership Project}
  \acro{4G}{4$^\text{th}$~Generation}
  \acro{5G}{5$^\text{th}$~Generation}
  \acro{5GPPP}{5G Infrastructure Public Private Partnership}
  \acro{ACLR}{adjacent channel leakage ratio}
  \acro{ADMM}{alternating direction method of multipliers}
  \acro{ARQ}{automatic repeat request}
  \acro{BADMM}{Bregman \ac{ADMM}} 
  \acro{BER}{bit error rate}
  \acro{BLER}{block error rate}
  \acro{BPC}{binary power control}
  \acro{BPSK}{binary phase-shift keying}
  \acro{BRA}{balanced random allocation}
  \acro{BS}{base station}
  \acro{BW}{bandwidth}
  \acro{CADMM}{\ac{CSI}-aware \ac{ADMM}}
  \acro{CAP}{combinatorial allocation problem}
  \acro{CAPEX}{capital expenditure}
  \acro{CBF}{coordinated beamforming}
  \acro{CCDF}{complementary cumulative distribution function}
  \acro{CFR}{crest-factor reduction}
  \acro{CP}{cyclic prefix}
  \acro{CS}{coordinated scheduling}
  \acro{CPOCS}{\ac{CSI}-aware \ac{POCS}}
  \acro{CRUSADE}{convex suppression of unwanted emission and distortion elimination}
  \acro{CRUISE}{convex reduction of unwanted spectral emission}
  \acro{CRUISE-DAYS}{Convex RedUctIon of unwanted Spectral Emission with Davis And Yin Splitting}
  \acro{cruise}{convex reduction of unwanted spectral emission}
  \acro{CRUISE-SSP}{\ac{CRUISE}-\ac{SSP}}
  \acro{CSI}{channel state information}
  \acro{CSIT}{channel state information at the transmitter}
  \acro{D2D}{device-to-device}
  \acro{DCA}{dynamic channel allocation}
  \acro{DCI}{downlink control information}
  \acro{DE}{differential evolution}
  \acro{DFT}{discrete Fourier transform}
  \acro{DIST}{Distance}
  \acro{DL}{downlink}
  \acro{DMA}{double moving average}
  \acro{DMRS}{demodulation reference signal}
  \acro{D2DM}{D2D mode}
  \acro{DMS}{D2D mode selection}
  \acro{DROPE}{Douglas-Rachford-based \ac{OOBE} power reduction with \ac{EVM} constraint}
  \acro{DRS}{Douglas-Rachford splitting}
  \acro{DYS}{Davis-Yin splitting}
  \acro{DPC}{dirty paper coding}
  \acro{DR}{Douglas-Rachford}
  \acro{DRA}{dynamic resource assignment}
  \acro{DSA}{dynamic spectrum access}
  \acro{eMBB}{enhanced mobile broadband}
  \acro{eV2X}{enhanced vehicle-to-everything}
  \acro{EADMM}{\ac{EVM}-constrained \ac{ADMM}}
  \acro{EIRP}{equivalent isotropically radiated power}
  \acro{EPOCS}{\ac{EVM}-constrained \ac{POCS}}
  \acro{ETSI}{European telecommunications standards institute}
  \acro{EVM}{error vector magnitude}
  \acro{EMSP}{EVM-constrained \ac{MSP}}
  \acro{ENSP}{EVM-constrained \ac{NSP}}
  \acro{ESP}{EVM-constrained spectral precoding}
  \acro{ESSP}{EVM-constrained \ac{SSP}}
  \acro{FD}{fully digital}
  \acro{FDD}{frequency division duplexing}
  \acro{FFT}{(fast) discrete Fourier Transform}
  \acro{FL}{Federated learning}
  \acro{GFBS}{generalized forward-backward splitting}
  \acro{HARQ}{hybrid automatic repeat request}
  \acro{HB}{hybrid beamforming}
  \acro{KKT}{Karush-Kuhn-Tucker}
  \acro{KPI}{key performance indicator}
  \acro{ICF}{iterative clipping and filtering}
  \acro{IDFT}{Inverse \ac{DFT}}
  \acro{IFFT}{Inverse \ac{FFT}}
  \acro{i.i.d.}{independent and identically distributed}
  \acro{InC}{in-coverage}
  \acro{IPAPR}{instantaneous \ac{PAPR}}
  \acro{IoT}{internet of things}
  \acro{ITS}{intelligent transportation systems}
  \acro{L-BFGS}{limited-memory Broyden-Fletcher-Goldfarb-Shanno}
  \acro{LDPC}{low-density parity-check code}
  \acro{LMMSE}{linear minimum mean squared error}
  \acro{LOS}{line-of-sight}
  \acro{LS}{large-scale}
  \acro{LS-MIMO-OFDM}{\ac{LS}-\ac{MIMO}-\ac{OFDM}}
  \acro{LS-MSP}{large-scale MSP}
  \acro{LTE}{long-term evolution}
  \acro{MAC}{medium access control}
  \acro{MADMM}{modified consensus \ac{ADMM}}
  \acro{MCS}{modulation and coding scheme}
  \acro{METIS}{Mobile Enablers for the Twenty-Twenty Information Society}
  \acro{MIMO}{multiple-input multiple-output}
  \acro{MIMO-OFDM}{\ac{MIMO}-\ac{OFDM}}
  \acro{MISO}{multiple-input single-output}
  \acro{MRC}{maximum ratio combining}
  \acro{MS}{mobile station}
  \acro{MSE}{mean squared error}
  \acro{MSP}{mask-compliant spectral precoder}
  \acro{MMSE}{minimum mean squared error}
  \acro{MTC}{machine type communications}
  \acro{MU-MIMO}{multi-user \ac{MIMO}}
  \acro{MU-MIMO-OFDM}{\ac{MU-MIMO}-\ac{OFDM}}
  \acro{mMTC}{massive machine type communications}
  \acro{cMTC}{critical machine type communications}
  \acro{NR}{new radio}
  \acro{NSP}{notching spectral precoder}
  \acro{NSPS}{national security and public safety}
  \acro{NWC}{network coding}
  \acro{OFDM}{orthogonal frequency division multiplexing}
  \acro{OOB}{out-of-band}
  \acro{OOBE}{out-of-band emissions}
  \acro{OoC}{out-of-coverage}
  \acro{PA}{power amplifier}
  \acro{PAPR}{peak-to-average power ratio}
  \acro{PDCCH}{physical downlink control channel}
  \acro{PDSCH}{physical downlink shared channel}
  \acro{POCS}{projection on convex sets}
  \acro{PPG}{proximal-proximal-gradient}
  \acro{PRB}{physical resource block}
  \acro{PRBs}{physical resource block}
  \acro{PRG}{precoding resource block group}
  \acro{PSBCH}{physical sidelink broadcast channel}
  \acro{PSFCH}{physical sidelink feedback channel}
  \acro{PSCCH}{physical sidelink control channel}
  \acro{PSP}{proximal splitting based mask compliant spectral precoding}
  \acro{PSSCH}{physical sidelink shared channel}
  \acro{PHY}{physical}
  \acro{PLNC}{physical layer network coding}
  \acro{PRB}{physical resource block}
  \acro{PSD}{power spectral density}
  \acro{QAM}{quadrature amplitude modulation}
  \acro{QCQP}{quadratic constrained quadratic programming}
  \acro{QoS}{quality of service}
  \acro{QPSK}{quadrature-phase shift keying}
  \acro{PaC}{partial coverage}
  \acro{RAISES}{reallocation-based assignment for improved spectral efficiency and satisfaction}
  \acro{RAN}{radio access network}
  \acro{RA}{resource allocation}
  \acro{RAT}{radio access technology}
  \acro{RB}{resource block}
  \acro{RF}{radio frequency}
  \acro{RSRP}{reference signal received power}
  \acro{Rx}{receive}
  \acro{RX}{receive}
  \acro{RxEVM}{receive-EVM}
  \acro{Rx-TxEVM}{received \ac{TxEVM}}
  \acro{RZF}{regularized zero forcing}
  \acro{SC-FDM}{single carrier frequency division modulation}
  \acro{SEM}{spectral emission mask}
  \acro{SFBC}{space-frequency block coding}
  \acro{SCI}{sidelink control information}
  \acro{SIC}{successive interference cancellation}
  \acro{SINR}{signal-to-interference-plus-noise ratio}
  \acro{SISO}{single-input single-output}
  \acro{SL}{sidelink}
  \acro{SP}{Spectral precoding}
  \acro{SNR}{signal-to-noise ratio}
  \acro{SSP}{semi-analytical spectral precoding}
  \acro{STC}{space-time coding}
  \acro{SU}{single-user}
  \acro{SU-MIMO}{single-user \ac{MIMO}}
  \acro{SU-MIMO-OFDM}{\ac{SU-MIMO}-\ac{OFDM}}
  \acro{SVM}{support vector machine}
  \acro{TDD}{time division duplexing}
  \acro{TDL}{tapped delay line}
  \acro{TOP-ADMM}{three-operator \ac{ADMM}}
  \acro{Tx}{transmit}
  \acro{TX}{transmit}
  \acro{TxEVM}{transmit-EVM}
  \acro{UE}{user equipment}
  \acro{UL}{uplink}
  \acro{UP}{uniform power}
  \acro{URLLC}{ultra-reliable and low latency communications}
  \acro{VUE}{vehicular user equipment}
  \acro{V2N}{vehicular-to-network}
  \acro{V2X}{vehicle-to-everything}
  \acro{V2V}{vehicle-to-vehicle}
  \acro{V2P}{vehicle-to-pedestrian}
  \acro{WMMSE}{weighted minimum mean squared error}
  \acro{wMMSE}{weighted minimum mean squared error}
  \acro{ZF}{zero forcing}
  \acro{ZMCSCG}{zero mean circularly symmetric complex Gaussian}
\end{acronym}

\usepackage{etoolbox}

\usepackage{setspace}

\usepackage{centernot}

\usepackage{mfirstuc} 

\usepackage{enumerate}

\usepackage{footmisc}

\doublecolumntrue

\setlength{\belowdisplayskip}{8pt} \setlength{\belowdisplayshortskip}{8pt}
\setlength{\abovedisplayskip}{8pt} \setlength{\abovedisplayshortskip}{8pt}

\allowdisplaybreaks

\makeatletter
\newcommand\makebig[2]{%
  \@xp\newcommand\@xp*\csname#1\endcsname{\bBigg@{#2}}%
  \@xp\newcommand\@xp*\csname#1l\endcsname{\@xp\mathopen\csname#1\endcsname}%
  \@xp\newcommand\@xp*\csname#1r\endcsname{\@xp\mathclose\csname#1\endcsname}%
}
\makeatother

\makebig{biggg} {3.0}
\makebig{Biggg} {3.5}
\makebig{bigggg}{4.0}
\makebig{Bigggg}{4.5}
\makebig{biggggg}{5.0}
\makebig{Biggggg}{5.5}
\makebig{bigggggg}{6.0}
\makebig{Bigggggg}{6.5}

\setlength{\skip\footins}{8pt}

\def\FedAvg{\texttt{FedAvg}}
\def\FedProx{\texttt{FedProx}}

\def\FedADMM{\texttt{FedADMM}}
\def\FedTOPADMM{\texttt{FedTOP-ADMM}}
\def\FedTOPADMMI{\texttt{FedTOP-ADMM~I}}
\def\FedTOPADMMII{\texttt{FedTOP-ADMM~II}}

\def\bg{{\boldsymbol  g}}

\begin{document}

\title{ \color{black}{Federated Learning Using Three-Operator ADMM}}
\iftrue

\author{
\iftrue
Shashi~Kant,~\IEEEmembership{Student Member,~IEEE,}
Jos\'{e} Mairton B. da Silva Jr.,~\IEEEmembership{Member,~IEEE,}
Gabor~Fodor,~\IEEEmembership{Senior~Member,~IEEE,}
Bo~G\"oransson,~\IEEEmembership{Member,~IEEE,}
Mats~Bengtsson,~\IEEEmembership{Senior Member,~IEEE,}
and 
Carlo~Fischione,~\IEEEmembership{Senior Member,~IEEE}
\else
LORD
\vspace{15mm}

\fi

\vspace{-3mm}

\iftrue

\thanks{S.\ Kant, B.\ G\"oransson, and G.\ Fodor are with Ericsson AB and KTH Royal Institute of Technology, Stockholm, Sweden (e-mail: \{shashi.v.kant, bo.goransson, gabor.fodor\}@ericsson.com). The work of S.\ Kant was supported in part by the Swedish Foundation for Strategic Research under grant ID17-0114.}
\thanks{Jos\'{e} Mairton B. da Silva Jr. is with Princeton University, Princeton, NJ, USA, and also with the KTH Royal Institute of Technology, Stockholm, Sweden (e-mail: jmbdsj@kth.se). His work was supported by the European Union's Horizon Europe research and innovation programme through the Marie Sk\l{}odowska-Curie project FLASH under Grant 101067652.}

\thanks{M.\ Bengtsson and C.\ Fischione are with KTH Royal Institute of Technology, Stockholm, Sweden (e-mail: \{matben, carlofi\}@kth.se)}
%

\else

\fi
%
}
\fi

\maketitle
\copyrightnotice

\renewcommand\qedsymbol{$\blacksquare$}

\newcounter{subeqsave}
\newcommand{\savesubeqnumber}{\setcounter{subeqsave}{\value{equation}}%
\typeout{AAA\theequation.\theparentequation}}
\newcommand{\recallsubeqnumber}{%
  \setcounter{equation}{\value{subeqsave}}\stepcounter{equation}}

\iftrue
\renewcommand{\vec}[1]{\ensuremath{\boldsymbol{#1}}}
\newcommand{\mat}[1]{\ensuremath{\boldsymbol{#1}}}
\newcommand{\herm}{{\rm H}}
\newcommand{\tran}{{\rm T}}
\newcommand{\trans}{{\rm T}}
\newcommand{\trace}{{\rm Tr}}
\newcommand{\diag}{{\rm diag}}
\newcommand{\Diag}{{\rm Diag}}
\newcommand{\sign}{{\rm sign}}
\newcommand{\rank}{{\rm rank}}
\newcommand{\EVM}{{\rm EVM}}
\newcommand{\SNR}{{\rm SNR}}
\newcommand{\SINR}{{\rm SINR}}
\newcommand{\expect}{\mathbb{E}}
\newcommand{\Cm}{\mathbb{C}}
\newcommand{\Rm}{\mathbb{R}}
\newcommand{\CN}{\pazocal{CN}}
\newcommand{\be}{\begin{equation}}
\newcommand{\ee}{\end{equation}}
\newcommand{\pdf}{\pazocal{P}}
\newcommand{\prox}{\ensuremath{\boldsymbol{\rm{prox}}}}
\newcommand{\proj}{\ensuremath{\boldsymbol{\rm{proj}}}}
\newcommand{\dom}{\rm{dom}}
\newcommand{\epi}{\rm{epi}}
\newcommand{\ie}{\textit{i.e.}}
\newcommand{\eg}{\textit{e.g.}}
\newcommand{\cf}{\textit{cf.}}
\newcommand{\etc}{etc.} 
\newcommand{\avgTxEVM}{\ensuremath{\epsilon_{\rm avg}}}
\newcommand{\TxEVM}{\ensuremath{\bm{\epsilon}}}
\newcommand{\Ind}{{\delta}} 

\newcommand{\vecOp}{\rm{vec}}
\newcommand{\unvecOp}{\rm{unvec}}

\newcommand{\NR}{\ensuremath{N_{\rm R}}}
\newcommand{\NT}{\ensuremath{N_{\rm T}}}
\newcommand{\NL}{\ensuremath{N_{\rm L}}}
\newcommand{\NU}{\ensuremath{N_{\rm U}}}
\newcommand{\Ncp}{\ensuremath{N_{\rm CP}}}
\newcommand{\Nsc}{\ensuremath{N_{\rm SC}}}

\newcommand{\Kcp}{\ensuremath{K_{\rm CP}}}

\newcommand{\RxEVM}{\ensuremath{\bm{\varsigma}}}

\newcommand{\A}{\ensuremath{\boldsymbol{A}}}
\renewcommand{\a}{\ensuremath{\boldsymbol{a}}}
\newcommand{\B}{\ensuremath{\boldsymbol{B}}}
\renewcommand{\b}{\ensuremath{\boldsymbol{b}}}
\newcommand{\C}{\ensuremath{\boldsymbol{C}}}

\newcommand{\D}{\ensuremath{\boldsymbol{D}}}
\renewcommand{\d}{\ensuremath{\boldsymbol{d}}}
\newcommand{\E}{\ensuremath{\boldsymbol{E}}}
\newcommand{\F}{\ensuremath{\boldsymbol{F}}}

\newcommand{\K}{\ensuremath{\boldsymbol{K}}}

\newcommand{\M}{{\mathbf{M}}}
\newcommand{\I}{\ensuremath{\boldsymbol{I}}}

\newcommand{\R}{\ensuremath{\boldsymbol{R}}}

\renewcommand{\S}{\ensuremath{\boldsymbol{S}}}

\newcommand{\T}{\ensuremath{\boldsymbol{T}}}
\renewcommand{\t}{\ensuremath{\boldsymbol{t}}}

\newcommand{\U}{\ensuremath{\boldsymbol{U}}}

\newcommand{\V}{\ensuremath{\boldsymbol{V}}}

\newcommand{\X}{\ensuremath{\boldsymbol{X}}}

\newcommand{\Y}{\ensuremath{\boldsymbol{Y}}}

\newcommand{\0}{\ensuremath{\boldsymbol{0}}}
\fi

\iffalse
\colorlet{blueorange}{blue!15!orange!85!}
\definecolor{ao(english)}{rgb}{0.0, 0.5, 0.0}
\def\baselinestretch{1}
\def\sk#1{\textcolor{black}{#1}}
\def\skblue#1{\textcolor{blue}{#1}}
\def\skred#1{\textcolor{red}{#1}}
\def\skgreen#1{\textcolor{ao(english)}{#1}}
\def\skteal#1{\textcolor{teal}{#1}}
\def\skblack#1{\textcolor{black}{#1}}
\def\gf#1{\textcolor{black}{#1}}
\def\bg#1{\textcolor{green}{#1}}
\def\mb#1{\textcolor{magenta}{#1}}
\def\cf#1{\textcolor{teal}{#1}}
\else 
\colorlet{blueorange}{blue!15!orange!85!}
\def\baselinestretch{1}
\def\sk#1{\textcolor{black}{#1}}
\def\skblue#1{\textcolor{black}{#1}}
\def\skred#1{\textcolor{black}{#1}}
\def\skgreen#1{\textcolor{black}{#1}}
\def\skteal#1{\textcolor{black}{#1}}
\def\skblack#1{\textcolor{black}{#1}}
\def\bg#1{\textcolor{black}{#1}}
\def\mb#1{\textcolor{black}{#1}}
\def\cf#1{\textcolor{black}{#1}}
\fi

\newcommand{\mx}[1]{\mathbf{#1}}

\DeclareRobustCommand{\BigOh}{%
  \text{\usefont{OMS}{cmsy}{m}{n}O}%
}

\let\oldemptyset\emptyset
\let\emptyset\varnothing





\algblock{ParFor}{EndParFor}
\algnewcommand\algorithmicparfor{\textbf{parfor}}
\algnewcommand\algorithmicpardo{\textbf{do}}
\algnewcommand\algorithmicendparfor{\textbf{end\ parfor}}
\algrenewtext{ParFor}[1]{\algorithmicparfor\ #1\ \algorithmicpardo}
\algrenewtext{EndParFor}{\algorithmicendparfor}

\vspace{-10.5mm}
\begin{abstract}
Federated learning (FL) has emerged as an instance of distributed machine learning paradigm that avoids the transmission of data generated on the users' side. Although data are not transmitted, edge devices have to deal with limited communication bandwidths, data heterogeneity, and straggler effects due to the limited computational resources of users' devices. A prominent approach to overcome such difficulties is FedADMM, which is based on the classical two-operator consensus alternating direction method of multipliers (ADMM). The common assumption of FL algorithms, including FedADMM, is that they learn a global model using data only on the users' side and not on the edge server. However, in edge learning, the server is expected to be near the base station and \skblack{have} direct access to rich datasets. In this paper, we argue that leveraging the rich data on the edge server is much more beneficial than utilizing only user datasets. Specifically, we show that the mere application of FL with an additional virtual user node representing the data on the edge server is inefficient. We propose FedTOP-ADMM, which generalizes FedADMM and is based on a three-operator ADMM-type technique that exploits a smooth cost function on the edge server to learn a global model parallel to the edge devices. Our numerical experiments indicate that FedTOP-ADMM has substantial gain up to 33\% in communication efficiency to reach a desired test accuracy with respect to FedADMM, including a virtual user on the edge server.
\end{abstract}

\vspace{-6.5mm}
\section{Introduction}\label{sec_chD:intro}
\vspace{-1.5mm}

Centralized training of machine learning models becomes prohibitive for a large number of users, particularly if the users --- also known as clients or agents or workers --- have to share a large dataset with the central server. Furthermore, sharing a dataset with the central server may not be feasible for some users due to privacy concerns. Therefore, training algorithms using distributed and decentralized approaches are preferred. This has led to the concept of federated learning (\acs{FL}\acused{FL}), which results from the synergy between large-scale distributed optimization techniques and machine learning.  Consequently, FL has received considerable attention in the last few years since its introduction in~\cite{konevcny2016federated, McMahan__FedAvg__2017}. 

In the \ac{FL} framework --- illustrated in Figure~\ref{fig_chD:fl_arch_illustration} --- a distributed optimization problem, such as problem~\eqref{eqn_chD:two_operator_general_consensus_admm}, is essentially solved considering a central server and other devices by exchanging the 
parameters/weights of a considered model rather than sharing private data among themselves. The devices desire to achieve a learning model using data from all the other devices for the training. Instead of sending data from the devices to the edge server that computes such a model, these devices execute some local computations and periodically share only their parameters. Specifically, the FL technique intends to minimize a finite sum of (usually assumed) differentiable functions that depend on the data distributions on the various devices. The common solution to such a minimization involves an iterative procedure, wherein at each global communication iteration, the clients transmit their updated local parameters to a central edge server --- illustrated as Step~\ding{172} in Figure~\ref{fig_chD:fl_arch_illustration}. The edge server then updates the global model parameters --- shown as Step~\ding{173} in Figure~\ref{fig_chD:fl_arch_illustration}. However, the processing Steps~2a and~2b are specific to our proposed approach, and leverage the possibility of using a dataset on the edge server to improve the learning model. 
Thereafter, the server broadcasts the updated parameters or weights to all or the selected nodes---see Step~\ding{174} in Figure~\ref{fig_chD:fl_arch_illustration}. Lastly, the clients update their local parameters using their local (private) dataset and the received global model parameters from the central server---Step~\ding{175} in Figure~\ref{fig_chD:fl_arch_illustration}---to proceed to the next iteration. 

\begin{figure}[tp!]
    \centering
    \scalebox{0.99}{\includegraphics[trim=5mm 0mm 15mm 5mm,clip]{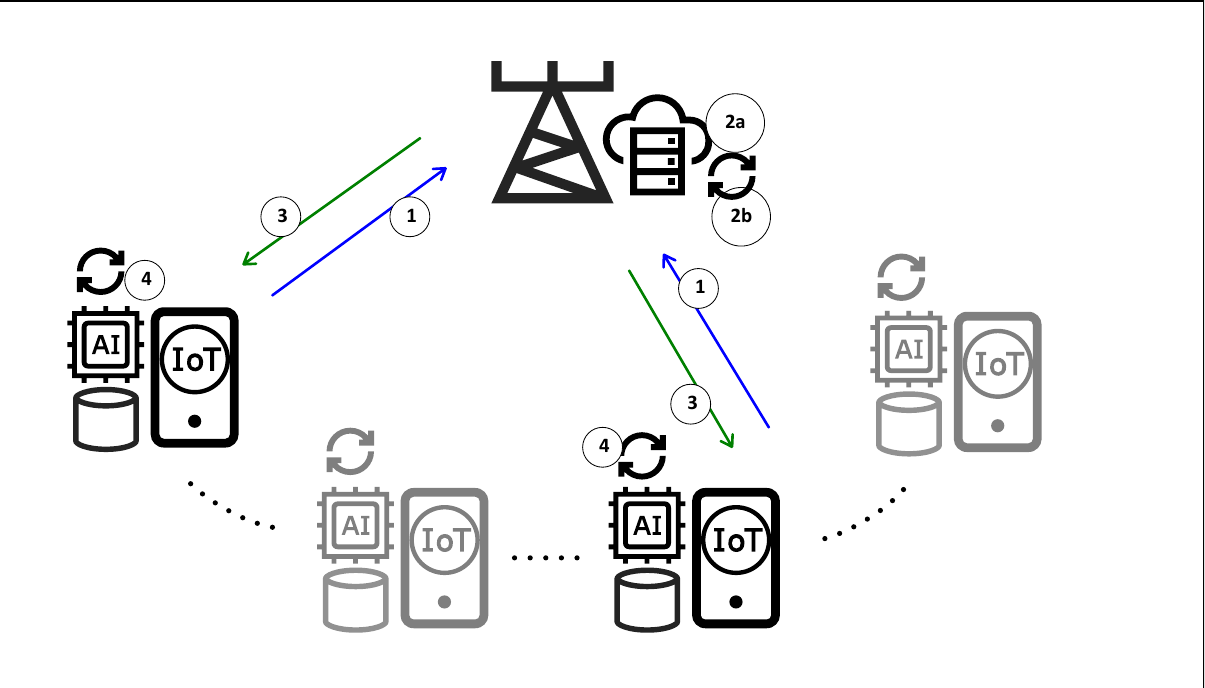}}
    \caption{ Illustration of {FL} architecture, with the new scenario investigated in this paper of a dataset available on the edge server. }
    \label{fig_chD:fl_arch_illustration}
\end{figure}

Many state-of-the-art \ac{FL} techniques, such as {\FedAvg}~\cite{McMahan__FedAvg__2017} and {\FedProx}~\cite{Tian_Li_etal__FedProx__2020} can be seen as an instance of one-operator proximal splitting techniques~\cite{Saber_etal__FL_from_splitting_algo__2021}, \ie, described in~\eqref{eqn_chD:two_operator_general_consensus_admm} with function $g\!=\!0$, which considers learning on the users' side mainly. 
{Differently from these techniques, we propose to go beyond and use a three-operator{\footnote{\skblack{Notice that ``operator" terminology is used in the dual of ADMM-type, such as Douglas-Rachford~\cite{Bauschke:2011}. Therefore, we borrow this terminology in this work.}}} proximal splitting technique, such as our recently proposed three-operator alternating direction method of multipliers (\acused{TOP-ADMM}\ac{TOP-ADMM}) method~\cite{Kant_journal_msp_top_admm:2020}, to leverage the possibility of learning on the edge server together with the traditional learning on the edge devices.}

\subsection{Motivation for Learning Model on the Edge Server}
In future (6G) cellular networks, machine learning services will be used both to design the networks and as a service provided by the networks~\cite{Hellstrom2022}. 
We expect to leverage the raw data generated not only on the users' side, but also on the edge server's side collocated at the base station or/and radio access networks. More concretely, we foresee scenarios, in which the raw data is available at the base station \skblack{because data is generated continuously at the physical layer and radio in 5G NR and beyond. Thus, data at the server side} together with data on the users' side, enable many distributed signal processing applications such as joint communication and sensing, or edge learning with the Internet of Things (including edge devices). 
The optimization problem for \ac{FL} considering learning on the edge server and on the edge devices is naturally described by a sum of three functions or three operators---see~\eqref{eqn_chD:three_operator__FL_optimization_problem}. A three-operator problem for \ac{FL} can be solved via traditional block-wise two-operator splitting techniques or treating the learning model on the edge server as a virtual user, \eg, in an existing two-operator-based{\footnote{\skblack{Although {\FedADMM} is built on two-operator \acused{ADMM}\ac{ADMM}, it actually solves one-operator problem---see discussion in Section~\ref{sec_chD:cmp_admm_fedadmm_topadmm_fedtopadmm} and Table~\ref{table_chD:distributed_opt_formulations}.}}} {\FedADMM}~\cite{zhou_and_li_fedadmm2022}. However, we show numerically, see Figure~\ref{fig_chD:cmp_perf_fedtopadmm1_and2__fedadmm_and_fedadmmvc}, that such an approach is not necessarily more communication efficient than tackling such a problem fundamentally from the three-operator proximal splitting perspective.

To the best of our knowledge, \ac{FL} using three-operator techniques have not been addressed in the literature, which tackles the learning on both the server and users independently. 
{This proposal is highly novel and benefits from the richly available datasets on the edge server and edge devices from current fifth generation (5G) and future 6G cellular networks.
Hence, these are key motivating reasons to consider a three-operator problem being tackled in an \ac{FL} fashion.}

\subsection{Contributions}

We present a new communication-efficient and computationally efficient \ac{FL} framework, referred to as {\FedTOPADMM}, using a three-operator \ac{ADMM} method. 
Specifically, our key contributions are:
\begin{itemize}	
    \item We demonstrate the viability of a practical edge learning scenario in which private datasets are available on the devices, and another private dataset is available on the edge server/base station.   We model this edge learning scenario using a novel three-operator splitting method that benefits from the private datasets on both edge server and edge devices.
    \item We propose the {\FedTOPADMM} method by applying and extending our recently proposed \ac{TOP-ADMM} method~\cite{Kant_journal_msp_top_admm:2020} to tackle a composite optimization problem~\eqref{eqn_chD:three_operator__FL_optimization_problem} comprising a sum of three functions (or {three operators}). More specifically, we propose two variants of {\FedTOPADMM}, termed \linebreak{\FedTOPADMMI} and {\FedTOPADMMII}, where {\FedTOPADMMII} does not learn on the server side when aggregating the parameters from the users and the server itself to generate a common model parameter. However, {\FedTOPADMMI} learns a model before aggregation of the parameters in addition to learning in parallel with the users. Thus, {\FedTOPADMMI} has a slightly better performance compared to {\FedTOPADMMII}.
    \item We extend the results of \cite{Kant_journal_msp_top_admm:2020} by establishing a new theoretical convergence proof of \ac{TOP-ADMM} under general convex settings. \skblack{Additionally, extending the convergence results of \ac{TOP-ADMM}, we prove the optimality conditions of {\FedTOPADMM}.}
    \item {\FedTOPADMM} capitalizes on the possible data available on the edge server collocated at the base station in addition to the data available on the users' side. Consequently, our numerical experiments show noticeable communication efficiency gain over the existing state-of-the-art \ac{FL} schemes using real-world data.
	\item Our proposed {\FedTOPADMM} is built on the existing framework of communication- and computationally efficient {\FedADMM}~\cite{zhou_and_li_fedadmm2022}. Therefore, {\FedTOPADMM} inherits all the merits of {\FedADMM}. Furthermore, we show that our proposed {\FedTOPADMM} is up to 33\% more efficient in terms of communication rounds {to achieve the same target test accuracy} as the one achieved by {\FedADMM}, where the extra dataset on the edge server is modelled as an additional virtual client collocated at the base station. 
\end{itemize}

\section{State of the art}
In this section, we briefly describe the related works on \ac{FL}, including two or three operators in proximal splitting techniques useful for \ac{FL}.

\subsection{Related Works on Federated Learning}

Starting from the seminal work~\cite{konevcny2016federated}, several extensions and applications of \ac{FL} have been proposed, \eg, \FedAvg~\cite{McMahan__FedAvg__2017}, \FedProx~\cite{Tian_Li_etal__FedProx__2020}, {\FedADMM}~\cite{zhou_and_li_fedadmm2022}, and other variations,  \cite{jordan2018communication,tandon2017gradient,chen2018lag}. A relatively recent overview can be found in~\cite{li2020federated}.

\ac{FL} may suffer from two drawbacks: privacy leakage and communication inefficiency. Although \ac{FL} keeps the data local on the clients and thus inherently has privacy properties, it does not guarantee complete privacy because significant information may still leak through observing the gradients~\cite{bagdasaryan2019differential}. Moreover, the \ac{FL} algorithms may have an unsustainable communication cost: the local parameters must be communicated via uplink from the devices to the edge server, and via downlink from the server to the local devices. The local parameters can be vectors of huge sizes whose frequent transmissions and reception may deplete the battery of the devices and consume precious communication resources. 

A number of works have addressed the problem of communication efficiency in \ac{FL}~\cite{stich2018sparsified,  di2018efficient,wangni2018gradient, kairouz2021advances, chen2018lag, sun2019communication, yu2019parallel}. We can roughly divide them into two classes: 1) data compression in terms of quantization and sparsification of the local parameters before every transmission~\cite{stich2018sparsified, wangni2018gradient, di2018efficient, kairouz2021advances}, and 2) reduction of the communication iterations~\cite{kairouz2021advances, chen2018lag, sun2019communication, yu2019parallel}. The works in the second class attempt to reduce some communication rounds between the devices and the edge server, as for example proposed in lazily aggregated gradient
(LAG) approach~\cite{chen2018lag,sun2019communication}. In LAG, each device transmits its local parameter only if the variation from the last transmission is large enough. \sk{However, both classes of approaches assume an underlying iterative algorithm whose iterations are sometimes eliminated or whose carried information (in bits) is processed to consume fewer communication resources. The process of making the underlying algorithm more communication efficient, regardless of the improvements that can be done on top of it, has been less investigated.} The state of the art can be found in {\FedADMM}~\cite{zhou_and_li_fedadmm2022}, which \sk{not only allows the averaging of users' parameters periodically to reduce the communication rounds but also} aims to improve the communication efficiency by using one-operator proximal splitting techniques in \ac{FL} methods. Our paper focuses on this line of research and extends it to three-operator proximal splitting while proposing learning jointly on the edge server and edge devices side. 

\subsection{Related Works on Operator/Proximal Splitting} \label{sec_chD:related_works_on_proximal_splitting}

In recent decades, a plethora of proximal or operator splitting techniques, see, \eg,~\cite{Combettes2011, Boyd2011, Parikh2013, Komodakis_and_Pesquet:2015, Glowinski2016, Beck2017, Davis2017, Yan_PD3O:2018, Ryu_Yin_ls_book_2022_draft, Kant_journal_msp_top_admm:2020}, have been proposed in the literature. {Although the convergence of many proximal/operator splitting algorithms is proven only for convex settings, these techniques can still be employed to solve many nonconvex problems prevalent in machine learning problems (without convergence or performance guarantees).} In the past decade, \ac{ADMM}-like methods, see, \eg,~\cite{Boyd2011, Ryu_Yin_ls_book_2022_draft}, have enjoyed a renaissance because of their wide applicability in large-scale distributed machine learning problems by breaking down a large-scale problem into easy-to-solve smaller problems. However, these proximal splitting techniques are typically first-order methods, which can be very slow to converge to solutions having high accuracy. Nonetheless, modest accuracy can be sufficient for many practical \ac{FL} applications. 

Operator splitting for two operators, or loosely speaking, optimization problems, in which the objective function is given by the sum of two functions, have recently been employed for \ac{FL}~\cite{FedSplit__2020, FedDR__2021, Saber_etal__FL_from_splitting_algo__2021, zhou_and_li_fedadmm2022, zhou_iceadmm_draft_2022}. However, some \ac{FL} optimization problems --- see problem~\eqref{eqn_chD:three_operator__FL_optimization_problem} ---
can be cast as a composite sum of three functions comprising smooth and non-smooth functions. Unfortunately, operator splitting with more than two composite terms in the objective function are either not straightforward or converges slowly~\cite{Davis2017, Yan_PD3O:2018}. Recently, some authors have extended the primal-type or dual-type, including \ac{ADMM}-type algorithms, as well as the primal-dual classes of splitting algorithms from two operators to three operators \cite{Wang_Banerjee__Bregman_ADMM__2014, Davis2017, Yan_PD3O:2018, Banert2020, Kant_journal_msp_top_admm:2020, Combettes_data_science__2021, Ryu_Yin_ls_book_2022_draft, Condat__prox_splitting:2021, Hong_and_Luo__2017}.

\subsection{Notation and Paper Organization}
Let the set of complex  and real numbers be denoted by $\mathbb{C}$ and $\mathbb{R}$, respectively. $\Re\{x\}$ denotes the real part of a complex number $x \!\in\!\Cm$. 
The $i$-th element of a vector $\vec{a} \! \in \! \mathbb{C}^{m \times 1}$ and $j$-th column vector of a matrix $\mat{A} \! \in\! \mathbb{C}^{m \times n}$ are denoted by ${a}[i] \!\coloneqq \left(\vec{a}\right)_i \!\! \in \! \mathbb{C}$ 
and  $\mat{A}\left[:, j\right] \! \in \! \mathbb{C}^{m \times 1}$, respectively. 
We form a matrix by stacking the set of vectors $\left\{ \vec{a}[n]\! \in \! \mathbb{C}^{M \times 1} \right\}_{n=1}^N$ and $\left\{ \vec{b}[m] \! \in \! \mathbb{C}^{1 \times N} \right\}_{m=1}^M$ column-wise and row-wise as {$\mat{A} \!= \! \left[ \vec{a}[1],\ldots,\vec{a}[N] \right] \! \in \! \mathbb{C}^{M \times N}$ and $\mat{B} \! = \! \left[ \vec{b}[1];\ldots;\vec{b}[M] \right] \! \in \! \mathbb{C}^{M \times N} $}, respectively. 
The transpose and conjugate transpose of a vector or matrix are denoted by $\left(\cdot\right)^{\rm T}$ and $\left(\cdot\right)^{\herm}$, respectively. The complex conjugate is represented by $\left(\cdot\right)^*$. The $K  \times  K$ identity matrix is written as $\vec{I}_K$. 
An $i$-th iterative update reads $(\cdot)^{(i)}$. 


The remainder of the paper is organized as follows. In the next section, we introduce the \ac{TOP-ADMM} technique. In Section~\ref{sec_chD:fl_topadmm_algorithm}, we establish our proposed {\FedTOPADMM} algorithm. In Section~\ref{sec_chD:simulation_results}, we present the numerical results, and in Section~\ref{sec_chD:conclusion_future_work} we conclude with a summary and future work. Appendix~\ref{appendix:use_lemmas_def} contains some useful definitions and lemmas. Appendix~\ref{appendix:convergence_analysis_topadmm} presents the completely novel convergence proof of our recently proposed \ac{TOP-ADMM}~\cite{Kant_journal_msp_top_admm:2020} algorithm.

\section{Introduction to the TOP-ADMM Algorithm} \label{sec_chD:topadmm_algorithm}
In this section, we firstly introduce the classical two-operator consensus \ac{ADMM}. Subsequently, we present our recently proposed \ac{TOP-ADMM} algorithm~\cite{Kant_journal_msp_top_admm:2020, Kant_etal__CFR_TOPADMM_2022}. 

\subsection{Classical Consensus ADMM}
The two-operator consensus \ac{ADMM} \cite{Boyd2011, Parikh2013} is a popular {method in the optimization and machine learning communities} to solve problems of the form 
\begin{align} 
\begin{aligned} \label{eqn_chD:two_operator_general_consensus_admm}
&\underset{ \left\{{\vec{x}}_m \in \Cm^n \right\}\!,  {\vec{z}} \in \Cm^n}{\text{minimize}}  
 \quad \sum_{m=1}^M \!f_m\!\left( {{\vec{x}}_m} \right) \! + \! g\!\left( {\vec{z}} \right)  \\
&\text{subject to}  \qquad 
   {{\vec{x}}_m} \!-\! {\vec{z}} \!=\! {\vec{0}}, \quad \forall m=1,\ldots,M,
\end{aligned}
\end{align}
where $\left\{f_m \right\}$ and $g(\cdot)$ are closed, convex, and proper functions. The classical \ac{ADMM} algorithm that solves problem \eqref{eqn_chD:two_operator_general_consensus_admm} can be summarized, following \cite[Chapter~7]{Boyd2011}, as follows
\iffalse
\begin{align*}
    {\vec{x}}_m^{\left(i+1\right)} \! 
    &\coloneqq \! \arg\min_{{\vec{x}}_m}  f_m\!\left(\! {\vec{x}}_m \!\right) \! + \! 2\Re\!\left\{{\left(\vec{y}_m^{\left(i\right)}\right)}^{\herm} \! \left( {{\vec{x}}_m} \! - \! {\vec{z}}^{\left(i\right)} \right) \right\} \! + \! \rho \! \left\| \! {{\vec{x}}_m} \! - \! \vec{z}^{\left(i\right)} \! \right\|_2^2 \ \forall m=1,\ldots,M,  \\
    &\equiv \arg\min_{{\vec{x}}_m}  f_m\left( {\vec{x}}_m \right) \! + \! \rho \left\| {{\vec{x}}_m} \! - \! {\vec{z}^{\left(i\right)}} + \frac{{\vec{y}}_m^{\left(i\right)}}{\rho} \right\|_2^2, \ \forall m,  \\ 
    {\vec{z}}^{\left(i+1\right)} \! 
    &\coloneqq \! \arg\min_{{\vec{z}}}  g\!\left(\! {\vec{z}} \!\right) \! + \!  \sum_{m=1}^M \! \left(\! 2\Re\!\left\{{\left(\vec{y}_m^{\left(i\right)}\right)}^{\herm} \left( {{\vec{x}}_m^{\left(i+1\right)}} \! - \! {\vec{z}} \right)\right\} \! + \! \rho \left\| \! {{\vec{x}}_m^{\left(i+1\right)}} \! - \! {\vec{z}} \! \right\|_2^2 \right) \\
    &\equiv \! \arg\min_{{{\vec{z}}}}  g\!\left( {\vec{z}} \right)  +   \sum_{m=1}^M  \rho \left\|  {{\vec{x}}_m^{\left(i+1\right)}} \! - \! {\vec{z}} \! + \!  \frac{{\vec{y}}_m^{\left(i\right)}}{\rho} \right\|_2^2,  \\
    {\vec{y}}_m^{\left(i+1\right)} &\coloneqq {\vec{y}}_m^{\left(i\right)} + \rho \left( {{\vec{x}}_m^{\left(i+1\right)}} - {\vec{z}^{\left(i+1\right)}} \right), \quad \forall m=1,\ldots,M.
\end{align*}
\else
\begin{subequations} \label{eqn_chD:classical_consensus_admm_algorithm}
\begin{alignat}{3} \label{eqn_chD:update_x_m__classical_consensus_admm_algorithm}
    {\vec{x}}_m^{\left(i+1\right)} \! 
    &\coloneqq \!  \arg\min_{{\vec{x}}_m}  f_m\!\left( {\vec{x}}_m \right)  +  \rho \left\| {{\vec{x}}_m} \! - \! {\vec{z}^{\left(i\right)}} \!+\! \frac{{\vec{y}}_m^{\left(i\right)}}{\rho} \right\|_2^2, \ \forall m, \\ 
    \label{eqn_chD:update_z__classical_consensus_admm_algorithm}
    {\vec{z}}^{\left(i+1\right)} \! 
    &\coloneqq \! \arg\min_{{{\vec{z}}}}  g\!\left( {\vec{z}} \right)  +   \sum_{m=1}^M  \rho \left\|  {{\vec{x}}_m^{\left(i+1\right)}} \! - \! {\vec{z}} \! + \!  \frac{{\vec{y}}_m^{\left(i\right)}}{\rho} \right\|_2^2,  \\
    \label{eqn_chD:dual_update_y_m__classical_consensus_admm_algorithm}
    {\vec{y}}_m^{\left(i+1\right)} &\coloneqq {\vec{y}}_m^{\left(i\right)} \!+\! \rho \left( {{\vec{x}}_m^{\left(i+1\right)}} \!-\! {\vec{z}^{\left(i+1\right)}} \right), \  \forall m=1,\ldots,M,
\end{alignat}
\end{subequations}
where ${\vec{y}}_m^{\left(i\right)} \!\in\!\Cm^n$ is the Lagrange multiplier and $\rho \! \in \! \Rm_{>0}$ is a penalty parameter. 

If the functions $f_m$ and $g$ contain a computationally inconvenient quadratic term, then one could employ proximal linearized \ac{ADMM}; see, \eg,~\cite{Ryu_Yin_ls_book_2022_draft} and references therein to cancel out such a quadratic term. More specifically, in linearized \ac{ADMM}, one adds a so-called proximal term $\left\| \vec{x}_m \!-\! \vec{x}_m^{\left(i\right)}\right\|_{\mat{Q}_{x}}$ and $\left\| \vec{z} \!-\! \vec{z}^{\left(i\right)}\right\|_{\mat{Q}_{z}}$ with positive definite matrices $\mat{Q}_{x}$ and $\mat{Q}_{z}$  to~\eqref{eqn_chD:update_x_m__classical_consensus_admm_algorithm} and~\eqref{eqn_chD:update_z__classical_consensus_admm_algorithm}, respectively.

\fi

\subsection{Consensus TOP-ADMM}

{
The recently proposed \skblack{three-operator-based\footnote{\skblack{As briefly mentioned in Section~\ref{sec_chD:related_works_on_proximal_splitting}, operator splitting with more than two composite terms in the objective has been an open research problem without resorting to problem reformulations or product space reformulations~\cite{Pierra1984, Stathopoulos2016}. The reason is that it has been found empirically that such product space formulations and problem reformulations may not be straightforward and generally may be slow to converge or not be feasible for some problems~\cite{Davis2017, Yan_PD3O:2018, Kant_etal__intro_to_TOPADMM_2022}.}}} \ac{TOP-ADMM}~\cite{Kant_journal_msp_top_admm:2020} is one of the generalized algorithms for the classical consensus \ac{ADMM}. 
The \ac{TOP-ADMM} can be employed to solve many centralized and distributed optimization problems in signal processing\skblack{, which} are difficult to solve \skblack{by} traditional ADMM methods.
Specifically, the \ac{TOP-ADMM} algorithm has been used to solve spectrum shaping via spectral precoding and peak-to-average power ratio reduction in multiple carriers-based wireless communication systems, such as 5G cellular networks~\cite{Kant_journal_msp_top_admm:2020, Kant_etal__CFR_TOPADMM_2022}, which were difficult to solve with the classical two-operator \ac{ADMM} methods. \skblack{More precisely, we can refer to~\cite[Figure~4 and Figure~5]{Kant_journal_msp_top_admm:2020} as a concrete and realistic example within wireless communications that shows numerically that two-operator ADMM-like algorithm solving three-operator problem exhibit slow convergence compared to TOP-ADMM.}
}

{
First, let us understand the shortcomings of classical (linearized) \ac{ADMM} and subsequently present \ac{TOP-ADMM} to deal with the demerits of \ac{ADMM}. To this end, 
let us add {a} convex \mbox{$L$-smooth} ($L\!>\!0$) function $h$ to the cost function of}~\eqref{eqn_chD:two_operator_general_consensus_admm} with some scaling $\beta \!\in\!\Rm_{>0}$ such that the problem formulation~becomes 
\iffalse
\begin{subequations}
\begin{align}
\label{eqn_chD:general_consensus_top_admm__generic_form__nonconsensus}
&\sum_{m=1}^M f_m \left( {{\vec{x}}_m} \right) \! + \! g\left( {{\vec{z}}} \right) \! + \! \beta h\left( {{\vec{z}}} \right) \\
&\sum_{m=1}^M \left[ f_m \left( {{\vec{x}}_m} \right) \! + \! \frac{1}{M} \! h\!\left( {{\vec{x}}_m} \right)\right]  \! + \! g\left( {{\vec{z}}} \right)  
\end{align}
\end{subequations}
\else
\begin{align} 
\begin{aligned} \label{eqn_chD:general_consensus_top_admm__generic_form}
&\underset{ \left\{{\vec{x}}_m \in \Cm^n \right\}\!,  {\vec{z}} \in \Cm^n}{\text{minimize}}  
 \quad \sum_{m=1}^M \!f_m\!\left( {{\vec{x}}_m} \right) \! + \! g\!\left( {\vec{z}} \right) \! + \! \beta  h\left( {{\vec{z}}} \right)  \\
&\text{subject to}  \qquad 
   {{\vec{x}}_m} \!-\! {\vec{z}} \!=\! {\vec{0}}, \quad \forall m=1,\ldots,M.
\end{aligned}
\end{align}
\fi


There are at least two possibilities to tackle problem~\eqref{eqn_chD:general_consensus_top_admm__generic_form} using the classic two-operator \ac{ADMM}: 1) product space or problem reformulation \cite{Pierra1984, Stathopoulos2016}, and 2) consider $\overline{g}\left( {{\vec{z}}} \right) \!\coloneqq \! g\left( {{\vec{z}}} \right) \! + \beta h\left( {{\vec{z}}} \right)$ or $\overline{f}_m\left( {{\vec{x}_m}} \right) \!\coloneqq\! f_m\left( {{\vec{x}_m}} \right) \! + ({\beta}/{M}) h\left( {{\vec{x}_m}} \right)$. The first approach is either not straightforward or {the} resulting algorithm {converges slowly} \cite{Davis2017, Yan_PD3O:2018}. The second approach yields a subproblem, \ie, ${\vec{z}}^{\left( i+1 \right)}$ or ${\vec{x}}_m^{\left(i+1\right)}$ update in \eqref{eqn_chD:classical_consensus_admm_algorithm}, which may not have {a computationally efficient} solution in general. Furthermore, {the classical} two-operator \ac{ADMM} does not 
{exploit the smoothness of $h$.} Recall that if $h$ is a quadratic function, then clubbing it with either update~\eqref{eqn_chD:update_x_m__classical_consensus_admm_algorithm} or~\eqref{eqn_chD:update_z__classical_consensus_admm_algorithm} and employing linearized \ac{ADMM} can solve the respective subproblem. However, this approach does not necessarily yield a better performance than the \ac{TOP-ADMM}. 

In essence, the \ac{TOP-ADMM} algorithm can also be classified as a divide-and-conquer method, which decomposes a large optimization problem --- difficult to solve in a composite form~\eqref{eqn_chD:general_consensus_top_admm__generic_form} --- into smaller subproblems that are easy to solve. 

{
In the following, we lay out the general definition of our \ac{TOP-ADMM} method, which was introduced in~\cite{Kant_journal_msp_top_admm:2020}. For completeness, we present the convergence proof of \ac{TOP-ADMM}, {which is novel and was not present} in~\cite{Kant_journal_msp_top_admm:2020}. 
}
\begin{theorem}[TOP-ADMM] \label{thm_ChD:definition_of_topadmm_algorith__feasible_problem}
{Consider a problem given in~\eqref{eqn_chD:general_consensus_top_admm__generic_form} with at least one solution and a suitable step-size $\tau \! \in \! \Rm_{\geq 0}$.  Assume subproblems~\eqref{eqn_chD:update_xm__step1_parallel__general_top_admm__prox__for_convergence} and \eqref{eqn_chD:update_z__step2__general_top_admm__prox__for_convergence} have solutions, and consider a relaxation/penalty parameter $\rho \! \in \! \Rm_{> 0}$ and some arbitrary initial values $ \left(\left\{\vec{x}_m^{(0)}\right\}, {\vec{z}^{(0)}}, \left\{\vec{y}_m^{(0)}\right\}\right)$. Then, } the generated sequences $ \left(\left\{\vec{x}_m^{(i)}\right\}, {\vec{z}^{(i)}}, \left\{\vec{y}_m^{(i)}\right\}\right)$ by the following iterative scheme
\begin{subequations} \label{eqn_chD:generalized_top_admm_algorithm_iterates__for_convergence}
\begin{align}
    \label{eqn_chD:update_xm__step1_parallel__general_top_admm__prox__for_convergence}
    {\vec{x}}_m^{\left(i+1\right)} \! 
    =& \arg\min_{{\vec{x}}_m}  f_m\!\left( {\vec{x}}_m \right) \!+ \! \rho \! \left\|\!  {{\vec{x}}_m} \! - \! {\vec{z}^{\left(i\right)}} \! +\! \frac{{\vec{y}}_m^{\left(i\right)}}{\rho} \!\right\|_2^2, \forall m\!, \\ 
    \label{eqn_chD:update_z__step2__general_top_admm__prox__for_convergence}
    {\vec{z}}^{\left(i+1\right)} \! 
    =& \arg\min_{{{\vec{z}}}}  g\!\left( {\vec{z}} \right)  \nonumber \\
    &+\!   \sum_{m=1}^M  \rho \left\|  {{\vec{x}}_m^{\left(i+1\right)}} \! - \! {\vec{z}} \! {- \tau \nabla h \! \left( \! {\vec{z}}^{\left( i \right)} \right)} \! + \!  \frac{{\vec{y}}_m^{\left(i\right)}}{\rho} \right\|_2^2, \\
    \label{eqn_chD:update_dual_ym__step3_parallel__general_top_admm}
    {\vec{y}}_m^{\left(i+1\right)} =& {\vec{y}}_m^{\left(i\right)} + \rho \left( {{\vec{x}}_m^{\left(i+1\right)}} - {\vec{z}^{\left(i+1\right)}} \right), \ \forall m=1,\ldots,M,
\end{align}
\end{subequations}
at any limit point, converges to a \ac{KKT} stationary point of~\eqref{eqn_chD:general_consensus_top_admm__generic_form}.
\begin{proof}
See Appendix~\ref{appendix:convergence_analysis_topadmm}.
\end{proof}
\end{theorem}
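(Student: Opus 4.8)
The plan is to run the standard ADMM-style variational analysis, augmented to absorb the forward (gradient) step on the smooth term $h$. First I would record the stationarity (KKT) system of problem~\eqref{eqn_chD:general_consensus_top_admm__generic_form}: a triple $(\{x_m^\star\},z^\star,\{y_m^\star\})$ is a KKT point iff $-y_m^\star\in\partial f_m(x_m^\star)$ for every $m$, $\sum_m y_m^\star \in \partial g(z^\star)+\beta\nabla h(z^\star)$, and $x_m^\star=z^\star$ for every $m$; such a point exists by the solvability assumption on~\eqref{eqn_chD:general_consensus_top_admm__generic_form} together with the affine structure of the consensus constraint. Next I would write the first-order optimality conditions of the two proximal subproblems~\eqref{eqn_chD:update_xm__step1_parallel__general_top_admm__prox__for_convergence} and~\eqref{eqn_chD:update_z__step2__general_top_admm__prox__for_convergence}, and eliminate the explicit primal gaps using the dual recursion~\eqref{eqn_chD:update_dual_ym__step3_parallel__general_top_admm}, i.e.\ $\rho(x_m^{(i+1)}-z^{(i+1)})=y_m^{(i+1)}-y_m^{(i)}$. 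This rewriting yields, for the $x_m$-block, an inclusion $-y_m^{(i+1)}-\rho(z^{(i+1)}-z^{(i)})\in\partial f_m(x_m^{(i+1)})$, and for the $z$-block an inclusion $\sum_m y_m^{(i+1)}-\beta\nabla h(z^{(i)})\in\partial g(z^{(i+1)})$, once the step-size $\tau$ is calibrated so that the coefficient multiplying $\nabla h$ equals $\beta$ (this is precisely what ``suitable step-size'' encodes, $\tau=\beta/(M\rho)$ up to the chosen scaling of the penalty).

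The heart of the argument is a Lyapunov/Fej\'{e}r-type descent inequality. I would introduce the energy $V^{(i)}=\tfrac{1}{\rho}\sum_m\|y_m^{(i)}-y_m^\star\|_2^2+\rho\,M\|z^{(i)}-z^\star\|_2^2$ (plus, if needed, a term controlling $\|z^{(i)}-z^{(i-1)}\|_2^2$ to handle the lag in $\nabla h(z^{(i)})$), and show $V^{(i+1)}\le V^{(i)}-c\big(\sum_m\|x_m^{(i+1)}-z^{(i+1)}\|_2^2+\|z^{(i+1)}-z^{(i)}\|_2^2\big)$ for some $c>0$. The two convex subdifferential inclusions supply monotonicity inequalities (pairing the iterate subgradients against $x_m^{(i+1)}-x_m^\star$ and $z^{(i+1)}-z^\star$), while the smooth block is handled by the cocoercivity of $\nabla h$ (Baillon--Haddad: $\langle\nabla h(u)-\nabla h(v),u-v\rangle\ge L^{-1}\|\nabla h(u)-\nabla h(v)\|_2^2$) together with the descent lemma; these are exactly the useful lemmas of Appendix~\ref{appendix:use_lemmas_def}. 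The cross terms created because $\nabla h$ is evaluated at the stale iterate $z^{(i)}$ rather than $z^{(i+1)}$ are the delicate part, and dominating them is what forces the smallness condition on $\tau$ (equivalently a lower bound on $\rho$) relative to $L$.

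Finally, telescoping the descent inequality shows $V^{(i)}$ is nonincreasing and bounded below, so the iterates and multipliers are bounded and the summed residuals are finite; hence $\|x_m^{(i+1)}-z^{(i+1)}\|_2\to0$, $\|z^{(i+1)}-z^{(i)}\|_2\to0$, and by~\eqref{eqn_chD:update_dual_ym__step3_parallel__general_top_admm} also $\|y_m^{(i+1)}-y_m^{(i)}\|_2\to0$. Along any convergent subsequence with limit $(\{x_m^\infty\},z^\infty,\{y_m^\infty\})$---whose existence is what ``at any limit point'' presumes---the vanishing residuals give primal feasibility $x_m^\infty=z^\infty$, and passing to the limit in the two inclusions, using the outer semicontinuity (graph-closedness) of the maximal monotone operators $\partial f_m,\partial g$ and the continuity of $\nabla h$, recovers $-y_m^\infty\in\partial f_m(x_m^\infty)$ and $\sum_m y_m^\infty\in\partial g(z^\infty)+\beta\nabla h(z^\infty)$; thus the limit point solves the KKT system. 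The main obstacle I anticipate is constructing the Lyapunov function and closing the descent inequality, because the gradient step on $h$ is taken at the previous $z$-iterate, so the proof must absorb an extra lag cross term via cocoercivity and a careful joint tuning of $\tau$ and $\rho$; everything downstream (feasibility and limit passing) is routine once the residuals are shown to vanish.
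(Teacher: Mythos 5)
Your proposal is correct and follows essentially the same route as the paper's proof in Appendix~\ref{appendix:convergence_analysis_topadmm}: the identical Lyapunov function (Definition~\ref{definition:definition_of_lyapunov_function}), the same rewriting of the subproblem optimality conditions through the dual update with the calibration $\beta=\rho M\tau$, a descent inequality in which the lag term from evaluating $\nabla h$ at $\vec{z}^{(i)}$ is absorbed via $L$-smoothness, telescoping to obtain vanishing primal and dual residuals (Proposition~\ref{prop_chD:asymptotic_residual_error_and_objective}), and finally passing to the limit in the inclusions to recover the KKT system. The only cosmetic difference is that the paper organizes the monotonicity estimates as a Boyd-style objective-gap sandwich under a saddle-point assumption (Lemmas~\ref{lemma:diff_between_optimal_objective_and_at_iteration_kplus1} and~\ref{lemma:diff_between_objective_at_iteration_kplus1_and_optimal_objective}) and uses a descent-lemma three-point inequality (Lemma~\ref{lemma:imp_relation_among_Lipschitz_gradient}) where you pair the subgradients directly against the optimal point and invoke Baillon--Haddad cocoercivity; these are equivalent tools producing the same estimates.
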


{
Note that the classical consensus \ac{ADMM} algorithm is a special case of our proposed \ac{TOP-ADMM} algorithm when $h\!=\!0$ in~\eqref{eqn_chD:general_consensus_top_admm__generic_form} or $\nabla h\!=\!0$ in~\eqref{eqn_chD:generalized_top_admm_algorithm_iterates__for_convergence}. Although classical consensus \ac{ADMM} can solve many problems in machine learning, it does not necessarily yield an implementation-friendly algorithm, particularly, if the proximal operator of the $L$-smooth function $h$ is inefficient to compute.  
}


\section{Federated Learning using TOP-ADMM} \label{sec_chD:fl_topadmm_algorithm}

\ac{FL} can used to solve the distributed consensus problem~\eqref{eqn_chD:general_consensus_top_admm__generic_form}. Furthermore, we envision learning on the edge server, say with a loss function $h$ in~\eqref{eqn_chD:general_consensus_top_admm__generic_form} with some constraint or regularizer expressed by $g$ in~\eqref{eqn_chD:general_consensus_top_admm__generic_form}. Therefore, we pose the generic distributed problem~\eqref{eqn_chD:general_consensus_top_admm__generic_form} for \ac{FL} using new variables as given below: 
\begin{align}
\underset{ \left\{{\vec{w}}_m \in \Rm^n \right\},  {\vec{w}} \in \Rm^n}{\text{minimize}}& \!  \sum_{m=1}^M \! \alpha_m f_m\left( \vec{w}_m; \pazocal{D}_m \right) + g\left( \vec{w}\right) + \beta h\left( \vec{w}; \pazocal{D} \right) \nonumber \\ 
\label{eqn_chD:three_operator__FL_optimization_problem}
\text{subject to}& 
  \ \  \vec{w}_m - {\vec{w}} = {\vec{0}}, \quad \forall m=1,\ldots,M,
\end{align}
where the global weight vector of the considered learning model is given by $\vec{w} \!\in\! \Rm^n$ and the weight vector of user $m$ corresponds to $\vec{w}_m \!\in\! \Rm^n$. Furthermore, the loss function at user $m$ and the server are denoted by $f_m$ and $h$ with training dataset $\pazocal{D}_m$ and $\pazocal{D}$, respectively. The loss function $f_m$ of user $m$ is weighted by $\alpha_m \!\geq\!0$ satisfying $\sum_m \alpha_m \!=\! 1$, and the server's loss function $h$ is weighted by some nonnegative $\beta \!\geq\!0$. The edge server is expected to be collocated at the base station such that the server learns the global model using the data $\pazocal{D}$ generated or stored on the server side together with the learning from the data generated/available on the users' side. Observe that in many existing federated learning frameworks, such as {\FedADMM}~\cite{zhou_and_li_fedadmm2022}, $g\!=\!0$ and $h\!=\!0$ in~\eqref{eqn_chD:three_operator__FL_optimization_problem}. 

We are now ready to apply the \ac{TOP-ADMM} algorithm to~\eqref{eqn_chD:three_operator__FL_optimization_problem} for the \ac{FL} purpose with some modifications. Firstly, we swap the update order of \ac{TOP-ADMM}, \ie, \eqref{eqn_chD:update_xm__step1_parallel__general_top_admm__prox__for_convergence} and \eqref{eqn_chD:update_z__step2__general_top_admm__prox__for_convergence}. As suggested in~\cite[Chapter~5]{Glowinski2016}, the sequence updates of \ac{ADMM} for ${\vec{z}}^{\left(\skblack{i}+1\right)}$ in \eqref{eqn_chD:update_z__classical_consensus_admm_algorithm} and ${\vec{x}}_m^{\left(\skblack{i}+1\right)}$ in \eqref{eqn_chD:update_x_m__classical_consensus_admm_algorithm} are performed in Gauss-Seidel fashion. Specifically, one can interchange these updates without penalizing the convergence guarantee but the generated sequences over iterations may be different in general. Since \ac{TOP-ADMM} generalizes \ac{ADMM}, we can also interchange the updates. Secondly, we add a proximal term scaled by the parameter $\zeta^{\left(i\right)}$ to the subproblem in~\eqref{eqn_chD:update_z__classical_consensus_admm_algorithm}. Lastly, we employ a so-called Glowinski's relaxation factor $\gamma \!\in\!\left(0,2 \right)$ to the dual update, see, \eg, \cite{Ryu_Yin_ls_book_2022_draft}. Hence, the \ac{TOP-ADMM} algorithm tackling \eqref{eqn_chD:three_operator__FL_optimization_problem} can be summarized as follows:
\begin{subequations} \label{eqn_chD:top_admm_algorithm_for_FL}
\begin{align}
    \label{eqn_chD:step1__FedTOPADMM_algorithm}
    {\vec{w}}^{\left(i+1\right)} \! 
    =& \arg\min_{\vec{w}}  g\!\left( {\vec{w}} \right)  \nonumber \\
    &+\!   \sum_{m=1}^M  \!\frac{\rho_m}{2} \left\|  {{\vec{w}}_m^{\left(i\right)}} \! - \! {\vec{w}} \! {- \tau^{\left(i\right)} \nabla h \! \left( \! {\vec{w}}^{\left( i \right)} \right)} \! + \!  \frac{\bm{\lambda}_m^{\left(i\right)}}{\rho_m} \!\right\|_2^2  \\ 
    &+\! \frac{\zeta^{\left(i\right)}}{2} \left\| {\vec{w}} \!-\! \vec{w}^{\left(i\right)}   \right\|_2^2, \nonumber \\
    \label{eqn_chD:step2__FedTOPADMM_algorithm}
    {\vec{w}}_m^{\left(i+1\right)} \! 
    =& \arg\min_{\vec{w}_m}  \alpha_m f_m\!\left(\! \vec{w}_m \right) \!+ \! \frac{\rho_m}{2} \! \left\|\!  \vec{w}_m \! - \! \vec{w}^{\left(i\right)} \! +\! \frac{{\bm{\lambda}}_m^{\left(i\right)}}{\rho_m} \! \right\|_2^2\!,  \forall m\!, \\
    \label{eqn_chD:step3__FedTOPADMM_algorithm}
    {\bm{\lambda}}_m^{\left(i+1\right)} =& {\bm{\lambda}}_m^{\left(i\right)} + \gamma \rho_m \left( {\vec{w}_m^{\left(i+1\right)}} - {\vec{w}^{\left(i+1\right)}} \right), \ \forall m=1,\ldots,M,
\end{align}
\end{subequations}
where \skblack{step size $\tau^{(i)} \!\in\! \Rm_{\geq 0}$ and proximity parameter $\zeta^{(i)} \!\in\! \Rm_{\geq 0}$ are adaptive over iterations, \ie, more concretely, $\tau^{(i+1)} \!\leq\! \tau^{(i)}$, and $\zeta^{(i+1)} \!\leq\! \zeta^{(i)}$ for all $i\!=\!0\!,1\!,2\!,\ldots$. } 

Unfortunately, a direct application of the enhanced \ac{TOP-ADMM} algorithm~\eqref{eqn_chD:top_admm_algorithm_for_FL} in the \ac{FL} context would incur high communication costs due to the exchange of parameters among the server and the selected clients in every global iteration. Hence, \ac{TOP-ADMM} may negatively affect the communication efficiency. Additionally, at a given global iteration in the 
\ac{TOP-ADMM} algorithm~\eqref{eqn_chD:top_admm_algorithm_for_FL}, each client is expected to send and receive the updates to the server synchronously. Unfortunately, not all of the involved clients can transmit and receive the updated parameters in \ac{FL} due to the limited communication bandwidth and their computational capabilities.  
Considering the aforementioned limitations of the direct application of \ac{TOP-ADMM} for the \ac{FL}, we extend the \ac{TOP-ADMM} algorithm using the \ac{FL} framework introduced in~\cite{zhou_and_li_fedadmm2022}. 
Therefore, we establish a novel algorithm named {\FedTOPADMM} described in the next section catering to the \ac{FL} purpose. 

We would like to accentuate that it is unclear how to extend the existing {\FedADMM}~\cite{zhou_and_li_fedadmm2022} framework to support learning on the server, \ie, described by a loss function $h$ in~\eqref{eqn_chD:three_operator__FL_optimization_problem} while supporting a nonsmooth regularizer/function $g$ for the distributed learning. Ignoring $g$ for the moment in~\eqref{eqn_chD:three_operator__FL_optimization_problem}, one could argue to artificially add yet another parallel client on the server in the existing {\FedADMM}~framework, \ie, $h\!\equiv\!f_{M+1}$. However, the additional virtual client does not necessarily yield better convergence performance. Therefore, we evolve the {\FedADMM}~\cite{zhou_and_li_fedadmm2022} framework using our \ac{TOP-ADMM} algorithm. Subsequently, we show in the numerical Section~\ref{sec_chD:simulation_results} that {\FedTOPADMM}~renders superior performance over {\FedADMM}~with additional virtual client---see, \eg, Figure~\ref{fig_chD:cmp_perf_fedtopadmm1_and2__fedadmm_and_fedadmmvc}. Recall that the framework proposed in~\cite{zhou_and_li_fedadmm2022} is based on the classical two-operator consensus \ac{ADMM}---cf.~\eqref{eqn_chD:two_operator_general_consensus_admm} with $g\!=\!0$. Therefore, the proposed \ac{FL} using \ac{TOP-ADMM}, \ie, {\FedTOPADMM}, not only inherits all the properties of classical two-operator \ac{ADMM}, \ie, \FedADMM~\cite{zhou_and_li_fedadmm2022}, but also additionally exploits the $L$-smooth function on the server. In other words, {\FedADMM}~is a special case of {\FedTOPADMM}.  

In the sequel, we establish the {\FedTOPADMM} algorithm that is built on the \linebreak{\FedADMM}~\cite{zhou_and_li_fedadmm2022} framework utilizing the \ac{TOP-ADMM} algorithm.

\setlength{\textfloatsep}{5pt}
\begin{algorithm}[!thp]
\caption{\texttt{FedTOP-ADMM I/II}}\label{alg:fed_top_admm}
\begin{algorithmic}[1] 
\State \textbf{Input and Initialization}:  Choose $\tau^{\left(0 \right)} \!\geq\!0$, $\zeta^{\left(0 \right)} \!\geq\!0$, $\left\{ \tau_{m}^{\left(0 \right)}\right\}$, $\left\{\zeta_{m}^{\left(0 \right)}\right\}$; $\pazocal{U}^{\left( 0\right)} \!\coloneqq\! [M]$; $\vec{w}^{\left( 0\right)}$; $\left\{\vec{w}_m^{\left( 0\right)}\right\}$, $\left\{\bm{\lambda}_m^{\left( 0\right)}\right\}$; $\left\{ \mat{Q}_m \right\}$; $\left\{ \rho_m \right\}$; $\left\{ \alpha_m \right\}$; and $\vec{u}_m^{\left( 0\right)} \!\coloneqq\! \rho_m \vec{w}_m^{\left( 0\right)} \!+\! \bm{\lambda}_m^{\left( 0\right)}$ $\forall m$; $\nu^{(0)} \!\coloneqq\! \nicefrac{1}{\left( \sum_{m \in [M] } \rho_m + \zeta^{(0)} \right)}$; $\pazocal{D}, \left\{\pazocal{D}_m \right\}$; $\gamma \!\in\!(0,2)$
\For{$i=0,\ldots, I-I$}
    \State [\textbf{Server weight updates}]
    \If {\FedTOPADMMI~OR~(\FedTOPADMMII~AND~{$i \! \notin \! \pazocal{P}$) }}
    \begin{align*}
        \vec{y}^{\left( i+1\right)} 
            &\coloneqq - \tau^{\left( i\right)} \nabla h\!\left( \vec{w}^{\left( i\right)}; \pazocal{D} \right) + \zeta^{(i)} \vec{w}^{\left( i\right)}
    \end{align*}
    \Else \hspace{8mm} $\vec{y}^{\left( i+1\right)} \leftarrow \vec{y}^{\left( i\right)}$
    \EndIf
    \Statex \hspace{3.5mm} \underline{\textrm{Uplink communications and global parameter updates}}:
    \If {$i \! \in \! \pazocal{P}$ } \textit{ \% Uplink communications with users' selection}
    \State [\textbf{Server receives updated weights}] $\left\{ \vec{v}_m^{\left(i+1\right)} \!\leftarrow \!\vec{u}_m^{\left(i\right)} \right\}$
    \State [\textbf{Client selection}] $\pazocal{U}^{\left( i+1\right)} \!\subseteq \! [M]$ for downlink communications
    \Else
    \State [\textbf{Server utilizes previous weights}] $\left\{ \vec{v}_m^{\left(i+1\right)} \!\leftarrow \!\vec{v}_m^{\left(i\right)} \right\}$
    \EndIf
    \State [\textbf{Global parameter aggregation/update}] 
    \begin{align*} 
        \vec{w}^{\left( i+1\right)} \! 
    =& {\prox}_{\nu^{\left( i\right)} g}\!\left( \!\frac{1}{\nu^{\left( i\right)}} \! \left[\sum_{m=1}^M \!\vec{v}_m^{\left( i\right)} + \vec{y}^{\left( i\right)} \right] \! \right)
    \end{align*}
    \Statex \hspace{3.5mm} \underline{\textrm{Downlink communications and local parameter updates}}: \%\textit{Parallel}
    \State [\textbf{Server multicasts}]  
    \For{every $m \in  \pazocal{U}^{\left(i\right)}$} \textit{ \% update sequences}
    \If {$i \! \in \! \pazocal{P}$ } \%\textit{Selected users receive the updated weights} 
    \begin{align*}
        \hspace{-24mm} \vec{v}^{\left(i+1\right)} \coloneqq \vec{w}^{\left(i+1\right)}
    \end{align*}
    \Else \hspace{3.5mm} $\vec{v}^{\left(i+1\right)} \leftarrow \vec{v}^{\left(i\right)}$
    \EndIf
    \State [\textbf{User weight updates}] 
    \begin{align*}
        \Delta \vec{z}_m \coloneqq& \rho_m \left(\vec{w}_m^{\left(i\right)} \!-\!  \vec{v}^{\left(i+1\right)} \right) \!+\! \alpha_m \nabla f_m \! \left( \vec{w}_m^{\left(i\right)}; \pazocal{D}_m \right) \!+\! \bm{\lambda}_m^{\left(i\right)}  \\
        \vec{w}_m^{\left(i+1\right)} 
            \approx& \vec{w}_m^{\left(i\right)} \!-\! \left(\alpha_m \mat{Q}_m \!+\! \rho_m \I\right)^{-1}  \Delta \vec{z}_m \\
        \bm{\lambda}_m^{\left(i+1\right)} 
            =& \bm{\lambda}_m^{\left(i\right)} \!+\!  \gamma \rho_m \left( \vec{w}_m^{\left(i+1\right)} \!-\! \vec{v}^{\left(i+1\right)} \right) \\
        \vec{u}_m^{\left( i+1\right)} 
            \coloneqq& \rho_m \vec{w}_m^{\left( i+1\right)} \!+\! \bm{\lambda}_m^{\left( i+1\right)}
    \end{align*}
    \State 
    \EndFor
    \For{every $m \notin  \pazocal{U}^{\left(i\right)}$}  
    \State $\vec{u}_m^{\left( i+1\right)} = \vec{u}_m^{\left( i\right)}$
    \EndFor
\EndFor
\label{algorithm}
\end{algorithmic}
\end{algorithm}

\subsection{FedTOP-ADMM: Communication-Efficient Algorithm}
We present in Algorithm~\ref{alg:fed_top_admm} our novel {\FedTOPADMM} using the \ac{TOP-ADMM}~\eqref{eqn_chD:top_admm_algorithm_for_FL} and {\FedADMM}~\cite{zhou_and_li_fedadmm2022} framework for communication-efficient \ac{FL}. Specifically, we propose two variants of {\FedTOPADMM} algorithms, which are referred to as {\FedTOPADMMI} and {\FedTOPADMMII}. In {\FedTOPADMMI}, we learn the model on the server side continuously in every global iteration. Conversely, in {\FedTOPADMMII} we learn the considered global model when the server is not communicating and aggregating the parameters of the model, \ie, the server learns in parallel to the selected users. Notice that we refer to {\FedTOPADMMI} and \texttt{II} as common {\FedTOPADMM}, where the performance difference between \texttt{I} and \texttt{II} is non-noticeable. Hence, {\FedTOPADMM} corresponds to both {\FedTOPADMMI} and \linebreak{\FedTOPADMMII} depending on the context. 

As our proposed {\FedTOPADMM} generalizes the {\FedADMM} algorithm, it inherits all the properties of the {\FedADMM} algorithm, including its communication efficiency {when using only data available on the edge devices}. 
In the {\FedTOPADMM}, there are three additional hyperparameters compared to {\FedADMM}, namely $\tau^{\left(i \right)}$, $\zeta^{\left(i \right)}$, and $\gamma$. Meanwhile, note that {\FedADMM} requires tuning of $\left\{ \mat{Q}_m \right\}$ and $\left\{ \rho_m \right\}$ parameters---see~\cite{zhou_and_li_fedadmm2022} and also discussion in Section~\ref{sec_chD:simulation_results}. 

\begin{remark}
Let \skblack{$g\!=\!0$, $\gamma\!=\!1$, and} $\tau^{\left(i\right)}\!=\!0$ \skblack{or equivalently $h\!=\!0$} and  $\zeta^{\left(i\right)}\!=\!0$ for all \skblack{$i\!=\!0,1,2,\ldots$}. Then, {\FedTOPADMMI} boils down to {\FedADMM} and all the convergence results of {\FedADMM}~\cite[Section~4.3]{zhou_and_li_fedadmm2022} hold for any~\skblack{$J\!\geq\! 1$}. 
\end{remark}

\begin{remark}
Suppose $J\!=\!1$, $\tau^{\left(i\right)}\!=\!\tau \!\in\!\Rm_{\geq 0}$ and  $\zeta^{\left(i\right)}\!=\!0$ for all $i$, and $\rho_m \!=\! \rho$ for all $m$. Then, {\FedTOPADMMII} becomes \ac{TOP-ADMM}; and consequently, the global convergence Theorem~\ref{thm_ChD:definition_of_topadmm_algorith__feasible_problem} holds for {\FedTOPADMMII}. 
\end{remark}

\iffalse
\skblack{
To establish the convergence of {\FedTOPADMM} Algorithm~\ref{alg:fed_top_admm}, we first present the following proposition that extends the vanishing property of residual errors proved in Theorem~\ref{thm_ChD:definition_of_topadmm_algorith__feasible_problem}. 
Subsequently, using this proposition, we establish the global convergence of  
{\FedTOPADMM} 
in Theorem~\ref{theorem:global_convergence_fedtopadmm}.
}
\else
\skblack{
To establish the convergence of {\FedTOPADMM} Algorithm~\ref{alg:fed_top_admm}, we extend the vanishing property of residual errors proved in Theorem~\ref{thm_ChD:definition_of_topadmm_algorith__feasible_problem}. Using this result, we establish the global convergence of {\FedTOPADMM} 
in the following theorem. 
}
\fi

\iffalse
\skblack{
\begin{prop}\label{prop_chD:extension_of_fedtop}
Consider a problem given in~\eqref{eqn_chD:three_operator__FL_optimization_problem} under general convex settings with at least one solution. Assume all the subproblems of \eqref{eqn_chD:top_admm_algorithm_for_FL} or Algorithm~\ref{alg:fed_top_admm} have solutions. Also, in Algorithm~\ref{alg:fed_top_admm}, for each given period $T$, all users should be active at least once within the users active sets $\left\{\pazocal{U}^{\left(i+1\right)}, \pazocal{U}^{\left(i+2\right)},\ldots,\pazocal{U}^{\left(i+T\right)} \right\}$ for all $i\!\geq\!0$. Consider suitable non-increasing step-size $\left\{\tau^{(i)} \! \in \! \Rm_{\geq 0}\right\}$ and $\left\{\zeta^{(i)} \! \in \! \Rm_{\geq 0} \right\}$ for $i\!\geq0\!$, 
and relaxation/penalty parameter $\left\{\rho_m \! \in \! \Rm_{> 0}\right\}$ with some arbitrary initial $\{{\vec{w}}^{(0)}, {\vec{w}}_m^{(0)}, {\bm{\lambda}}_m^{(0)}\}$. Then, for any $J\!\geq\!1$, the so-called dual residual 
\begin{align*}
    \lim_{i \rightarrow +\infty} \left( {\vec{w}}^{\left( i+1 \right)} \! - \! {\vec{w}}^{\left( i \right)} \right) \! = \! 0
\end{align*}
and primal residual 
\begin{align*}
\lim_{i \rightarrow +\infty} \left( {\vec{w}}_m^{\left( i+1 \right)} \! - \! {\vec{w}}^{\left( i+1 \right)} \right) \! = \! 0, \ \forall m \! = \! 1,\ldots,M  .  
\end{align*}
\end{prop}
}

\skblack{
\begin{theorem}[Global convergence of {\FedTOPADMM} algorithm] \label{theorem:global_convergence_fedtopadmm}
Let $\left\{{\vec{w}}^{\left(i\right)}\right\}$ be a sequence generated by {\FedTOPADMM} iterative scheme in Algorithm~\ref{alg:fed_top_admm}. Then, it globally converges to a \ac{KKT} point of~\eqref{eqn_chD:three_operator__FL_optimization_problem} at any limit point.
\begin{proof}
{See Appendix \ref{sec_chD:proof_of_fedtopadmm_convergence}}.
\end{proof}
\end{theorem}
}

\else

\skblack{
\begin{theorem}[Global convergence of {\FedTOPADMM} algorithm] \label{theorem:global_convergence_fedtopadmm}
Consider a problem given in~\eqref{eqn_chD:three_operator__FL_optimization_problem} under general convex settings with at least one solution.  Let $\left\{{\vec{w}}^{\left(i\right)}\right\}$ be a sequence generated by {\FedTOPADMM} iterative scheme in Algorithm~\ref{alg:fed_top_admm}. Assume all the subproblems of \eqref{eqn_chD:top_admm_algorithm_for_FL} or Algorithm~\ref{alg:fed_top_admm} have solutions. Also, in Algorithm~\ref{alg:fed_top_admm}, for each given period $T$, assume all users should be active at least once within the users active sets $\left\{\pazocal{U}^{\left(i+1\right)}, \pazocal{U}^{\left(i+2\right)},\ldots,\pazocal{U}^{\left(i+T\right)} \right\}$ for all $i\!\geq\!0$. 
Consider suitable non-increasing step-size $\left\{\tau^{(i)} \! \in \! \Rm_{\geq 0}\right\}$ and $\left\{\zeta^{(i)} \! \in \! \Rm_{\geq 0} \right\}$ for $i\!\geq0\!$, 
and relaxation/penalty parameter $\left\{\rho_m \! \in \! \Rm_{> 0}\right\}$ with some arbitrary initial $\{{\vec{w}}^{(0)}, {\vec{w}}_m^{(0)}, {\bm{\lambda}}_m^{(0)}\}$. Assume that for any $J\!\geq\!1$, the so-called dual residual $\lim_{i \rightarrow +\infty} \left( {\vec{w}}^{\left( i+1 \right)} \! - \! {\vec{w}}^{\left( i \right)} \right) \! = \! 0$ and primal residual $\lim_{i \rightarrow +\infty} \left( {\vec{w}}_m^{\left( i+1 \right)} \! - \! {\vec{w}}^{\left( i+1 \right)} \right) \! = \! 0$, $\forall m \! = \! 1,\ldots,M$.
Then, the generated sequence $\left\{{\vec{w}}^{\left(i\right)}\right\}$ globally converges to a \ac{KKT} point of~\eqref{eqn_chD:three_operator__FL_optimization_problem} at any limit point.
\begin{proof}
{See Appendix \ref{sec_chD:proof_of_fedtopadmm_convergence}}.
\end{proof}
\end{theorem}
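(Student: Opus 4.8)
The plan is to establish that every limit point of the sequence produced by Algorithm~\ref{alg:fed_top_admm} is a Karush--Kuhn--Tucker (KKT) point of~\eqref{eqn_chD:three_operator__FL_optimization_problem}, using the assumed vanishing of the primal and dual residuals as the engine that lets us pass to the limit. First I would record the KKT system of~\eqref{eqn_chD:three_operator__FL_optimization_problem}: introducing multipliers $\{\bm{\lambda}_m\}$ for the consensus constraints $\vec{w}_m - \vec{w} = \vec{0}$, a point is stationary iff (i) $\vec{w}_m = \vec{w}$ for all $m$ (primal feasibility), (ii) $-\bm{\lambda}_m \in \alpha_m \partial f_m(\vec{w})$ for each $m$, and (iii) $\sum_{m} \bm{\lambda}_m - \beta \nabla h(\vec{w}) \in \partial g(\vec{w})$.

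Next I would read off the first-order optimality conditions of the subproblems of the underlying scheme~\eqref{eqn_chD:top_admm_algorithm_for_FL}, which Algorithm~\ref{alg:fed_top_admm} implements. The local update~\eqref{eqn_chD:step2__FedTOPADMM_algorithm} gives $-\rho_m(\vec{w}_m^{(i+1)} - \vec{w}^{(i)}) - \bm{\lambda}_m^{(i)} \in \alpha_m \partial f_m(\vec{w}_m^{(i+1)})$; the global update~\eqref{eqn_chD:step1__FedTOPADMM_algorithm} gives $\sum_m \rho_m(\vec{w}_m^{(i)} - \vec{w}^{(i+1)}) - \tau^{(i)}\left(\sum_m \rho_m\right)\nabla h(\vec{w}^{(i)}) + \sum_m \bm{\lambda}_m^{(i)} - \zeta^{(i)}(\vec{w}^{(i+1)} - \vec{w}^{(i)}) \in \partial g(\vec{w}^{(i+1)})$; and the dual update~\eqref{eqn_chD:step3__FedTOPADMM_algorithm} gives $\bm{\lambda}_m^{(i+1)} - \bm{\lambda}_m^{(i)} = \gamma \rho_m(\vec{w}_m^{(i+1)} - \vec{w}^{(i+1)})$. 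I would then fix a subsequence converging to $(\vec{w}^\star, \{\vec{w}_m^\star\}, \{\bm{\lambda}_m^\star\})$. The primal residual $\vec{w}_m^{(i+1)} - \vec{w}^{(i+1)} \to \vec{0}$ immediately yields feasibility (i), and together with the dual residual $\vec{w}^{(i+1)} - \vec{w}^{(i)} \to \vec{0}$ it forces the argument gaps $\vec{w}_m^{(i+1)} - \vec{w}^{(i)}$ and $\vec{w}_m^{(i)} - \vec{w}^{(i+1)}$ to vanish, while the boundedness of $\zeta^{(i)}$ kills the proximal term. Invoking continuity of $\nabla h$ (from $L$-smoothness), choosing the suitable step size so that $\tau^{(i)}\left(\sum_m \rho_m\right) \to \beta$, and using closedness of the graphs of the maximal monotone operators $\partial f_m$ and $\partial g$, the two limiting relations collapse exactly to (ii) and (iii).

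I expect the main obstacle to be the partial client participation. At iteration $i$ the aggregation uses the possibly stale copies $\vec{v}_m^{(i)}$ of $\vec{u}_m = \rho_m \vec{w}_m + \bm{\lambda}_m$, frozen since user $m$ was last in $\pazocal{U}$. To close the argument I would lean on the period-$T$ activation hypothesis: since each user is refreshed at least once every $T$ iterations, every stale copy is a uniformly bounded-delay evaluation of a converging quantity and hence inherits the same limit $\vec{u}_m^\star = \rho_m \vec{w}^\star + \bm{\lambda}_m^\star$. This guarantees that the aggregate relation (iii), assembled on the server from the $\{\vec{v}_m\}$ and the server variable $\vec{y}$, is in fact built from the correct per-user limits despite asynchrony. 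A secondary point, handled inside the same limit, is confirming that the smooth-gradient information carried by $\vec{y}^{(i)} = -\tau^{(i)}\nabla h(\vec{w}^{(i)}) + \zeta^{(i)}\vec{w}^{(i)}$ contributes exactly $\beta \nabla h(\vec{w}^\star)$ under the non-increasing schedules for $\tau^{(i)}$ and $\zeta^{(i)}$, which is where the robustness to arbitrary $J \geq 1$ enters through the already-granted residual bounds.
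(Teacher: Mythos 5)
Your proposal is correct and follows essentially the same route as the paper's proof: state the KKT system of~\eqref{eqn_chD:three_operator__FL_optimization_problem}, extract the first-order inclusions from the $\vec{w}$-, $\vec{w}_m$-, and $\bm{\lambda}_m$-updates, and use the assumed vanishing primal/dual residuals together with the $L$-Lipschitz continuity of $\nabla h$ to pass to the limit and recover each stationarity condition. Your extra refinements --- handling general $\gamma$, explicitly invoking graph closedness of $\partial f_m$ and $\partial g$, treating the stale copies $\vec{v}_m^{(i)}$ via the period-$T$ activation hypothesis, and pinning down $\tau^{(i)}\bigl(\sum_m \rho_m\bigr) \to \beta$ --- go slightly beyond the paper, which assumes $\gamma = 1$, works with the idealized synchronous updates~\eqref{eqn_chD:top_admm_algorithm_for_FL}, and leaves the identification of $\beta$ with $\sum_m \rho_m \tau^{(i)}$ implicit.
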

}

\fi



We describe the necessary processing steps of our proposed Algorithm~\ref{alg:fed_top_admm} as follows. 
In Algorithm~\ref{alg:fed_top_admm}, the total number of users participating in the \ac{FL} process is denoted by $M$. We specify the maximum number of global iterations as $I$. However, note that many heuristics-based early stopping techniques can potentially be employed on the server, \eg, when reaching the required test accuracy, but are not considered herein. 

In Step-1 of Algorithm~\ref{alg:fed_top_admm}, we provide the required inputs to the algorithm with appropriate initialization of vectors and (iterative) parameters, including the set $\pazocal{U}^{\left(0\right)}$ of selected users for the communication with the server. We denote communication events with users by $\pazocal{P} \!\coloneqq\! \left\{0, J, 2J, \ldots \right\}$,  which shows periodic events{\footnote{It is straightforward to consider aperiodic communication events $\pazocal{P}$ in the same framework, which would resemble an asynchronous setup. However, study of aperiodic communications is deferred to the future work. }} at every $J$ iterations. Therefore, $J$ represents the number of local iterations on the users' side. Consequently, the communication rounds in Algorithm~\ref{alg:fed_top_admm} is given by~\cite{zhou_and_li_fedadmm2022}
\begin{equation}
    \texttt{Communication rounds} \coloneqq \left\lfloor \nicefrac{i}{J} \right\rfloor,
\end{equation}
where $\left\lfloor \cdot \right\rfloor$ denotes flooring to the nearest integer. 


\begin{table*} 
\begin{center}
\skblack{
\caption{\skblack{Distributed optimization problem formulations}}\label{table_chD:distributed_opt_formulations}
\scalebox{1.0}{
\resizebox{\textwidth}{!}{%
	\begin{tabular}{|p{3.5cm}|p{5cm}|p{8cm}|} \hline
	\textbf{One-operator-based {FedADMM}~\cite{zhou_and_li_fedadmm2022}}      & \textbf{Two-operator ADMM}~\eqref{eqn_chD:two_operator_general_consensus_admm}  & \textbf{Three-operator-based TOP-ADMM~\eqref{eqn_chD:generalized_top_admm_algorithm_iterates__for_convergence}/\eqref{eqn_chD:top_admm_algorithm_for_FL} or {FedTOP-ADMM}~Algorithm~\ref{alg:fed_top_admm}}       \\ \hline
	$\operatorname{min}\, \sum_{m=1}^M \!f_m\!\left( {{\vec{w}}_m} \right) $ 
	& $\operatorname{min}\, \sum_{m=1}^M \!f_m\!\left( {{\vec{w}}_m} \right) \! + \! g\!\left( {\vec{w}} \right)$    
	& $\operatorname{min}\, \sum_{m=1}^M \!f_m\!\left( {{\vec{w}}_m} \right) \! + \! g\!\left( {\vec{w}} \right) \!+\! \beta h\!\left( {\vec{w}} \right)$  
	\\ 
	$\operatorname{s.t.}\, {{\vec{w}}_m} \!=\! {\vec{w}} \ \forall m $   
	& $\operatorname{s.t.}\, {{\vec{w}}_m} \!=\! {\vec{w}} \ \forall m $   
	& $\operatorname{s.t.}\, {{\vec{w}}_m} \!=\! {\vec{w}} \ \forall m $ 
	\\ \hline
	\end{tabular}
	}
} }
\end{center}
\end{table*}

The iterative {\FedTOPADMM} algorithm starts at Step-2. Note that {\FedTOPADMM} stops when the total number of global iterations $I$ is exhausted, or one employs an early stopping criteria within the loop, \eg, using test accuracy criteria as the server is expected to have some test dataset. 

If {\FedTOPADMMI} is employed, then at Step-3 and Step-4 for any global iteration $i$, the server performs some intermediate processing reminiscent of gradient descent-like step. Conversely, if \linebreak{\FedTOPADMMII} is employed, then when there is no communication event between the server and any users for a given iteration $i$, \ie, $i \!\notin\! \pazocal{P}$, the server performs the same processing in Step-3 and Step-4 as {\FedTOPADMMI}.  This intermediate step processing at server, \ie, Step-4, is part of the global weight vector $\vec{w}$ update---cf.~\eqref{eqn_chD:fed_top_admm__global_weight_update} and Step-4 of Algorithm~\ref{alg:fed_top_admm}. More specifically, assuming $g$ is closed, convex, and proper function (possibly nonsmooth), the solution corresponding to the subproblem~\eqref{eqn_chD:step1__FedTOPADMM_algorithm}, \ie, the $\vec{w}$ update, reads 
\begingroup\setlength{\jot}{-1.0ex} 
\begin{align} \label{eqn_chD:fed_top_admm__global_weight_update}
&\hspace{-2.5mm}\vec{w}^{\left( i+1\right)} \nonumber \\ 
&\hspace{-2mm}= \! {\prox}_{\nu^{\left( i\right)} g}\!\left( \!\frac{1}{\nu^{\left( i\right)}} \! \left[\!\sum_{m=1}^M \!\vec{u}_m^{\left( i\right)} \!-\! \tau^{\left( i\right)} \nabla \!h\!\left(\! \vec{w}^{\left( i\right)} \!\right) \!+\! \zeta^{(i)} \vec{w}^{\left( i\right)} \!\right] \! \right)\!,
\end{align}
\endgroup
where the definition of proximal operator is given in Definition~\ref{definition_chD:prox_operator} of Appendix~\ref{appendix:use_lemmas_def}, \linebreak$\nu^{(i)} \!\coloneqq\! {1}/{\left( \sum_{m \in [M] } \rho_m \!+\! \zeta^{(i)} \right)}$, and $\vec{u}_m^{\left( i+1\right)} \!\coloneqq\! \rho_m \vec{w}_m^{\left( i+1\right)} \!+\! \bm{\lambda}_m^{\left( i+1\right)}$. In the {\FedTOPADMM} algorithm, instead of directly using $\left\{\vec{u}_m^{\left( i\right)}\right\}$  in~\eqref{eqn_chD:fed_top_admm__global_weight_update}, the server utilizes $\vec{v}_m^{\left( i\right)}$, which is updated as described from Step-7 to Step-12 of Algorithm~\ref{alg:fed_top_admm}. When communication events $i\!\in\!\pazocal{P}$ occur, the server receives parameter vector $\vec{u}_m^{\left( i\right)}$ from selected user $m \!\in\! \pazocal{U}^{\left(i\right)}$ and consequently updates $\vec{v}_m^{\left( i\right)}$. Note the difference between the update in~\eqref{eqn_chD:fed_top_admm__global_weight_update} with the Step-13 of Algorithm~\ref{alg:fed_top_admm}, which aggregates all the weights of users and server appropriately. Consequently, the global updated weight vector $\vec{w}^{\left( i+1\right)}$ is generated at~\mbox{Step-13}.

The second subproblem of \ac{TOP-ADMM}~\eqref{eqn_chD:step2__FedTOPADMM_algorithm}, corresponding to the primal update of weight vector $\vec{w}_m^{\left(i+1\right)}$ for each user $m$, is equivalent to the subproblem in the classical \ac{ADMM} or {\FedADMM}~\cite{zhou_and_li_fedadmm2022, zhou_iceadmm_draft_2022}. We use the inexact solution to this subproblem, \ie, a linear approximation of the function $f_m$, proposed in~\cite{zhou_and_li_fedadmm2022, zhou_iceadmm_draft_2022}, in which the recipe is given in Step-19 of Algorithm~\ref{alg:fed_top_admm}. We refer the interested readers to~\cite{zhou_and_li_fedadmm2022, zhou_iceadmm_draft_2022} for the detailed analysis of the inexact solution. \mbox{Step-19} is repeated $J$-times before the server receives the updated parameter $\vec{u}_m^{\left(i+1\right)}$ from the selected user $m \!\in\! \pazocal{U}^{\left(i\right)}$. In Step-23, if the user $m$ was not selected for the communication with the server, \ie, $m \!\notin\! \pazocal{U}^{\left(i\right)}$, the server essentially utilizes the previously received parameter from the nonselected user $m$.

\skblack{
\subsection{Comparison Among {ADMM}, {FedADMM}, {TOP-ADMM}, and {FedTOP-ADMM} }\label{sec_chD:cmp_admm_fedadmm_topadmm_fedtopadmm}
To this end, we would like to accentuate the prowess of three-operator algorithms, such as our proposed \ac{TOP-ADMM}/{\FedTOPADMM}, compared to existing two-operator \ac{ADMM}~\eqref{eqn_chD:classical_consensus_admm_algorithm} or one-operator {\FedADMM}~\cite{zhou_and_li_fedadmm2022}. There are many problems of interest where one-operator such as existing {\FedADMM}~\cite{zhou_and_li_fedadmm2022} is not capable to tackle the problem having non-differentiable or non-smooth regularizer---see, \eg, sparse logistic regression problem~\eqref{eqn_chD:obj__distributed_l1_logistic_regression} with non-differentiable $\ell_1$ norm. More concretely, see Table~\ref{table_chD:distributed_opt_formulations} that compares which algorithm can solve what form of distributed optimization problems. 
Recall, {\FedTOPADMM} is built on the extended version of (three-operator) \ac{TOP-ADMM}~\eqref{eqn_chD:top_admm_algorithm_for_FL}, where the extended \ac{TOP-ADMM}~\eqref{eqn_chD:top_admm_algorithm_for_FL} include variable ``proximity" convex regularizers to the primal $\vec{w}$-update and Glowinski's relaxation parameter to the dual $\bm{\lambda}$-update in contrast to the vanilla \ac{TOP-ADMM}~\eqref{eqn_chD:generalized_top_admm_algorithm_iterates__for_convergence}. Moreover, {\FedTOPADMM} generalizes (one-operator) {\FedADMM} and essentially two-operator \ac{ADMM}. However, this existing one-operator {\FedADMM} utilizes (two-operator) \ac{ADMM}. Nonetheless, clearly, the {\FedTOPADMM} in Algorithm~\ref{alg:fed_top_admm} becomes {\FedADMM} when $\tau^{\left(i\right)}\!=\!0$ and $\zeta^{\left(i\right)}\!=\!0$ for all $i\!=\!0,1,\ldots,I-1$ iterations. 
}


\subsection{Connections with Existing Works on FL}

The other benchmarking algorithms besides {\FedADMM}~\cite{zhou_and_li_fedadmm2022} are {\FedProx}~\cite{Tian_Li_etal__FedProx__2020} and \linebreak{\FedAvg}~\cite{McMahan__FedAvg__2017}, which can easily coexist with the Algorithm~\ref{alg:fed_top_admm} framework as highlighted in~\cite{zhou_and_li_fedadmm2022}. Specifically, in Step-19 of Algorithm~\ref{alg:fed_top_admm}, the {\FedTOPADMM} becomes {\FedProx} or {\FedAvg} by setting the following \ac{TOP-ADMM} parameters to zero, \ie, $\tau^{\left(i \right)}\!=\!0$, $\zeta^{\left(i \right)}\!=\!0$, and $\gamma\!=\!0$  and replacing the weight update for each user with  
\begin{align*}
\vec{u}_m^{\left(i+1\right)} \!\coloneqq\! \vec{w}_m^{\left(i+1\right)} \!\approx\! \vec{w}_m^{\left(i\right)} \!-\! \eta \! \left[\! \nabla f_m\!\left( \vec{w}_m^{\left(i\right)} \right) \!+\! \mu \!\left( \vec{w}_m^{\left(i\right)}  \!-\! \vec{v}^{\left(i+1\right)}  \right)\!   \right],     
\end{align*}
where $\eta$ is a step size and $\mu$ is a scaling parameter for the proximal term, in which $ \mu \!=\!0$ for {\FedAvg}. Moreover, note that {\FedADMM\texttt{-VC}} represents {\FedADMM} with a virtual client collocated at the edge server. 

\section{Numerical Results} \label{sec_chD:simulation_results}
\skblack{In this section, we conduct the experiments using the distributed (sparse) logistic regression to benchmark the performance of these existing algorithms against our proposed {\FedTOPADMM} algorithms. 
\begin{subequations}\label{eqn_chD:distributed_l1_logistic_regression}
\begin{align}
\label{eqn_chD:obj__distributed_l1_logistic_regression}
\underset{ \left\{{\vec{w}}_m \in \Rm^n \right\},  {\vec{w}} \in \Rm^n}{\text{minimize}}&   \ \sum_{m=1}^M \frac{1}{d_m} \sum_{j=1}^{d_m} \Biggl[ \! \log\!\left(\!1 \!+\!\exp\!\left\{ \left(\mat{A}_m\!\left[:,j\right]\right)^\trans \vec{w}_m \right\} \!\right) \nonumber \\
    &\hspace{7mm}\!-\! \vec{t}_m\left[j\right] \left\{ \left(\mat{A}_m\left[:,j\right]\right)^\trans \vec{w}_m \right\} \!+\!\frac{\kappa}{2}\left\|\vec{w}_m\right\|_2^2 \Biggr] \\ 
    \label{eqn_chD:l1_regularizer__distributed_l1_logistic_regression}
    &\hspace{7mm}\!+\! \upsilon \left\| \vec{w} \right\|_{1} \\
    \label{eqn_chD:consensus_constraint__distributed_l1_logistic_regression}
\text{subject to}\hspace{+3mm}& 
  \ \  {{\vec{w}}_m} - {\vec{w}} = {\vec{0}}, \quad \forall m=1,\ldots,M,
\end{align}
\end{subequations}
where the training dataset on the user $m$ is $\pazocal{D}_m\!\coloneqq\!\left\{ \vec{t}_m, \mat{A}_m \right\}_{m=1}^M$, \ie, the binary output is $\vec{t}_m \!\in\! \left\{ 0\!,+1\right\}^{d_m}\!$, the input feature matrix is $\mat{A}_m \!\in\! \Rm^{n \times d_m}$, and the regression weight vector is $\vec{w} \!\equiv\!\vec{w}_m \!\in\!\Rm^{n}$. 
The scaling factor to the regularizer is $\kappa \!=\!0.001$ in the experiments, unless otherwise mentioned. In case of non-sparse logistic regression problem, one can ignore the $\ell_1$-norm regularizer~\eqref{eqn_chD:l1_regularizer__distributed_l1_logistic_regression} by setting zero to $\upsilon \!\in\! \Rm_{\leq0}$. Notice that the non-sparse problem is also used in~\cite{zhou_and_li_fedadmm2022}.
}

\skblack{
For all the considered methods including our proposed method, the loss or objective function at each user $m$ or at the server for {\FedTOPADMM} in~\eqref{eqn_chD:distributed_l1_logistic_regression} reads 
\begin{align}\label{eqn_chD:objective_log_reg}
    h\left( \vec{w} \right) 
    \!\equiv\!& f_m\!\left( \vec{w}_m \right) \coloneqq\! \frac{1}{d_m} \sum_{j=1}^{d_m} \Biggl[ \! \log\!\left(\!1 \!+\!\exp\left\{ \left(\mat{A}_m\!\left[:,j\right]\right)^\trans \!\vec{w}_m \!\right\} \!\right) \nonumber \\
    &\hspace{7mm}\!-\! \vec{t}_m\left[j\right] \left\{ \left(\mat{A}_m\left[:,j\right]\right)^\trans \vec{w}_m \right\} \!+\!\frac{\kappa}{2}\left\|\vec{w}_m\right\|_2^2 \Biggr].
\end{align}
}

\subsection{Experimental Settings}

In this section, we present numerical results to illustrate the performance of our proposed {\FedTOPADMMI} and {\FedTOPADMMII} algorithms compared to the state-of-the-art algorithms {\FedADMM}~\cite{zhou_and_li_fedadmm2022}, {\FedProx}~\cite{Tian_Li_etal__FedProx__2020}, and {\FedAvg}~\cite{McMahan__FedAvg__2017}. 

\skblack{
Before we proceed further with more realistic dataset and comparison of our proposed {\FedTOPADMM} with the abovementioned benchmarking methods, we want to highlight the strength of our proposed {\FedTOPADMM} in contrast to these existing algorithms. In other words, we show the potency of three-operator structure of {\FedTOPADMM} in contrast to these existing one-operator, \ie, corresponding to sum of separable $f_m$ functions, {\FedAvg}, {\FedProx}, {\FedADMM}.  More concretely, we show how easily our proposed three-operator {\FedTOPADMM} can exploit the non-differentiable regularizer~\eqref{eqn_chD:l1_regularizer__distributed_l1_logistic_regression} that can easily be handled by setting, say, function $g$ as a scaled $\ell_1$-norm, whose proximal operator is well known in the literature, see, \eg, \cite{Beck2017, Ryu_Yin_ls_book_2022_draft}. Although it is unclear how to employ $\ell_1$-norm regularizer with existing one-operator methods {\FedAvg} and {\FedProx} because of non-differentiablity of $\ell_1$-norm, we can only modify {\FedADMM} to incorporate the second operator corresponding to function $g$ since {\FedADMM} is built on the classical consensus two-operator \ac{ADMM}~\eqref{eqn_chD:two_operator_general_consensus_admm}. Thus, in the {\it modified} {\FedADMM}, after the global aggregation one can include the second operator, \ie, proximal operator corresponding to the function $g$, without the third operator corresponding to the gradient of $h$. More precisely, the Step-13 of Algorithm~\ref{alg:fed_top_admm} for the modified {\FedADMM} without the third operator (corresponding to the gradient of function $h$) can be expressed as $\vec{w}^{\left( i+1\right)} \! = \! {\prox}_{\nu^{\left( i\right)} g}\!\left( \!\frac{1}{\nu^{\left( i\right)}} \! \left[\sum_{m=1}^M \!\vec{v}_m^{\left( i\right)} \right] \! \right)$ with $\gamma\!=\!1$. The numerical convergence behaviour of {\FedTOPADMM} and modified {\FedADMM}, in terms of objective, primal residual, and dual residual against iterations are depicted in Figure~\ref{fig_chD:fig1__convergence_behaviour_of_top_admm__distributed_logistic_regression}. In this simple but illustrative example, the random test setup is similar to \cite[Section~11.2]{Boyd2011}. Additionally, in this toy example, we have generated the synthetic sparse training dataset with a total of 20000 examples having a feature vector length $n\!=\!100$. We have employed $M\!=\!100$ users that are distributed, and each user has $d_M\!=\!200$ training examples, where all the users are active and $J=1$. Clearly, {\FedTOPADMM} shows faster convergence than {\FedADMM}, while delivering similar/better training error ($0.82\%$) than {that of} {\FedADMM} ($0.85\%$). 
}

\begin{figure*}[htp!]
  \centering
  \begin{minipage}{.525\linewidth}
    \centering
    \subcaptionbox{ Objective Error-vs.-Iterations. \label{fig_chD:fig1a__logistic_reg__objective_vs_iter} }
    {\includegraphics[width=\textwidth,trim=32mm 92mm 31mm 92mm,clip]{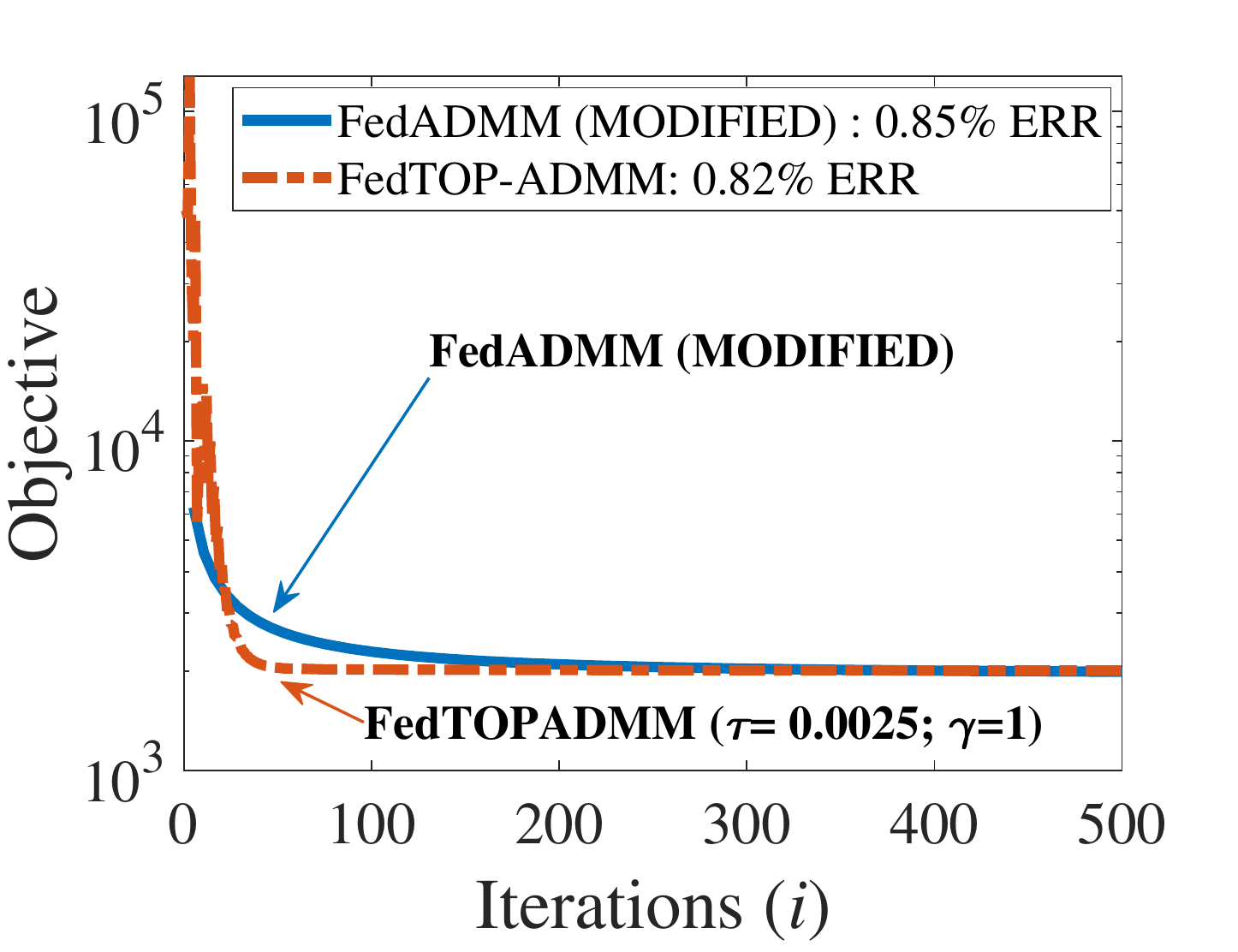}}
  \end{minipage}\quad
  \begin{minipage}{.525\linewidth}
    \centering
    \subcaptionbox{ Primal Residual-vs.-Iterations. \label{fig_chD:fig1c__logistic_reg__primal_residual_vs_iter}}
    {\includegraphics[width=\textwidth,trim=32mm 92mm 31mm 92mm,clip]{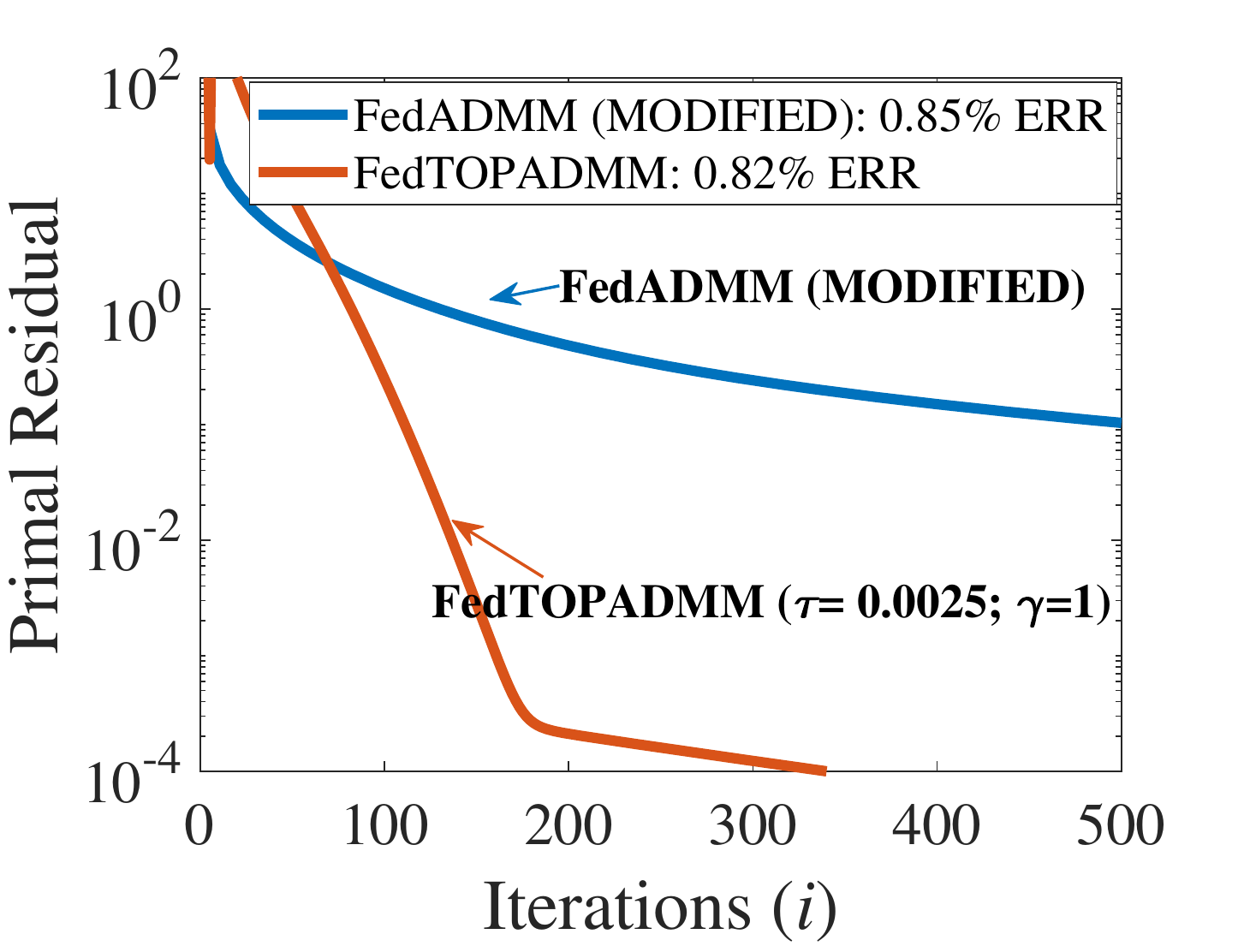}}
  \end{minipage}\quad
  \begin{minipage}{.525\linewidth}
    \centering
    \subcaptionbox{ Dual Residual-vs.-Iterations. \label{fig_chD:fig1b__logistic_reg__dual_residual_vs_iter}}
    {\includegraphics[width=\textwidth,trim=32mm 92mm 31mm 92mm,clip]{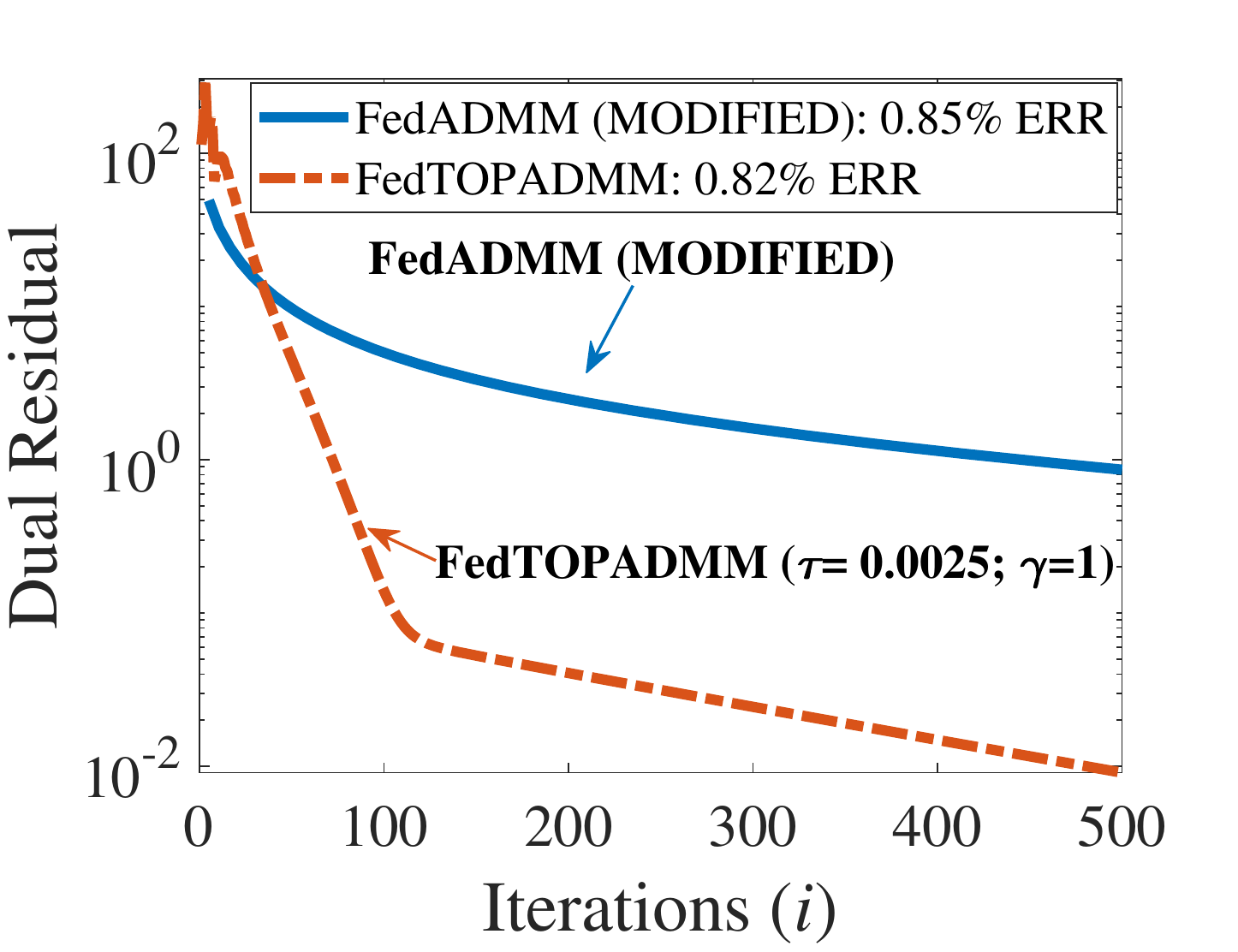}}
  \end{minipage}
  \caption{\skblack{Comparison of convergence behaviour between  {\FedADMM} and {\FedTOPADMM} for the distributed {\it sparse} logistic regression problem~\eqref{eqn_chD:distributed_l1_logistic_regression}.} }
  \label{fig_chD:fig1__convergence_behaviour_of_top_admm__distributed_logistic_regression}  
\end{figure*}

\skblack{In our next set of experiments, we have ignored the sparse parameter in the logistic regression such that we can compare {\FedTOPADMM} with not only (modified) {\FedADMM} but also {\FedAvg} and {\FedProx}. Additionally,} in all the \skblack{subsequent} considered simulations, the total number of users is fixed to 200, \ie, $M\!=\!200$. However, 10 users are selected uniformly at random during each communication event of the global iteration, \ie, $i \!\in\!\pazocal{P}$. 

We have conducted experiments using one of the most popular real-world datasets, namely MNIST~\cite{Lecun__MNIST_data__1998}. Specifically, we have scaled/normalized the MNIST input data. There are many ways to scale the input data matrix $\mat{A} \!\in\! \Rm^{n \times d}$, where $d \!=\! \sum_{m=1}^M d_m$ and $n$ represent the total number of data samples for training and testing, and feature vector length, respectively. Moreover, we have analyzed two scaling approaches: 
1) 
$\vec{a}\!\in\! \Rm^{n \times 1},\; \vec{a} \!\coloneqq  \texttt{mean}\left( \mat{A}, 2 \right) \!\oslash\! \texttt{std}\left( \mat{A}, [\,], 2 \right)$%
, and 2) 
${\vec{a} \!\coloneqq }\texttt{mean}\left( \mat{A}, 2 \right) \!\oslash\! \texttt{var}\left( \mat{A}, [\,], 2 \right)\!\in \! \Rm^{n \times 1}$, 
where $\oslash$ corresponds to elementwise division{\footnote{Because of numerical division, if any element is not a number, we replace the element by zero.}}, and $\texttt{mean}$, $\texttt{std}$, and $\texttt{var}$ represent the mean, standard deviation, and variance along the second dimension of matrix $\mat{A}$ as used in MATLAB expressions. Finally, the scaled version of the MNIST dataset is expressed as $\mat{A} \! \leftarrow \! \mat{A} \!-\! \bm{1}_{1 \times d} \!\otimes\! \vec{a}$, where $\bm{1}_{1 \times d}$ denotes a row vector of all ones with dimension $1 \times d$, and $\otimes$ represents a Kronecker product. \skblack{Additionally, we use other popular datasets, such as CIFAR-10 and CIFAR-100~\cite{Alex_Cifar10_100_dataset_2009}.}

\skblack{We evaluate the performance using two data partitioning: 1) \ac{i.i.d.}, where the data is randomly shuffled and the corresponding labels are shuffled accordingly, and 2) non-\ac{i.i.d.}, where the training labels are sorted in ascending order and the corresponding input data is ordered accordingly. Thus, in case of MNIST dataset, this non-\ac{i.i.d.} data split is one of the pathological cases because each user or base station would have at most two class labels.}

In Figure~\ref{fig_chD:examples__mnist_data}, we illustrate the examples of MNIST digits without, see Figure~\ref{fig_chD:MNIST_dataset__without_normalization} and with these two aforementioned scaling approaches, see Figs.\ref{fig_chD:MNIST_dataset__with_normalization__approach2__with_std}-\ref{fig_chD:MNIST_dataset__with_normalization__mistakenApproach__with_var}. Based on our exhaustive experiments, we have found the second approach in Figure~\ref{fig_chD:MNIST_dataset__with_normalization__mistakenApproach__with_var} more challenging to learn than the first scaling approach in Figure~\ref{fig_chD:MNIST_dataset__with_normalization__approach2__with_std} and the unscaled original version in Figure~\ref{fig_chD:MNIST_dataset__without_normalization}. Consequently, we have employed the second scaling approach for our further numerical analysis. The data distribution among the users and the server is \ac{i.i.d.} unless otherwise stated. Moreover, we have considered a binary classifier by simply employing digit~1 as the true label and other digits as false labels.
\begin{figure*}[!t]
\centering
  \begin{minipage}[!t]{.525\linewidth}
  \begin{subfigure}[!t]{\textwidth}
    \centering
    \includegraphics[width=\textwidth,trim=45mm 98mm 40mm 92mm,clip]{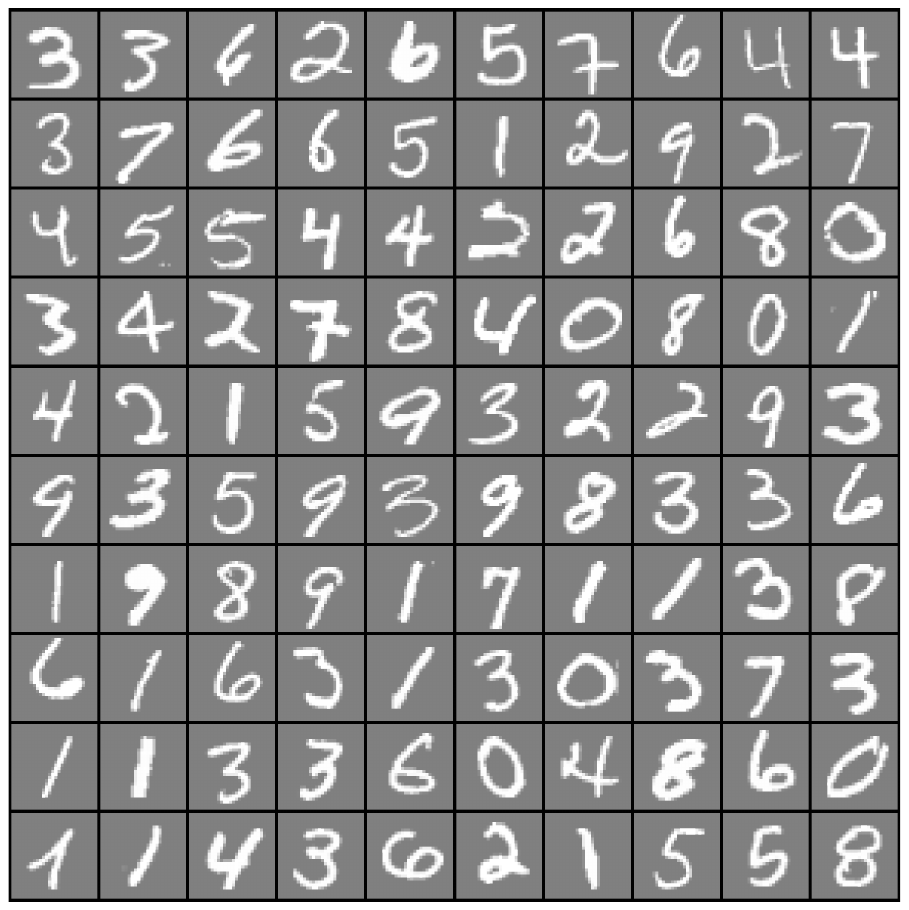}
    \caption{Without any scaling}\label{fig_chD:MNIST_dataset__without_normalization}
  \end{subfigure}
  \end{minipage}\hfill
  \begin{minipage}[!t]{.525\linewidth}
  \begin{subfigure}[!t]{\textwidth}
    \centering
    \includegraphics[width=\textwidth,trim=45mm 98mm 40mm 92mm,clip]{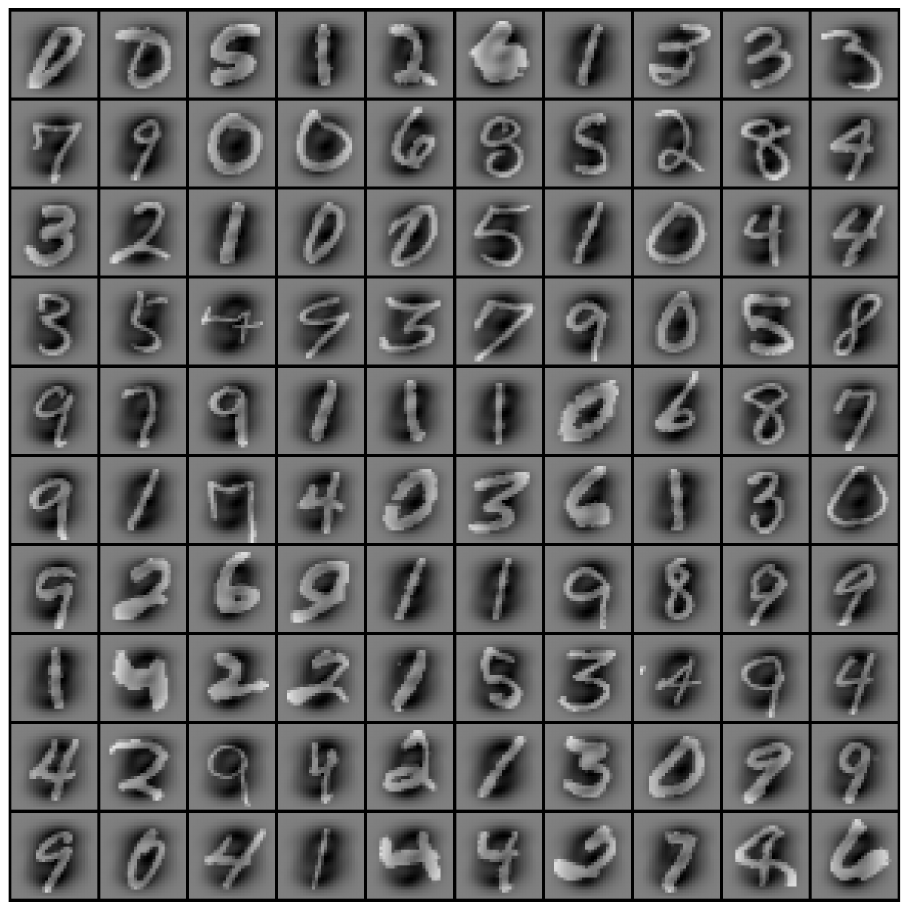}
    \caption{With a scaling approach~1}\label{fig_chD:MNIST_dataset__with_normalization__approach2__with_std}
  \end{subfigure}
  \end{minipage}\hfill
  \begin{minipage}[!t]{.525\linewidth}
  \begin{subfigure}[!t]{\textwidth}
  \centering
    \includegraphics[width=\textwidth,trim=45mm 98mm 40mm 92mm,clip]{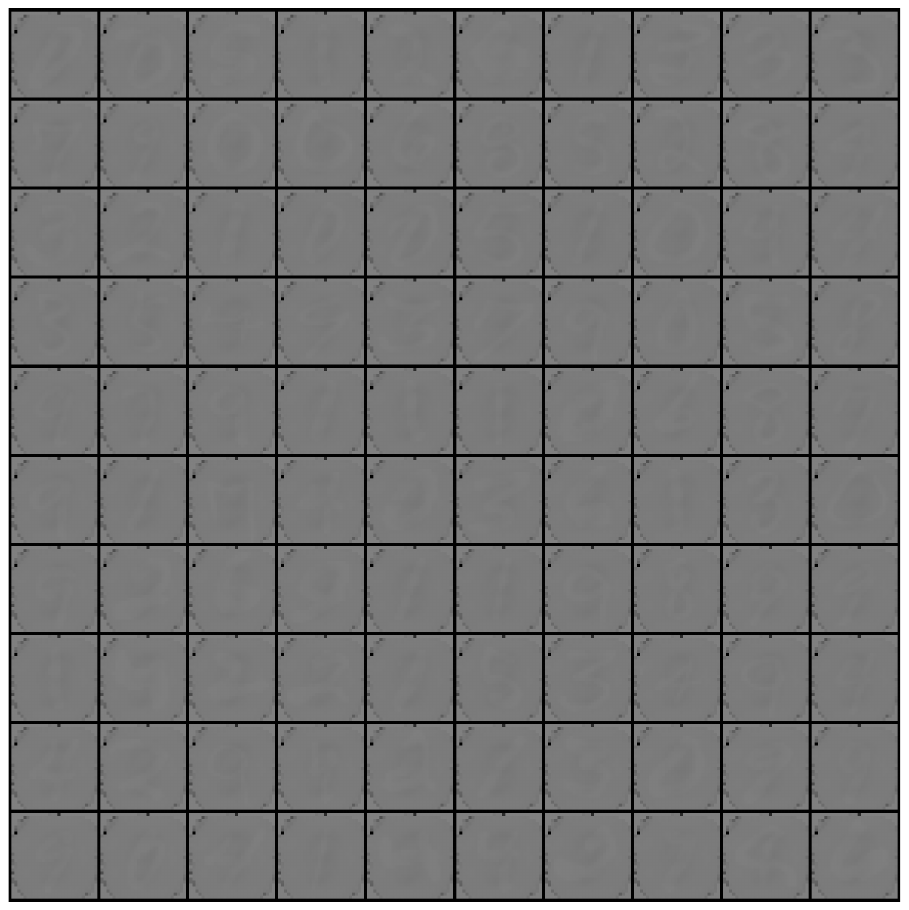}
    \caption{With a proposed scaling approach~2}\label{fig_chD:MNIST_dataset__with_normalization__mistakenApproach__with_var}
    \end{subfigure}
  \end{minipage}
  \caption{Examples of MNIST handwritten digits without scaling and with two different scaling approaches. }\label{fig_chD:examples__mnist_data}
\end{figure*}

We perform grid search to tune the hyperparameters of the benchmarking algorithms including our proposed algorithms. The step size of the gradient descent in {\FedAvg} is chosen from the candidate set, \ie, $\eta\!\in\!\left\{
10^{-1},10^{-2},10^{-3},10^{-4},10^{-5},5\times10^{-6},10^{-6}
\right\}$. For {\FedProx}, in addition to the step size $\eta$,
we need to tune the hyperparameter $\mu$ for the proximal term, which is appropriately selected from the set, \ie, $\mu\!\in\!\left\{0.01, 0.1, 0.5, 500 \right\}$. The hyperparameters of {\FedADMM} are chosen as suggested in~\cite{zhou_and_li_fedadmm2022, zhou_iceadmm_draft_2022}. More specifically, the inexact version of {\FedADMM} sets $\mat{Q}_m\!=\!r_m \I$ with $r_m \!=\! \operatorname{eig}_{\rm max} \!\left( \mat{A}_m^\trans \mat{A}_m \right)/\left(4\!+\!\kappa\right)$ that utilizes the maximum eigenvalue of $ \mat{A}_m^\trans \mat{A}_m$ Gram matrix of input data, such that the hyperparameter  $\rho_m\left(a\right) \!=\! {\left[a \log\left(M d_m \right) \alpha_m r_m\right]}/{\left( \log\left(2\!+\!J\right) \right)} $ with $\alpha_m \!=\! 1/\left(M d_m\right)$ and a user defined parameter $a$. We have considered the following mean values of the hyperparameters $\texttt{mean}\left(\left\{ \rho_m \right\}\right)\!\in\!\bigl\{\mbox{3.4035e-1}, \ \mbox{3.4035e-2}, \ \allowbreak \mbox{3.4035e-3},\ \mbox{3.4035e-4},\ \mbox{3.4035e-5},\  \mbox{3.4035e-6},\ \mbox{3.4035e-7} \bigr\}$ using appropriate values of $a$ and the maximum eigenvalues of the input data matrices. 
Subsequently, the tunable parameters in {\FedTOPADMM} are chosen in common with {\FedADMM}.
Furthermore, the extra tunable parameters of {\FedTOPADMM}, particularly, the parameters on the server side $\tau^{(i)}$ and $\zeta^{(i)}$, are selected to be monotonically decreasing with increasing iterations. Specifically, we have employed the following recipe for both $\tau^{(i)}$ and $\zeta^{(i)}$ to decrease monotonically over iterations: $\beta^{\left(i+1\right)} \!=\! \beta^{(0)}/\left(1 \!+\! \left[i \mu^\prime \beta^{\left(i\right)}\right]\right)$, where $\beta \!\coloneqq\! \tau$ and $\beta \!\coloneqq\! \zeta$, appropriately, and $\mu^\prime\!=\!10$. The candidate set of additional tunable parameters of {\FedTOPADMM} are $\tau^{\left(0\right)}\!\in\!\bigl\{
10^{-1},10^{-2},10^{-3},\ldots, 10^{-9}\bigr\}$, $\zeta^{\left(0\right)}\!\in\!\left\{5, 2.5, 1, 0.5, 0.025, 0.005, 0.00025\right\}$, and $\gamma\!\in\!\left\{0.1, 0.5, 1, 1.5, 1.999 \right\}$.

\subsection{Experimental Results}

\begin{figure*}[tp!]
  \begin{minipage}[t]{1\linewidth}
  \begin{subfigure}[t]{0.495\textwidth}
    \centering
    \includegraphics[width=\textwidth,trim=32.5mm 85mm 31mm 92.75mm,clip]{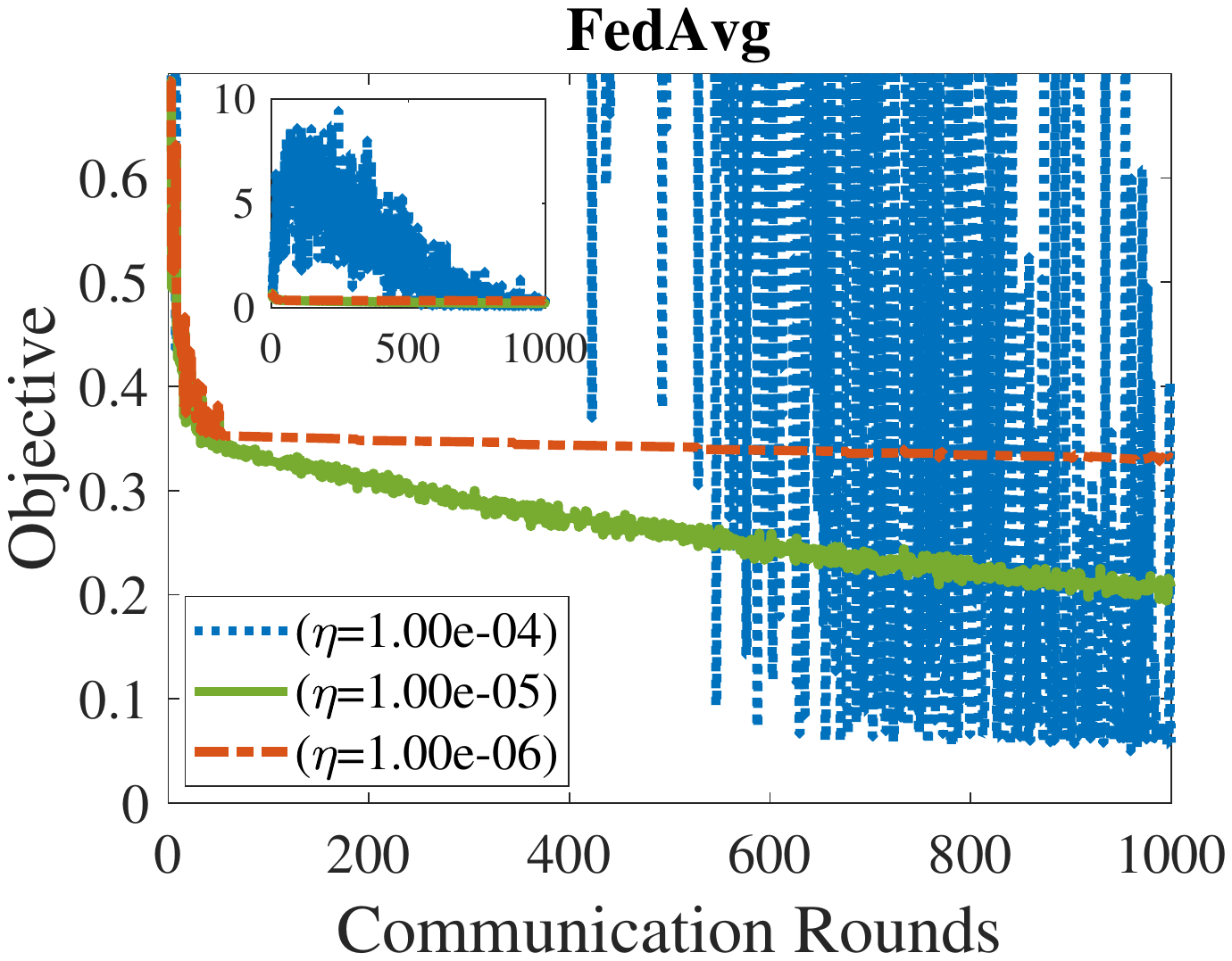}
    \caption{{\FedAvg}~\cite{McMahan__FedAvg__2017}}\label{fig_chD:FINAL__mnist_periodic__J10_IID0__FedAvg_for_various_set_of_pars__obj_vs_CR}
  \end{subfigure}
  \begin{subfigure}[t]{0.495\textwidth}
    \centering
    \includegraphics[width=\textwidth,trim=32.5mm 85mm 31mm 92.75mm,clip]{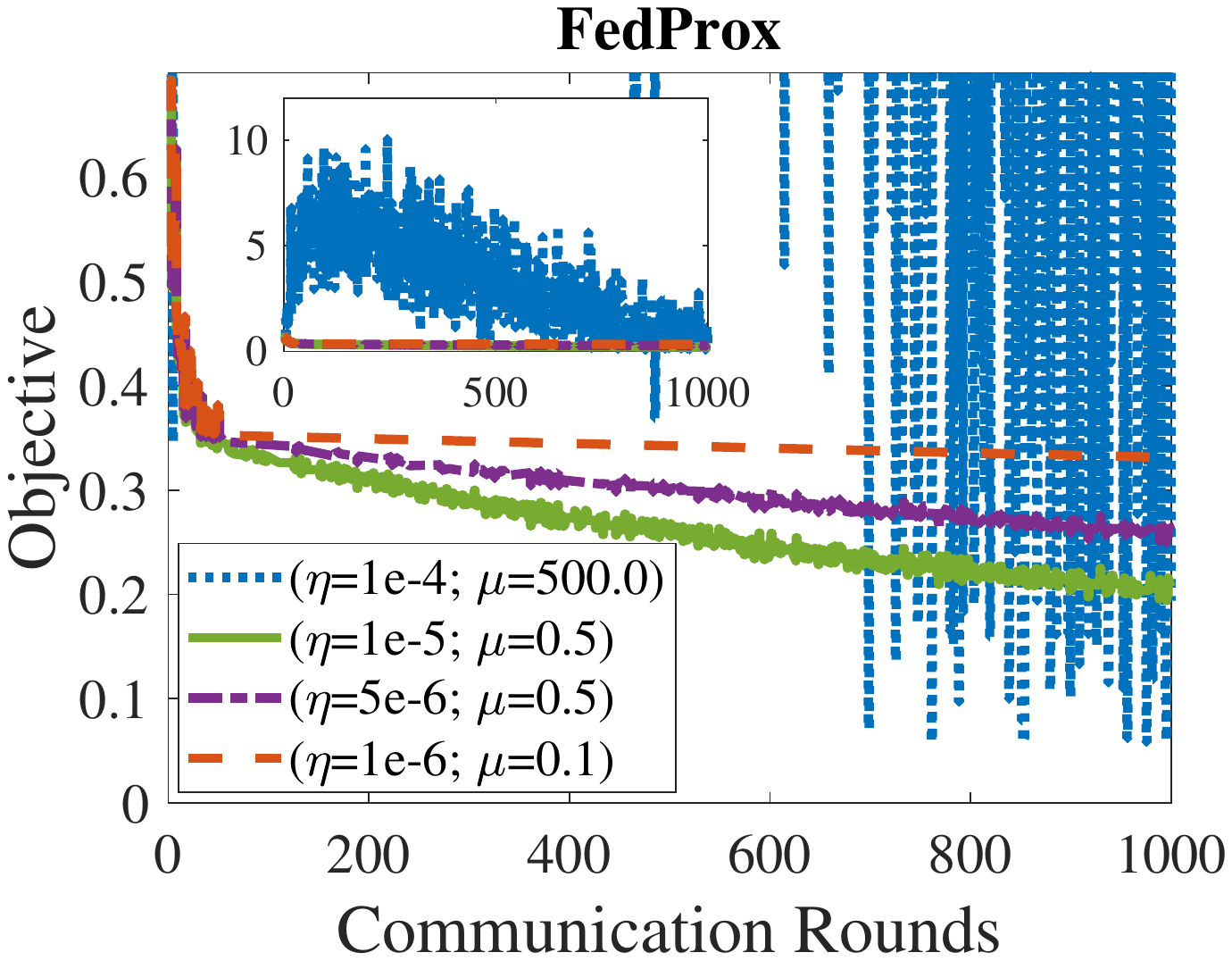}
    \caption{{\FedProx}~\cite{Tian_Li_etal__FedProx__2020}}\label{fig_chD:FINAL__mnist_periodic__J10_IID0__FedProx_for_various_set_of_pars__obj_vs_CR}
  \end{subfigure}
  
    \begin{subfigure}[t]{0.495\textwidth}
    \centering
    \includegraphics[width=\textwidth,trim=32.5mm 85mm 31mm 92.75mm,clip]{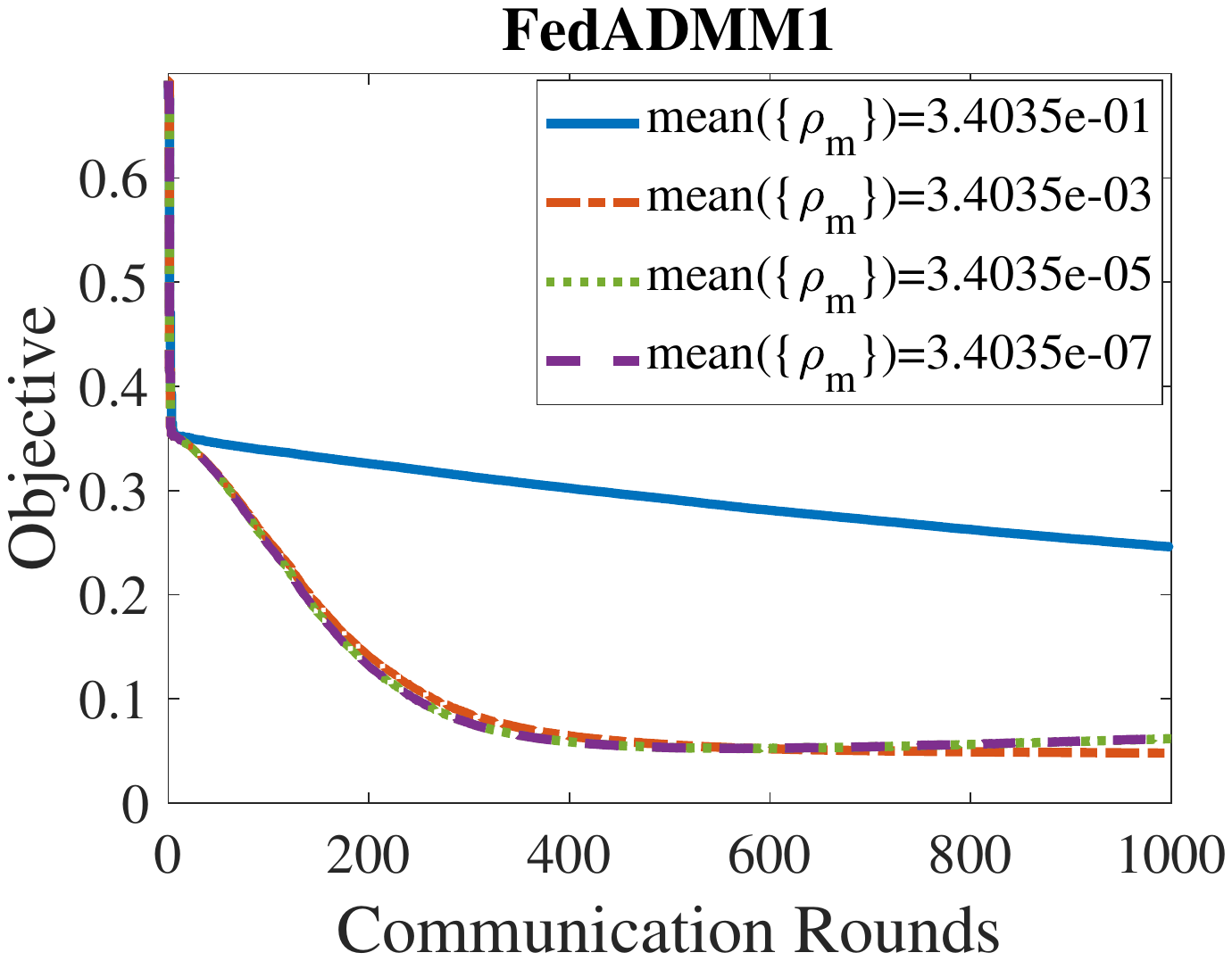}
    \caption{{\FedADMM}~\cite{zhou_and_li_fedadmm2022}}\label{fig_chD:FINAL__mnist_periodic__J10_IID0__FedADMM1_for_various_set_of_pars__obj_vs_CR}
  \end{subfigure}
  \begin{subfigure}[t]{0.495\textwidth}
    \centering
    \includegraphics[width=\textwidth,trim=32.5mm 85mm 31mm 92.75mm,clip]{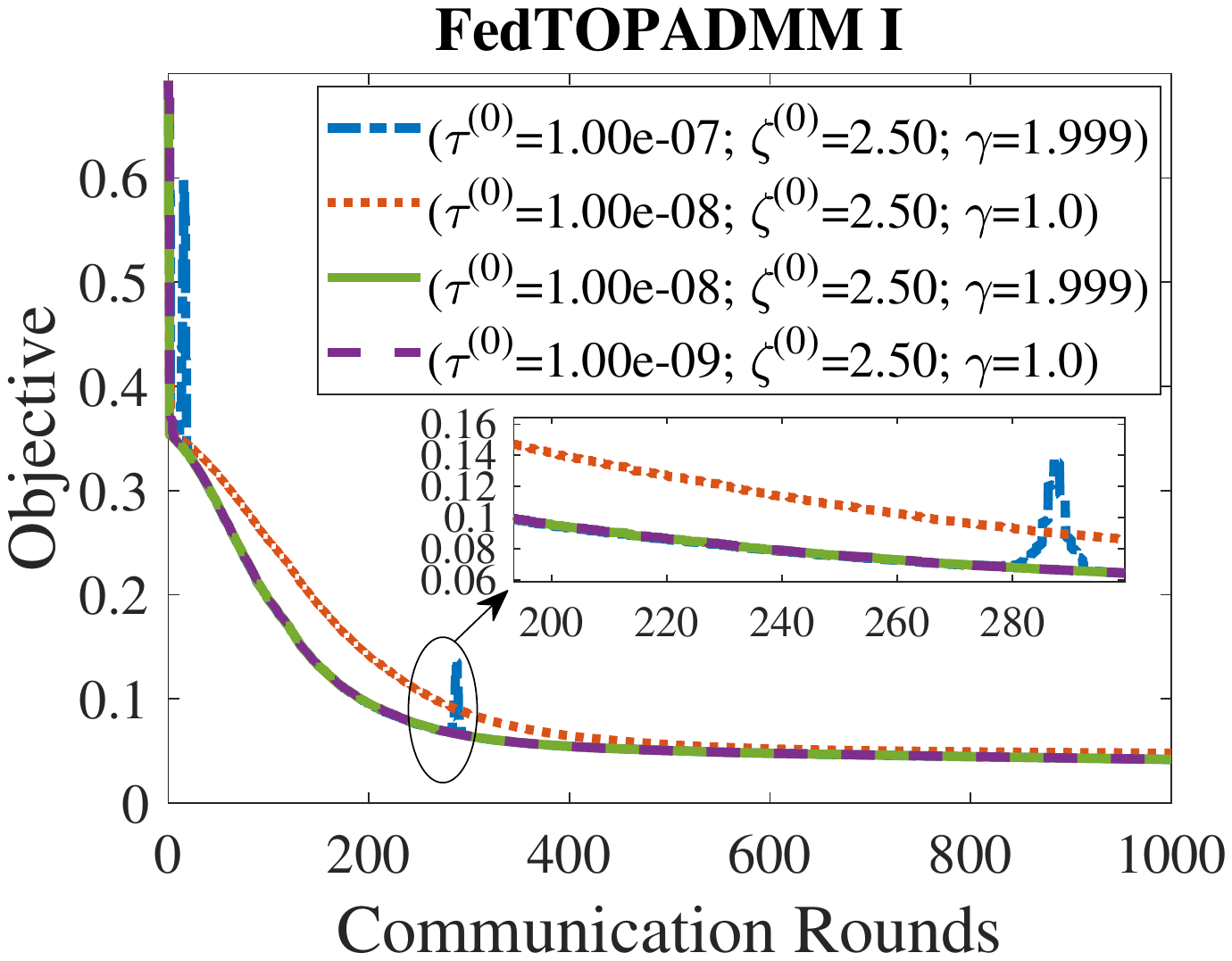}
    \caption{Proposed {\FedTOPADMM} }\label{fig_chD:FINAL__mnist_periodic__J10_IID0__FedTOPADMM1_for_various_set_of_pars__obj_vs_CR}
  \end{subfigure}
  \caption{Convergence analysis of existing, {\FedAvg}, {\FedProx}, and {\FedADMM}, and our proposed {\FedTOPADMM}\texttt{I/II} algorithms for various hyperparameters under $J\!=\!10$ local iterations.}\label{fig_chD:convergence_analysis_proposed_three_operator_admm_types_with_prior_arts}
  \end{minipage}
\end{figure*}

\begin{figure*}[!htp]
  \begin{minipage}[t]{1\linewidth}
  \begin{subfigure}[t]{0.495\textwidth}
    \centering
    \includegraphics[width=\textwidth,trim=32.5mm 85mm 31mm 88mm,clip]{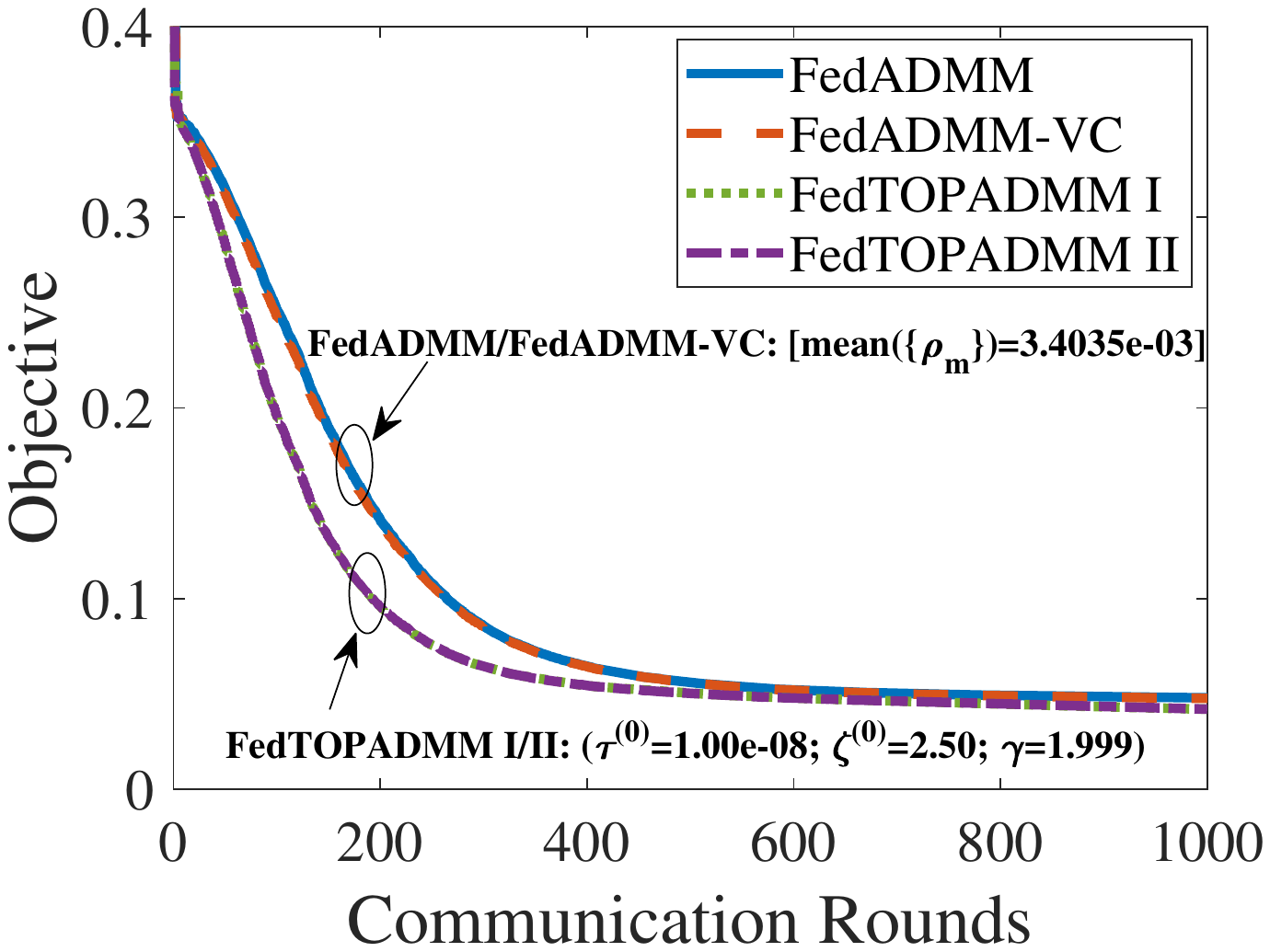}
    \caption{Objective vs. Communication rounds}\label{fig_chD:FINAL__mnist_periodic__J10_IID0__FedADMM1_vs_FedADMM1_VC__FedTOPADMM_I_vs_II__obj_vs_CR}
  \end{subfigure}
  \begin{subfigure}[t]{0.495\textwidth}
    \centering
    \includegraphics[width=\textwidth,trim=32.5mm 85mm 31mm 88mm,clip]{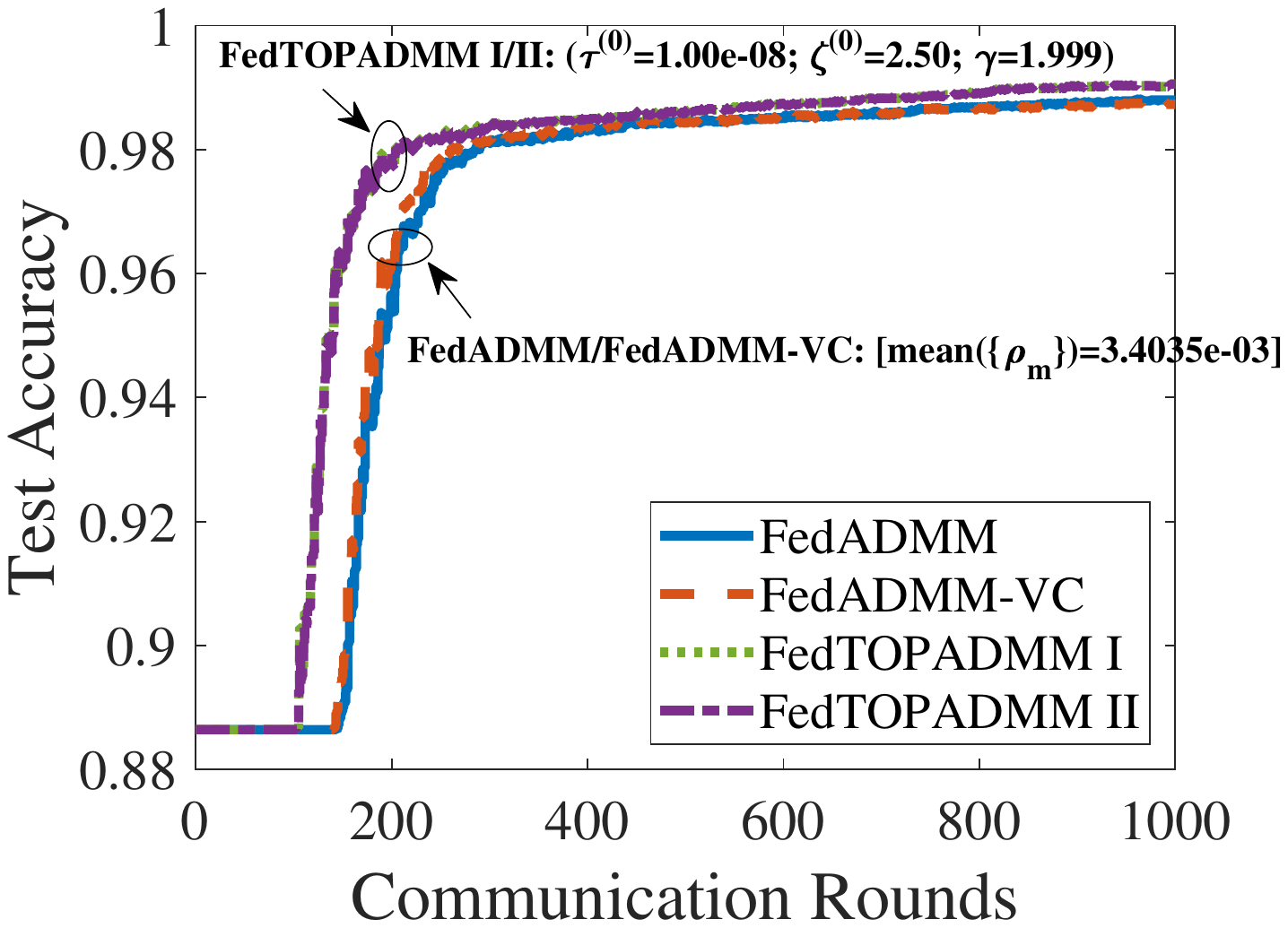}
    \caption{Test accuracy vs. Communication rounds}\label{fig_chD:FINAL__mnist_periodic__J10_IID0__FedADMM1_vs_FedADMM1_VC__FedTOPADMM_I_vs_II__Test_accuracy_vs_CR}
  \end{subfigure}
    \caption{Comparison of {\FedTOPADMM}~\texttt{I/II} with {\FedADMM} and {\FedADMM\texttt{-VC}} under $J\!=\!10$. }\label{fig_chD:cmp_perf_fedtopadmm1_and2__fedadmm_and_fedadmmvc}
  \end{minipage}
\end{figure*}

\begin{figure*}[!t]
 \centering
  \begin{minipage}[t]{.525\linewidth}
  \begin{subfigure}[t]{\textwidth}
    \centering
    \includegraphics[width=\textwidth,trim=32.5mm 85mm 31mm 92mm,clip]{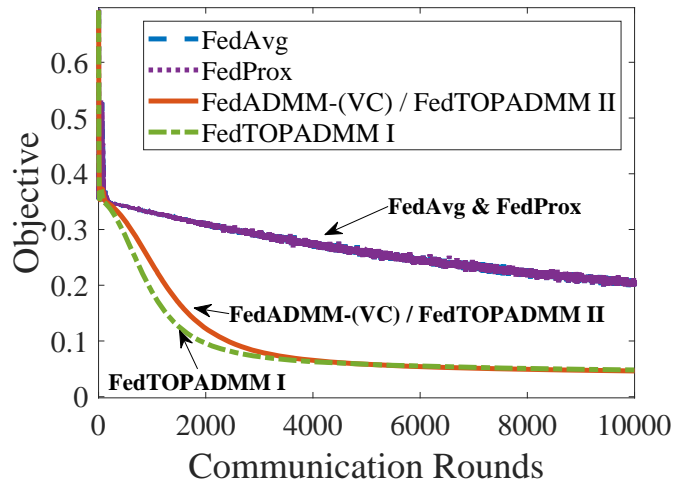}
    \caption{$J=1$}\label{fig_chD:FINAL__mnist_periodic__J1_IID0__cmp_among_all_FL_algos__obj_vs_CR}
  \end{subfigure}
  \end{minipage}\hfill
  \begin{minipage}[t]{.525\linewidth}
  \begin{subfigure}[t]{\textwidth}
    \centering
    \includegraphics[width=\textwidth,trim=32.5mm 85mm 31mm 92mm,clip]{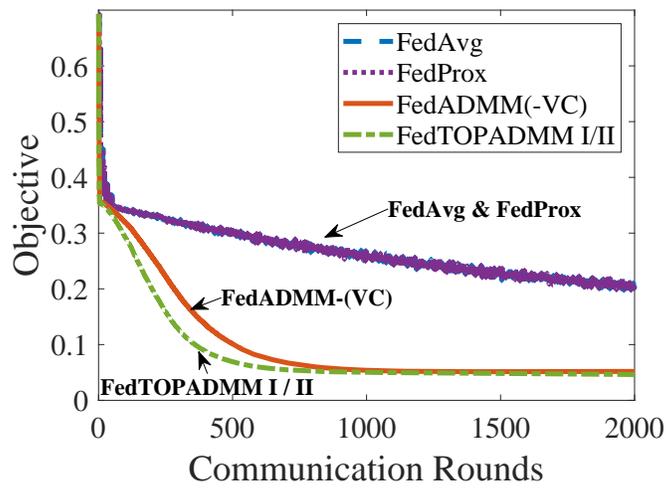}
    \caption{$J=5$}\label{fig_chD:FINAL__mnist_periodic__J5_IID0__cmp_among_all_FL_algos__obj_vs_CR}
  \end{subfigure}
  \end{minipage}\hfill
  \begin{minipage}[t]{.525\linewidth}
  \begin{subfigure}[t]{\textwidth}
  \centering
    \includegraphics[width=\textwidth,trim=32.5mm 85mm 31mm 92mm,clip]{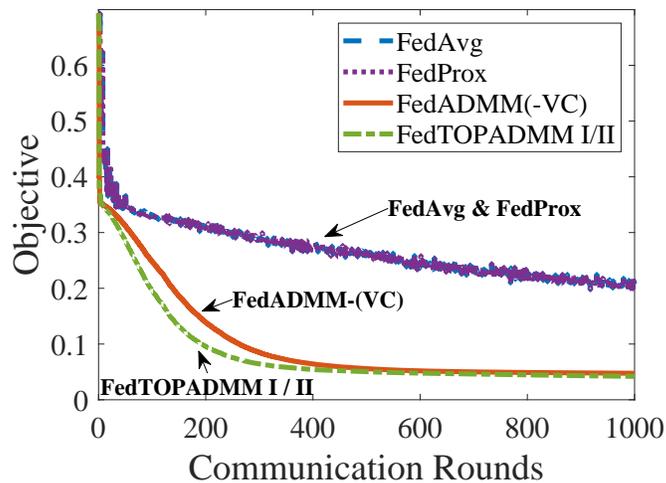}
    \caption{$J=10$}\label{fig_chD:FINAL__mnist_periodic__J10_IID0__cmp_among_all_FL_algos__obj_vs_CR}
    \end{subfigure}
  \end{minipage}
  \caption{Performance comparison, in terms of objective against communication rounds, of proposed {\FedTOPADMMI} /{\FedTOPADMMII} with  {\FedADMM} /{\FedADMM\texttt{-VC}}, {\FedProx}, and {\FedAvg}, considering three cases of local iterations on the users' side, \ie, $J\!\in\!\left\{1, 5, 10 \right\}$. }\label{fig_chD:cmp_among_all_FL_algos__for_three_Js__obj_vs_iter}
\end{figure*}

\begin{figure*}[!htp]
  \begin{minipage}[t]{1\linewidth}
  \begin{subfigure}[t]{0.495\textwidth}
    \centering
    \includegraphics[width=\textwidth,trim=32.5mm 85mm 31mm 88mm,clip]{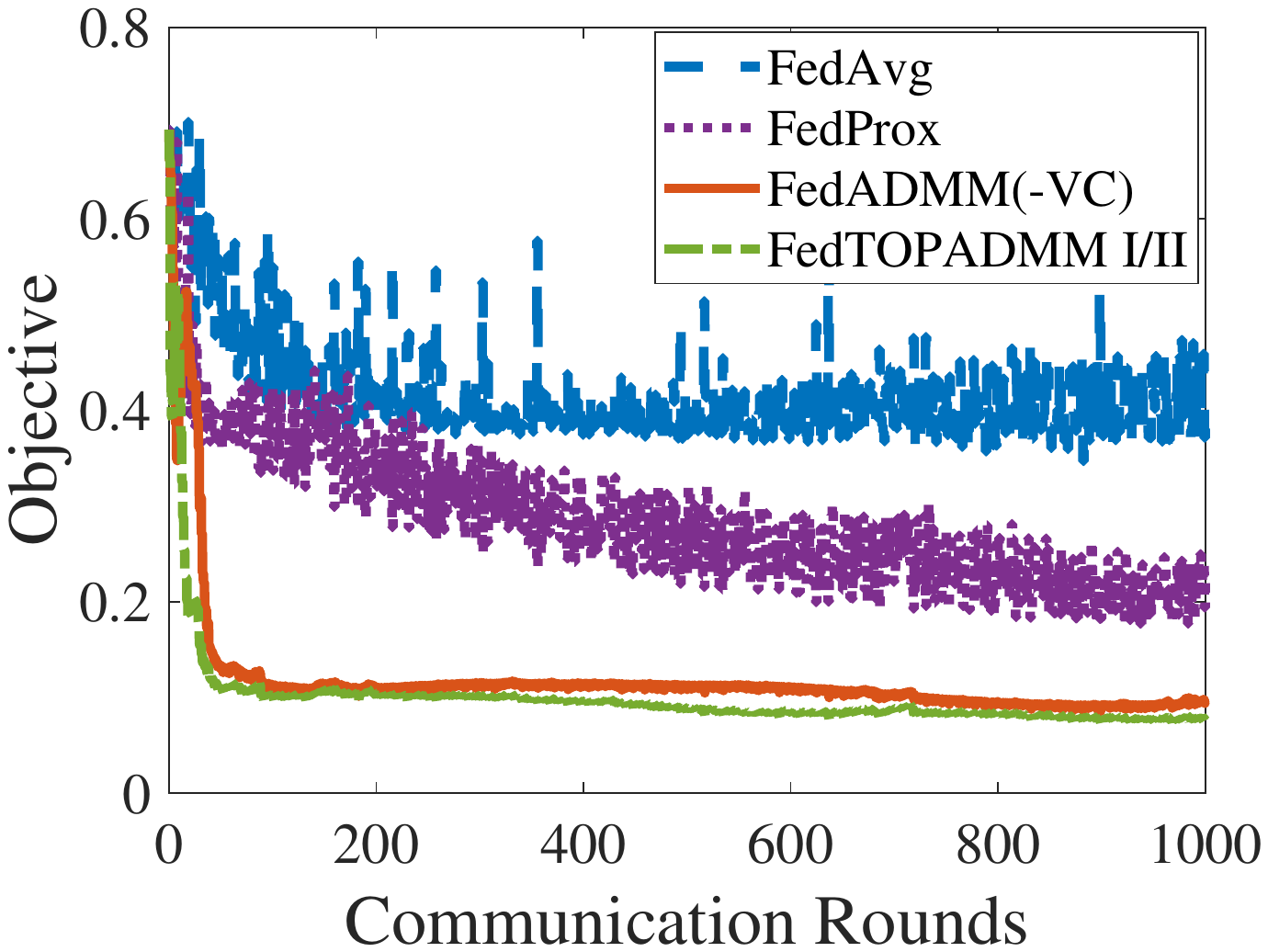}
    \caption{Objective vs. communication rounds}\label{fig_chD:mnist_nonIID_case0_J10_obj_vs_CR}
  \end{subfigure}
  \begin{subfigure}[t]{0.495\textwidth}
    \centering
    \includegraphics[width=\textwidth,trim=32.5mm 85mm 31mm 88mm,clip]{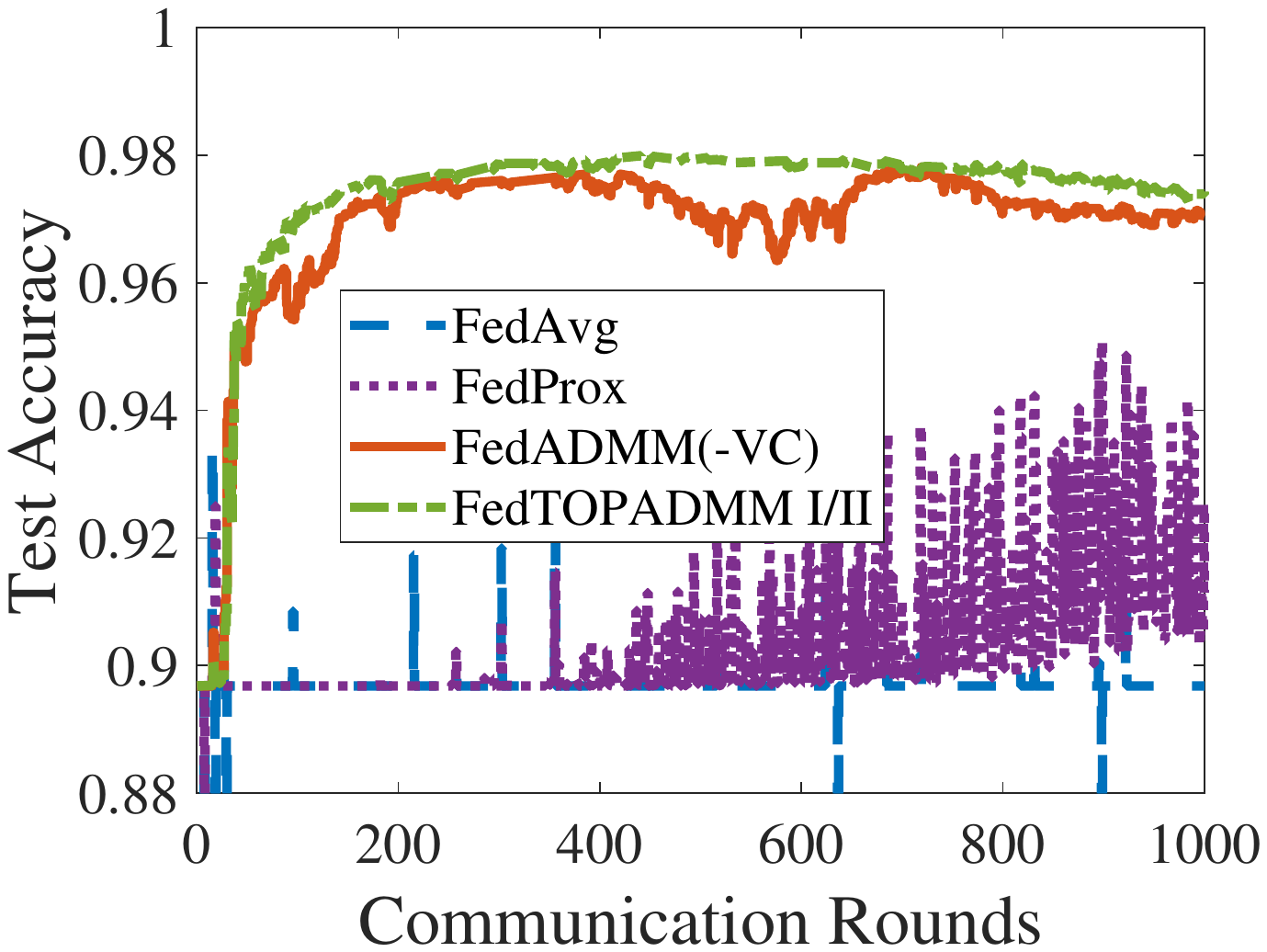}
    \caption{Test accuracy vs. communication rounds}\label{fig_chD:mnist_nonIID_case0_J10_Test_accuracy_vs_CR}
  \end{subfigure}
    \caption{\skblack{Performance comparison of proposed {\FedTOPADMM\texttt{I/II}} with existing {\FedADMM\texttt{(-VC)}}, {\FedProx}, and {\FedAvg} under non-\ac{i.i.d.} data without scaling MNIST dataset and $J=10$.} }\label{fig_chD:mnist_nonIID_case0_J10}
  \end{minipage}
\end{figure*}

\begin{figure}[!htp]
    \centering
    \begin{minipage}[t]{0.625\linewidth}
    \centering
    \includegraphics[width=\textwidth,trim=32.5mm 85mm 31mm 88mm,clip]{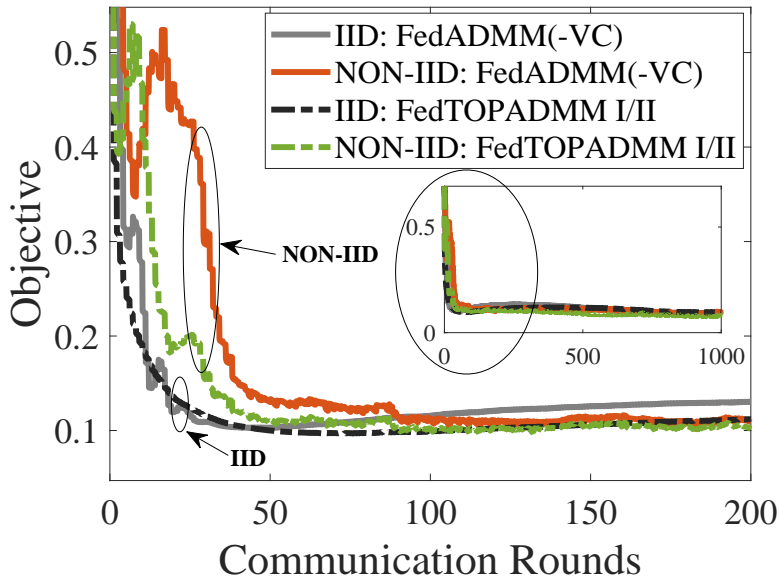}
    \caption{{\skblack{Performance comparison among \ac{i.i.d.} and non-\ac{i.i.d.} (without scaling of) MNIST dataset.}}}\label{fig_chD:mnist_nonIID__vs_IID__case0_J10_obj_vs_CR__ADMM_and_TOPADMM}
    \end{minipage}
\end{figure}

\begin{figure*}[!htp]
  \begin{minipage}[t]{1\linewidth}
  \begin{subfigure}[t]{0.495\textwidth}
    \centering
    \includegraphics[width=\textwidth,trim=32.5mm 85mm 31mm 88mm,clip]{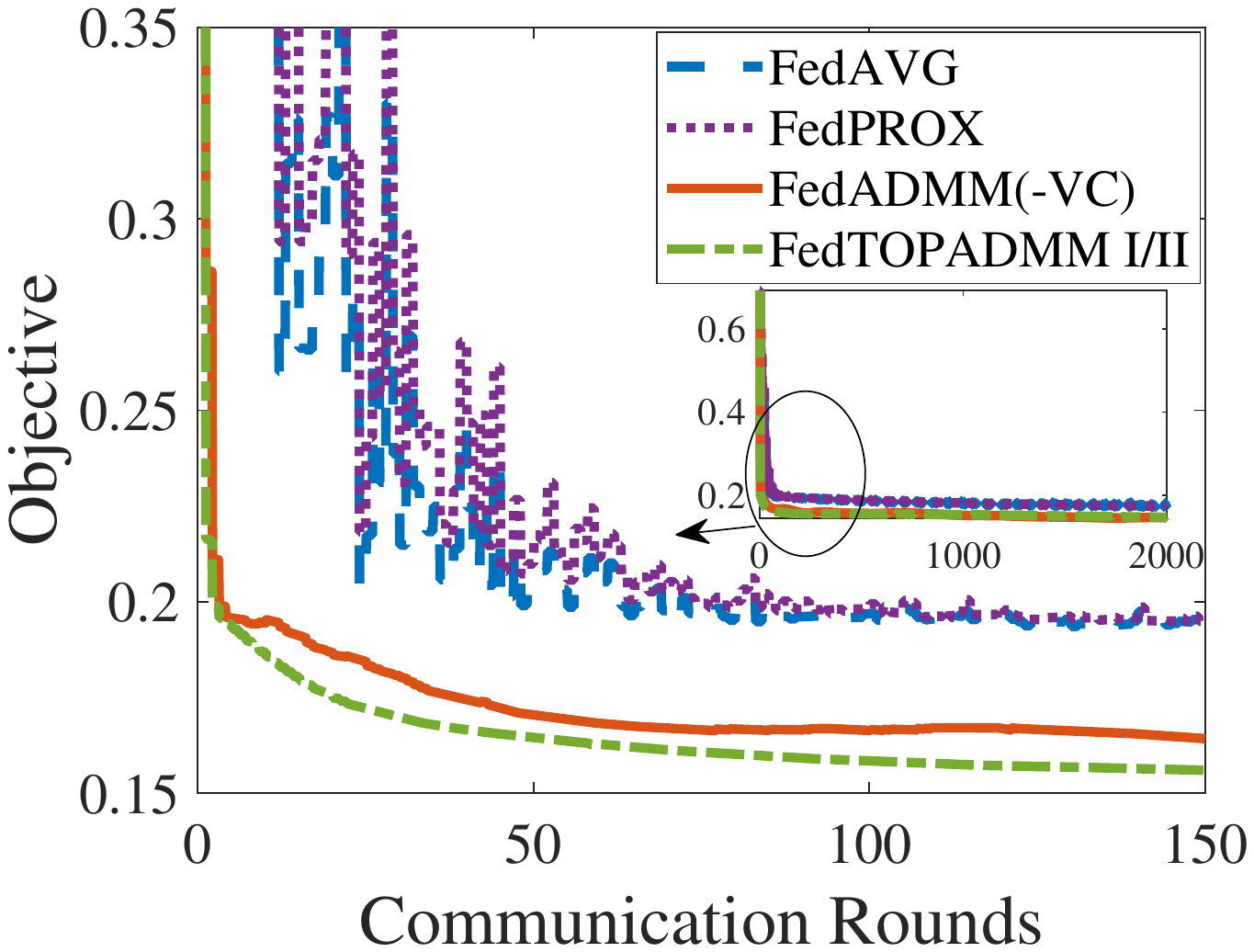}
    \caption{Objective vs. communication rounds}\label{fig_chD:cifar100_IID_J10_obj_vs_CR}
  \end{subfigure}
  \begin{subfigure}[t]{0.495\textwidth}
    \centering
    \includegraphics[width=\textwidth,trim=32.5mm 85mm 31mm 88mm,clip]{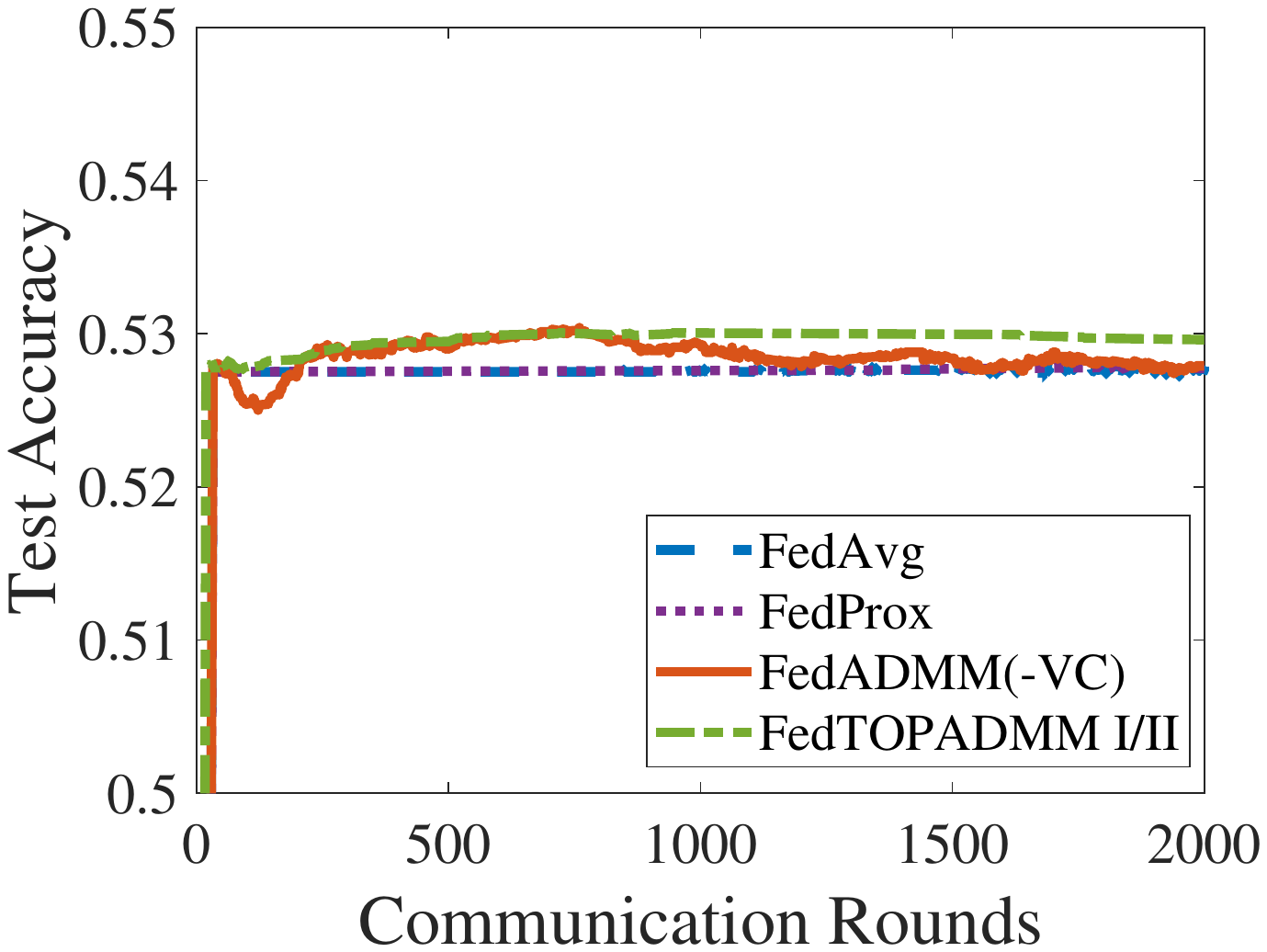}
    \caption{Test accuracy vs. communication rounds}\label{fig_chD:cifar100_IID_J10_Test_accuracy_vs_CR} 
  \end{subfigure}
    \caption{\skblack{Performance comparison of proposed {\FedTOPADMM\texttt{I/II}} with considered existing methods for CIFAR-100 \ac{i.i.d.} dataset (coarse labels) and $J=10$.} }\label{fig_chD:cifar100_IID__J10}
  \end{minipage}
\end{figure*}

\begin{figure}[!htp]
    \centering
    \begin{minipage}[t]{0.625\linewidth}
    \centering
    \includegraphics[width=\textwidth,trim=32.5mm 85mm 31mm 88mm,clip]{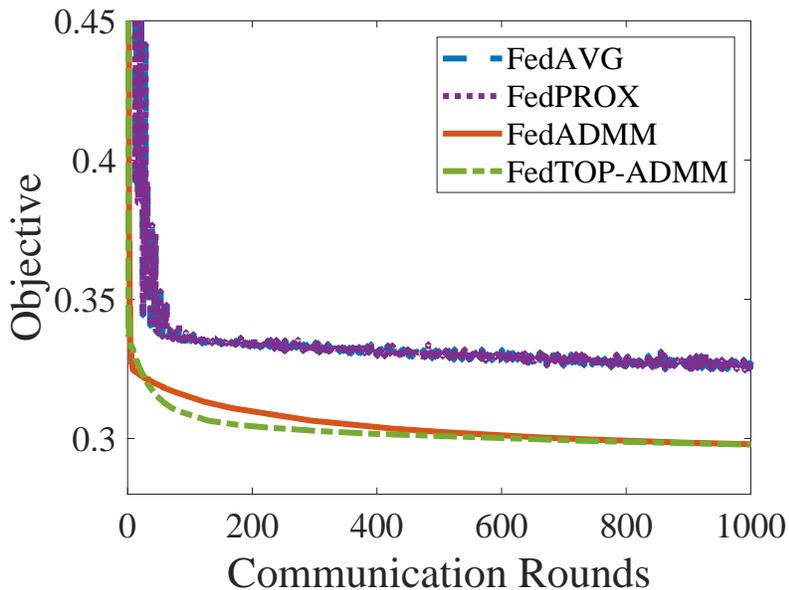}
    \caption{{\skblack{Objective vs. iterations for $J=10$ and CIFAR-10 \ac{i.i.d.} dataset.}}}\label{fig_chD:cifar10_obj_vs_CR__J10}
    \end{minipage}
\end{figure}



Figure~\ref{fig_chD:convergence_analysis_proposed_three_operator_admm_types_with_prior_arts} illustrates the convergence behaviour of {\FedAvg}~\cite{McMahan__FedAvg__2017}, {\FedProx}~\cite{Tian_Li_etal__FedProx__2020}, {\FedADMM} \cite{zhou_and_li_fedadmm2022}, and our proposed {\FedTOPADMM} for some chosen set of parameters from the candidate set. Based on these numerical results, we have selected the following parameters of the respective methods: 1) {\FedAvg} :- $\eta\!=\!10^{-5}$, 2) {\FedProx}:- $\eta\!=\!10^{-5}; \mu\!=\!0.5$, 3) {\FedADMM} /{\FedADMM\texttt{-VC}} :- $\texttt{mean}\left(\left\{ \rho_m \right\}\right)\!=\!3.4035e\!-\!3$, and 4) {\FedTOPADMMI} /{\FedTOPADMMII} :- $\texttt{mean}\left(\left\{ \rho_m \right\}\right)\!=\!3.4035e\!-\!3$, $\tau^{\left(0\right)}\!=\!1e\!-\!8$, $\zeta^{\left(0\right)}\!=\!2.5$, and $\gamma\!=\!1.999$ unless otherwise mentioned. 

Figure~\ref{fig_chD:cmp_perf_fedtopadmm1_and2__fedadmm_and_fedadmmvc} compares the performance in terms of both loss function or objective~\eqref{eqn_chD:objective_log_reg} 
and test accuracy, among {\FedTOPADMMI}, {\FedTOPADMMII}, {\FedADMM}, and \allowbreak{\FedADMM\texttt{-VC}} with the aforementioned chosen parameters. Additionally, Figure~\ref{fig_chD:cmp_among_all_FL_algos__for_three_Js__obj_vs_iter} compares the performance for $J\!\in\!\left\{1\!,5\!,10\right\}$. Noticeably, these results substantiate our argument that exploiting the data knowledge on the edge server using our proposed {\FedTOPADMM} schemes outperform  {\FedADMM\texttt{-VC}}, \ie, with a virtual client. Moreover, these results indicate that {\FedTOPADMMII} has non-noticeable performance loss compared to  {\FedTOPADMMI} when $J \!>\!1$. For instance, with $J\!=\!10$, {\FedTOPADMM} has a gain of up to 33\% in the communication efficiency with respect to {\FedADMM} to reach a test accuracy of 98\%. Furthermore, as mentioned before,  {\FedTOPADMMII} boils down to {\FedADMM} when $J\!=\!1$.

\skblack{
In Figure~\ref{fig_chD:mnist_nonIID_case0_J10}, we compare the performance of {\FedTOPADMM} with {\FedADMM} and {\FedProx} under non-\ac{i.i.d.} distribution of the MNIST dataset for $J\!=\!10$. Recall that this non-\ac{i.i.d.} data split is one of the pathological cases because each user or base station would have at most two class labels. Nevertheless, {\FedProx} performs slightly better than {\FedAvg}. However, both {\FedTOPADMM} and {\FedADMM} outperform {\FedProx} and {\FedAvg}. Further, {\FedTOPADMM} has a gain of up to 27\% in the communication efficiency with respect to {\FedADMM} to reach a test accuracy of 97\% under non-\ac{i.i.d.} distribution of MNIST dataset. Observe that we have changed the tunable parameters of all the methods compared to the previous results of MNIST. In particular, we have chosen the following parameters of respective methods: a) {\FedTOPADMM\texttt{I/II}} ($\texttt{mean}\left(\left\{ \rho_m \right\}\right)\!=\!6.5731e\!-\!6$, $\tau^{\left(0\right)}\!=\!1e\!-\!7$, $\zeta^{\left(0\right)}\!=\!1.5$), b) {\FedADMM\texttt{(-VC)}} ($\texttt{mean}\left(\left\{ \rho_m \right\}\right)\!=\!6.5731e\!-\!6$), c) {\FedProx} ($\eta\!=\!1e\!-\!3$; $\mu\!=\!0.5$), and d) {\FedAvg} ($\eta\!=\!0.5e\!-\!3$). For completeness, in Figure~\ref{fig_chD:mnist_nonIID__vs_IID__case0_J10_obj_vs_CR__ADMM_and_TOPADMM}, we compare the performance of non-\ac{i.i.d.} with \ac{i.i.d.} data considering the same tunable parameters used for Figure~\ref{fig_chD:mnist_nonIID_case0_J10}, where the performance under \ac{i.i.d.} data is unsurprisingly slightly better than the performance under non-\ac{i.i.d.} data.   
}

\skblack{
Lastly, in Figure~\ref{fig_chD:cifar100_IID__J10} and Figure~\ref{fig_chD:cifar10_obj_vs_CR__J10}, we present the performance for CIFAR-100 and CIFAR-10 dataset, respectively. We can construe the similar performance trend as observed in MNIST. 
}

\section{Conclusions} \label{sec_chD:conclusion_future_work}


In this paper, we proposed a novel {\FedTOPADMM} algorithmic framework for \linebreak communication-efficient \ac{FL} utilizing our recently proposed consensus \ac{TOP-ADMM} algorithm, which can tackle the sum of three composite functions in a distributed manner. Specifically, we developed two variants of {\FedTOPADMM}, namely {\FedTOPADMMI} and {\FedTOPADMMII} that learn a considered global machine learning model using data on both the edge server and the users. 
Our experiments showed that FedTOP-ADMM has a significant gain of up to 33\% in the communication efficiency with respect to {\FedADMM} to reach a desired test accuracy of 98\% using the proposed scaling of the MNIST dataset.
{For future works, we intend to establish the convergence analysis of FedTOP-ADMM for $J \! > \! 1$ and enhanced TOP-ADMM. Moreover, we intend to investigate the scheduling of edge devices to participate in the \ac{FL} 
using FedTOP-ADMM, as well as the power allocation of the selected devices.}


\begin{appendices}


\section{Some Useful Lemmas and Definitions}\label{appendix:use_lemmas_def}
We present herein some useful definitions, propositions and lemmas that are important to \ac{ADMM} methods.

{
\begin{definition}[$L$-smooth function~{\cite{Bauschke:2011, Beck2017}}]  \label{def_ChD:definition_of_lipschitz_continous_gradient}
A differentiable function $f\!:\!\Cm^n\! \rightarrow\! \Rm$ {is $L$-smooth, \ie,} has $L$-Lipschitz continuous gradient (for $L \!>\! 0$) if
$\left\| \nabla \!f\!\left(\vec{z}_1\right)  \!-\! \nabla\! f\!\left(\vec{z}_2\right) \right\| \!\leq\! L \!\left\|\! \vec{z}_1 \!-\! \vec{z}_2 \!\right\| \:, \forall \vec{z}_1,\! \vec{z}_2 \!\in\! \Cm^n$.
\end{definition}
}

\begin{definition}[Subgradient{\cite[Definition~16.1]{Bauschke:2011}}] \label{def_ChD:definition_of_subgradient}
Given a proper function $f\!:\!\Cm^n\! \rightarrow\! \Rm$, a vector $\vec{g} \! \in \! \Cm^n$ is denoted as a subgradient of $f\!\left(\vec{z}\right)$ at some point~$\vec{x}$~if 
\begin{align}
     f\left(\vec{z}\right) 
     &\geq f\left(\vec{x}\right) + 2 \Re \left\{ \vec{g}^\herm \left(\vec{z} - \vec{x} \right)  \right\}, \quad \forall \vec{z} \in \Cm^n.
\end{align}
\end{definition}

{
\begin{definition}[Proximal mapping~\cite{Parikh2013}\cite{Beck2017}] \label{definition_chD:prox_operator}
Let us consider a proper closed convex function $f: \dom_f \mapsto \left(-\infty\right., \left. +\infty \right]$, where $\dom_f$ corresponds to the domain of a function $f$. Then, the proximal mapping of $f$ is the operator given by: 
\begin{align*}
    \prox_{\lambda f}\left( \vec{x}\right) 
    &\coloneqq\! \left( \I + \lambda \partial f \right)^{-1} \left( \vec{x} \right), \\ 
    &= \!\arg\min_{\vec{z} \in  \dom_f} \left\{ f\left(\vec{z}\right) \! + \! \frac{1}{\beta \lambda} \left\|\vec{x} \! - \! \vec{z} \right\|_2^2\right\}, 
\end{align*}
for any $\vec{x} \! \in \!  \dom_f$, where $\partial f$ is a subdifferential of $f$~\cite{Boyd2004ConvexOptimization, Bauschke:2011}, and $\lambda \! > \! 0$. If $\vec{z}$ is complex-valued or real-valued, $\beta = 1$ or $\beta=2$, respectively. Note that the proximal operator to an indicator function becomes an orthogonal projection, \ie, $\prox_{\lambda \Ind_{\pazocal{C}}}\left( \vec{z}\right) \!=\! \proj_{\pazocal{C}}\left( \vec{z}\right)$.
\end{definition}
}

\section{Convergence Analysis of TOP-ADMM}\label{appendix:convergence_analysis_topadmm}

{
To establish the convergence of \ac{TOP-ADMM} algorithm~\ref{eqn_chD:generalized_top_admm_algorithm_iterates__for_convergence}, we first present two {standard assumptions from} \ac{ADMM} proofs in the literature followed by five lemmas in the sequel. Subsequently, these assumptions and five lemmas are required to prove Proposition~\ref{prop_chD:asymptotic_residual_error_and_objective}, which guarantees that the primal and dual residual errors vanish asymptotically. Finally, we establish the global convergence of \ac{TOP-ADMM}, \ie, proof of Theorem~\ref{thm_ChD:definition_of_topadmm_algorith__feasible_problem}. Although the proof structure is inspired by the 
convergence results from the classic \ac{ADMM}~\cite{Boyd2011}, our convergence analysis results are new from the sum of three functions with consensus constraints perspective, \ie, for \ac{TOP-ADMM}.
}



Towards the convergence analysis goal, we define the augmented Lagrangian to problem \eqref{eqn_chD:general_consensus_top_admm__generic_form} as
\begin{align} \label{eqn_chD:augmented_Lagrangian_top_admm__generic_form}
    \pazocal{L}_{\rho}&\left(\left\{ {\vec{x}}_m \right\}_{m=1}^M\!,  {\vec{z}}, \left\{{\vec{y}}_m \right\}_{m=1}^M \right) \nonumber \\ 
    \!\coloneqq& \! \sum_{m=1}^M  f_m \left( {{\vec{x}}_m} \right)   \!+\! g\left( {{\vec{z}}} \right) \!+\! \beta h\!\left( {{\vec{z}}} \right)  \nonumber \\ 
    &+ \! \sum_{m=1}^M 2\!\Re\!\left\{ {\vec{y}}_m^\herm \left( {\vec{x}}_m \! - \! {\vec{z}} \right) \right\} \!+ \! \sum_{m=1}^M  \rho \left\| {\vec{x}}_m \! - \! {\vec{z}} \right\|_2^2,
\end{align}
and, for brevity, we define {the} objective value at iteration~$i$~as
\begin{align}\label{eqn_chD:obj_value_iter_i}
{p}^{\left( i \right)} \! \coloneqq \! \sum_{m=1}^M \! f_m\! \left(\! {{\vec{x}}_m^{\left(i \right)}} \! \right)    \! + \! g\!\left( \! {{\vec{z}^{\left(i \right)}}} \! \right) \! + \!   \beta h\!\left( \! {{\vec{z}^{\left(i \right)}}}  \right).
\end{align}
Then, let us consider two assumptions that are standard in \ac{ADMM} literature~\cite{Boyd2011, Glowinski2016}.

\begin{assumption} \label{assumption:L0_lagrangian_saddle_point}
Let $\left( \left\{ {\vec{x}}_m^\star \right\}_{m=1}^M,  {\vec{z}^\star}, \left\{{\vec{y}}_m^\star \right\}_{m=1}^M\right)$ be a saddle point for the (unaugmented) Lagrangian $\pazocal{L}_{0}$ in \eqref{eqn_chD:augmented_Lagrangian_top_admm__generic_form}. Specifically, the following holds for all $ \left\{ {\vec{x}}_m \right\}_{m=1}^M,  {\vec{z}}, \left\{{\vec{y}}_m \right\}_{m=1}^M$:
\begin{align*}
    \pazocal{L}_{0}\!\left(\!\left\{ {\vec{x}}_m^\star \right\}_{m=1}^M\!,  {\vec{z}^\star}\!, \left\{{\vec{y}}_m \right\}_{m=1}^M \!\right) \!
    \!&\leq\! \pazocal{L}_{0}\!\left(\!\left\{ {\vec{x}}_m^\star \right\}_{m=1}^M\!,  {\vec{z}^\star}\!, \left\{{\vec{y}}_m^\star \right\}_{m=1}^M \!\right),\\ 
    \!&\leq \!\pazocal{L}_{0}\!\left(\!\left\{ {\vec{x}}_m \right\}_{m=1}^M\!,  {\vec{z}}, \left\{{\vec{y}}_m^\star \right\}_{m=1}^M \!\right).  
\end{align*}
\end{assumption}

\begin{assumption} \label{assumption:solution_each_subproblem}
Consider subproblems 
\eqref{eqn_chD:update_xm__step1_parallel__general_top_admm__prox__for_convergence} and \eqref{eqn_chD:update_z__step2__general_top_admm__prox__for_convergence}.  We assume that each subproblem has at least {one} solution. 
\end{assumption}

Note that {Assumption~\ref{assumption:solution_each_subproblem}} does not require the uniqueness of the solution. 

\begin{lemma} \label{lemma:diff_between_optimal_objective_and_at_iteration_kplus1}
Consider the optimal objective, $p^\star$, and the objective at iteration $i\!+\!1$, $p^{\left( i+1 \right)}$ defined in~\eqref{eqn_chD:obj_value_iter_i}. Then, the difference between the optimal objective and the objective at iteration $i\!+\!1$ is as follows: 
$$
    p^\star - p^{\left( i+1 \right)} \leq \sum_{m=1}^M 2\Re\left\{ \left(\vec{y}_m^\star\right)^\herm \Delta\vec{r}_m^{\left( i + 1 \right)} \right\}.
$$
\begin{proof}
Using the primal feasibility ${\vec{x}}_m^\star \! - \! {\vec{z}}^\star \! = \! \vec{0} \ \forall m\!=\!1,\ldots,M$, and Assumption~\ref{assumption:L0_lagrangian_saddle_point} (or duality theory \cite{Boyd2004ConvexOptimization}), let us write
\begin{equation*}
    p^{\star}\!\coloneqq\!\pazocal{L}_{0}\left(\left\{ {\vec{x}}_m^\star \right\}_{m=1}^M,  {\vec{z}^\star}, \left\{{\vec{y}}_m^\star \right\}_{m=1}^M \right) 
\end{equation*}
such that  
\begin{equation*}
    p^{\star} \leq \! \pazocal{L}_{0}\left(\left\{ {\vec{x}}_m^{\left(i+1 \right)} \right\}_{m=1}^M,  {\vec{z}^{\left(i+1 \right)}}, \left\{{\vec{y}}_m^\star \right\}_{m=1}^M \right).
\end{equation*}
Then, we can write 
\begin{align*}
p^{\star}
    \! \leq & \sum_{m=1}^M \!\Biggl[ \! f_m \!\left(\! {{\vec{x}}_m^{\left(i+1 \right)}} \!\right)  \!+\! g\!\left(\! {{\vec{z}^{\left(i+1 \right)}}} \!\right) \! + \!   \beta h\!\left( {{\vec{z}^{\left(i+1 \right)}}} \! \right) \!\Biggr] \nonumber \\
    & +\! \sum_{m=1}^M \!2\Re\!\left\{\!  \left(\vec{y}_m^\star\right)^\herm \left( \!{\vec{x}}_m^{\left(i+1 \right)} \! - \! {\vec{z}^{\left(i+1 \right)}} \right) \!\right\},    
\end{align*}
which can be rewritten as $p^{\star}\! \leq\! p^{\left( i+1 \right)} \! + \! \sum_{m=1}^M 2\Re\!\left\{\! \left(\vec{y}_m^\star\right)^\herm \Delta\!\vec{r}_m^{\left( i + 1 \right)} \right\}$,
where $\Delta\vec{r}_m^{\left( i + 1 \right)} \!=\!\! \left( {\vec{x}}_m^{\left( i+1 \right)} \! - \! {\vec{z}}^{\left( i+1 \right)} \right)$, as in Proposition~\ref{prop_chD:asymptotic_residual_error_and_objective}.
\end{proof}
\end{lemma}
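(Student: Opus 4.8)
The plan is to exploit the saddle-point characterization in Assumption~\ref{assumption:L0_lagrangian_saddle_point} together with the primal feasibility of the optimal point, working throughout with the \emph{unaugmented} Lagrangian $\pazocal{L}_0$ (i.e., setting $\rho=0$ in \eqref{eqn_chD:augmented_Lagrangian_top_admm__generic_form}) so that no quadratic penalty terms enter the estimate and the bound is expressed purely through the primal residual.

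First I would observe that since $\left(\{\vec{x}_m^\star\}, \vec{z}^\star, \{\vec{y}_m^\star\}\right)$ is a saddle point, the optimal point is primal feasible, i.e., $\vec{x}_m^\star - \vec{z}^\star = \vec{0}$ for all $m=1,\ldots,M$. Consequently the linear multiplier terms $2\Re\{(\vec{y}_m^\star)^\herm(\vec{x}_m^\star - \vec{z}^\star)\}$ vanish, so that $\pazocal{L}_0\left(\{\vec{x}_m^\star\}, \vec{z}^\star, \{\vec{y}_m^\star\}\right)$ reduces exactly to the optimal objective value $p^\star$. This identification is what lets us write $p^\star$ as a Lagrangian value and then compare it against the iterates.

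Next I would invoke the second (right-hand) inequality of the saddle-point condition, which asserts that $\pazocal{L}_0$ at the optimal primal--dual triple is no larger than $\pazocal{L}_0$ evaluated at any primal pair with the dual held fixed at $\{\vec{y}_m^\star\}$. Instantiating this with the generated iterates $\left(\{\vec{x}_m^{(i+1)}\}, \vec{z}^{(i+1)}\right)$ yields $p^\star \leq \pazocal{L}_0\left(\{\vec{x}_m^{(i+1)}\}, \vec{z}^{(i+1)}, \{\vec{y}_m^\star\}\right)$.

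Finally I would expand this right-hand Lagrangian via \eqref{eqn_chD:augmented_Lagrangian_top_admm__generic_form} with $\rho=0$: the objective part is precisely $p^{(i+1)}$ by the definition in \eqref{eqn_chD:obj_value_iter_i}, while the remaining contribution is $\sum_{m=1}^M 2\Re\{(\vec{y}_m^\star)^\herm(\vec{x}_m^{(i+1)} - \vec{z}^{(i+1)})\}$. Setting $\Delta\vec{r}_m^{(i+1)} = \vec{x}_m^{(i+1)} - \vec{z}^{(i+1)}$ and rearranging gives the claimed inequality. There is no genuine obstacle here beyond a bookkeeping point that nonetheless deserves care: one must consistently use the $\rho=0$ Lagrangian so that the quadratic penalty terms are absent, and one must apply the correct branch of the saddle-point inequality (minimization over the primal block with the dual frozen at $\{\vec{y}_m^\star\}$); the rest is a direct application of weak/strong duality.
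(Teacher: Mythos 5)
Your proposal is correct and follows essentially the same route as the paper: identify $p^\star$ with $\pazocal{L}_{0}$ at the optimal triple via primal feasibility, apply the right-hand branch of the saddle-point inequality in Assumption~\ref{assumption:L0_lagrangian_saddle_point} with the dual frozen at $\left\{\vec{y}_m^\star\right\}$ and the primal block replaced by the iterates, and expand the unaugmented Lagrangian to obtain $p^{\left(i+1\right)}$ plus the multiplier terms $\sum_{m=1}^M 2\Re\left\{ \left(\vec{y}_m^\star\right)^\herm \Delta\vec{r}_m^{\left(i+1\right)} \right\}$. Your explicit remark about consistently using the $\rho=0$ Lagrangian is exactly the bookkeeping the paper performs implicitly by working with $\pazocal{L}_{0}$ throughout.
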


The following lemma will be useful in {Lemma~\ref{lemma:diff_between_objective_at_iteration_kplus1_and_optimal_objective}}.

\begin{lemma}[{Three-point inequality}] \label{lemma:imp_relation_among_Lipschitz_gradient}
Let {the} convex and differentiable\footnote{{By differentiability, we mean Wirtinger complex gradient exist for functions $f:\Cm^n \rightarrow \Rm$, see, \eg, \cite{hjorungnes:2011}}.} function $h:\Cm^n \rightarrow \Rm$ {have} $L$-Lipschitz continuous gradient (for $L \! \geq \! 0$), where $\vec{z} \! \in \! \Cm^n$, then $\forall \vec{z}^{\left( i + 1 \right)}, \vec{z}^{\left( i \right)}$, the following inequality holds:
\begin{align} \label{eqn_chD:imp_relation_among_Lipschitz_gradient}
    2 \Re\!\left\{ \!\nabla h\left(\vec{z}^{\left( i  \right)}\!\right)^{\!\herm} \left(\! \vec{z}^{\left( i + 1 \right)} \! - \! \vec{z} \!\right) \!\right\} 
    \!\geq& h\!\left(\vec{z}^{\left( i + 1 \right)}\right) \!- \! h\!\left(\vec{z}\right)  \nonumber \\ 
    &- \!  L\! \left\| \vec{z}^{\left( i + 1 \right)} \! - \! \vec{z}^{\left( i \right)} \right\|^2 \!.
\end{align}
\begin{proof} It follows by applying the descent lemma in~\cite{Bauschke:2011}, and the convexity of $h$.
\end{proof}
\end{lemma}

In the subsequent lemma, we will use the dual residual error definition: $
    \Delta\vec{x}_m^{\left( i+1 \right)} \!\coloneqq\! \left( {\vec{x}}_m^{\left( i+1 \right)} \! - \! \vec{x}_m^{\left( i \right)} \right).
$
Observe that we will use Lemma~\ref{lemma:diff_between_optimal_objective_and_at_iteration_kplus1} and the following Lemma~\ref{lemma:diff_between_objective_at_iteration_kplus1_and_optimal_objective} in Lemma~\ref{lemma:diff_between_lyapunov_function_at_iter_kplus1_and_iter_k}.

\begin{lemma} \label{lemma:diff_between_objective_at_iteration_kplus1_and_optimal_objective}
The difference between the achieved objective at iteration $i+1$, \ie, $p^{\left( i+1 \right)}$, and the optimal objective, $p^\star$, is 
\begin{align}
    p^{\left( i+1 \right)} \! - \! p^{\star} 
    \!\leq &- \!\sum_{m=1}^M \! 2\! \Re\!\Biggl\{\! \left({\vec{y}}_m^{\left(i+1\right)} \! \right)^{\!\herm}  {\Delta\vec{r}}_m^{\left( i + 1\right)}  \nonumber \\ 
    &\hspace{10.5mm}+ \! \rho \!  \left( \!  {\vec{z}}^{\left( i + 1\right)} \! - \! {\vec{z}}^{\left( i \right)}  \right)^\herm
    \left( {\vec{x}}_m^{\left( i + 1\right)} - \vec{x}_m^\star \right) \Biggr\} \nonumber \\ 
    &+\! \rho M \tau L \! \left\| {\Delta\vec{z}}^{\left( i+1 \right)} \right\|_2^2.
\end{align}
\end{lemma}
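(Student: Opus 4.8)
The plan is to bound $p^{(i+1)} - p^\star$ by exploiting the first-order optimality (subgradient) conditions of the two subproblems \eqref{eqn_chD:update_xm__step1_parallel__general_top_admm__prox__for_convergence} and \eqref{eqn_chD:update_z__step2__general_top_admm__prox__for_convergence}, and then combining them with the three-point inequality of Lemma~\ref{lemma:imp_relation_among_Lipschitz_gradient}. First I would split the gap into its three natural pieces, $\sum_{m} \left[f_m(\vec{x}_m^{(i+1)}) - f_m(\vec{x}_m^\star)\right]$, $g(\vec{z}^{(i+1)}) - g(\vec{z}^\star)$, and $\beta\left[h(\vec{z}^{(i+1)}) - h(\vec{z}^\star)\right]$, and bound each term separately using convexity at the optimizer $(\{\vec{x}_m^\star\}, \vec{z}^\star, \{\vec{y}_m^\star\})$.

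For the $f_m$ piece, the optimality of the $\vec{x}_m$-update supplies a subgradient $\vec{s}_m^{(i+1)} = -\rho(\vec{x}_m^{(i+1)} - \vec{z}^{(i)}) - \vec{y}_m^{(i)} \in \partial f_m(\vec{x}_m^{(i+1)})$; substituting the dual update \eqref{eqn_chD:update_dual_ym__step3_parallel__general_top_admm} to eliminate $\vec{y}_m^{(i)}$ rewrites this cleanly as $\vec{s}_m^{(i+1)} = -\rho(\vec{z}^{(i+1)} - \vec{z}^{(i)}) - \vec{y}_m^{(i+1)}$. Invoking the subgradient inequality of Definition~\ref{def_ChD:definition_of_subgradient} at $\vec{x}_m^\star$ then gives $f_m(\vec{x}_m^{(i+1)}) - f_m(\vec{x}_m^\star) \leq 2\Re\{(\vec{s}_m^{(i+1)})^\herm(\vec{x}_m^{(i+1)} - \vec{x}_m^\star)\}$, which produces exactly the $-2\Re\{(\vec{y}_m^{(i+1)})^\herm(\vec{x}_m^{(i+1)} - \vec{x}_m^\star)\}$ and $-2\Re\{\rho(\vec{z}^{(i+1)} - \vec{z}^{(i)})^\herm(\vec{x}_m^{(i+1)} - \vec{x}_m^\star)\}$ contributions.

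For the $g$ piece, the optimality of the $\vec{z}$-update, again folding in the dual update and using that the gradient step appears in each of the $M$ blocks, yields $\vec{t}^{(i+1)} = \sum_{m}\vec{y}_m^{(i+1)} - M\rho\tau\nabla h(\vec{z}^{(i)}) \in \partial g(\vec{z}^{(i+1)})$. The subgradient inequality at $\vec{z}^\star$ then gives the terms $\sum_m 2\Re\{(\vec{y}_m^{(i+1)})^\herm(\vec{z}^{(i+1)} - \vec{z}^\star)\}$ together with the cross-term $-2M\rho\tau\Re\{\nabla h(\vec{z}^{(i)})^\herm(\vec{z}^{(i+1)} - \vec{z}^\star)\}$. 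For the smooth $\beta h$ piece I would apply Lemma~\ref{lemma:imp_relation_among_Lipschitz_gradient} with $\vec{z} = \vec{z}^\star$ to obtain $\beta[h(\vec{z}^{(i+1)}) - h(\vec{z}^\star)] \leq 2\beta\Re\{\nabla h(\vec{z}^{(i)})^\herm(\vec{z}^{(i+1)} - \vec{z}^\star)\} + \beta L\|\Delta\vec{z}^{(i+1)}\|_2^2$.

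Summing the three bounds finishes the proof. The two $\nabla h(\vec{z}^{(i)})$ cross-terms (one from $\vec{t}^{(i+1)}$, one from Lemma~\ref{lemma:imp_relation_among_Lipschitz_gradient}) cancel precisely because the forward-backward step size is consistent with the block splitting, i.e. $\beta = M\rho\tau$, which simultaneously turns the leftover $\beta L\|\Delta\vec{z}^{(i+1)}\|_2^2$ into the claimed residual $\rho M\tau L\|\Delta\vec{z}^{(i+1)}\|_2^2$. The surviving $\vec{y}_m^{(i+1)}$-terms combine into $-\sum_m 2\Re\{(\vec{y}_m^{(i+1)})^\herm[(\vec{x}_m^{(i+1)} - \vec{z}^{(i+1)}) - (\vec{x}_m^\star - \vec{z}^\star)]\}$, and primal feasibility $\vec{x}_m^\star = \vec{z}^\star$ collapses this to $-\sum_m 2\Re\{(\vec{y}_m^{(i+1)})^\herm \Delta\vec{r}_m^{(i+1)}\}$. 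I expect the main obstacle to be the bookkeeping of the gradient term: one must correctly track the factor $M\rho\tau$ generated by summing the gradient step over all $M$ blocks and recognize its exact cancellation against the $\beta$-scaled term, all while respecting the complex-gradient convention in which the quadratic penalty $\rho\|\cdot\|^2$ contributes a subgradient $\rho(\cdot)$ paired with the $2\Re\{\cdot\}$ form of Definition~\ref{def_ChD:definition_of_subgradient}.
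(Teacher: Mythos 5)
Your proposal is correct and follows essentially the same route as the paper's proof: the same subgradient characterizations of the $\vec{x}_m$- and $\vec{z}$-updates (after folding in the dual update), the same application of the three-point inequality of Lemma~\ref{lemma:imp_relation_among_Lipschitz_gradient} at $\vec{z}^\star$, the same identification $\beta=\rho M\tau$ to cancel the $\nabla h(\vec{z}^{(i)})$ cross-terms, and the same use of primal feasibility $\vec{x}_m^\star=\vec{z}^\star$ to collapse the multiplier terms into $\Delta\vec{r}_m^{(i+1)}$. The only difference is organizational (you split the objective gap into three pieces and bound each, whereas the paper adds the combined subproblem inequalities and rearranges), which does not change the argument.
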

\begin{proof}

{
We know {that} $\vec{x}_m^{\left( i + 1 \right)}$ minimizes the sequence update
 defined in \eqref{eqn_chD:update_xm__step1_parallel__general_top_admm__prox__for_convergence} such that 
\begin{align} \label{eqn_chD:x_m_kplus1_minimizes__top_admm_update__2}
    \vec{0} \!  
    &\in \!  \partial f_m\!\left( \!  {\vec{x}}_m^{\left( i + 1\right)} \!\right) \! +  \!  \rho \!  \left( \!  {\vec{x}}_m^{\left( i + 1\right)} \! - \! {\vec{z}}^{\left( i \right)} \! + \! \frac{{\vec{y}}_m^{\left( i \right)}}{\rho} \!\right), \nonumber \\
\Longleftrightarrow&
    \partial f_m\!\left( \!  {\vec{x}}_m^{\left( i + 1\right)} \right) \!
    \ni  - \! {\vec{y}}_m^{\left(i+1\right)}  \!- \! \rho \!  \left( \!  {\vec{z}}^{\left( i + 1\right)} \! - \! {\vec{z}}^{\left( i \right)}  \right),
\end{align}
where we use dual update~\eqref{eqn_chD:update_dual_ym__step3_parallel__general_top_admm}, the notation $\partial$ denotes subdifferential~\cite{Boyd2004ConvexOptimization, Bauschke:2011}, 
and $\Longleftrightarrow$ means if and only if and also overloaded as an equivalent operator.
}

Using the subgradient Definition~\ref{def_ChD:definition_of_subgradient}, we then have 
\begin{align*}
f_m\!\left( \!  {\vec{x}}_m \right) 
    \geq&
    f_m\left( \!  {\vec{x}}_m^{\left( i + 1\right)} \right) 
    \nonumber \\ 
    & + \! 2  \Re\!\Biggl\{ \! \Biggl( \!- \!  {\vec{y}}_m^{\left(i+1\right)} 
    \! - \! \rho \!  \left( \!  {\vec{z}}^{\left( i + 1\right)} \! - \! {\vec{z}}^{\left( i \right)}  \right)\!\Biggr)^\herm \!\left( {\vec{x}}_m  \!-\! {\vec{x}}_m^{\left( i + 1\right)} \right) \Biggr\},    
\end{align*}
such that
\begin{align}
\label{eqn_chD:x_m_kplus1_minimizes__top_admm_update__without_alpha}
f_m\!&\left( \!  {\vec{x}}_m  \!\right)  \!+ \! 2 \Re\!\left\{\! \left(\! {\vec{y}}_m^{\left(i+1\right)}  \! + \! \rho \!  \left( \!  {\vec{z}}^{\left( i + 1\right)} \! - \! {\vec{z}}^{\left( i \right)} \! \right)\!\right)^\herm \!{\vec{x}}_m \!\right\} \nonumber \\ 
    &\geq \! f_m\!\left( \!  {\vec{x}}_m^{\left( i + 1\right)} \!\right) \!+  \! 2 \Re\!\left\{\! \left( \!{\vec{y}}_m^{\left(i+1\right)}  \! + \! \rho \!  \left( \!  {\vec{z}}^{\left( i + 1\right)} \! - \! {\vec{z}}^{\left( i \right)}  \!\right)\!\right)^\herm \! {\vec{x}}_m^{\left( i + 1\right)} \!\right\}. 
\end{align}


Similarly, we also know {that} $\vec{z}^{\left( i + 1 \right)}$
minimizes \eqref{eqn_chD:update_z__step2__general_top_admm__prox__for_convergence}. Then, let us consider $\beta \!\coloneqq\! \rho M \tau$ such that
\begin{align} \label{eqn_chD:z_kplus1_minimizes__top_admm_update__1}
    \vec{0} 
    \!\in& \partial g\!\left( {\vec{z}}^{\left( i+1 \right)} \right) \! + \! \underbrace{\rho M \tau}_{\coloneqq \beta} \nabla h \! \left( \! {\vec{z}}^{\left( i \right)} \right) \nonumber \\ 
    &- \! \sum_{m=1}^M \!\left(\! \underbrace{\rho \left( \! {\vec{x}}_m^{\left( i+1 \right)} \! - \! {\vec{z}}^{\left( i+1 \right)} \right) \! + \! {\vec{y}}_m^{\left( i \right)} }_{= {\vec{y}}_m^{\left( i+1 \right)}} \!\right), \nonumber \\
   \Longleftrightarrow &
    \partial \! g\!\left( {\vec{z}}^{\left( i+1 \right)} \right) 
    \! \ni - \! \beta \nabla h \! \left( \! {\vec{z}}^{\left( i \right)} \right) \!+ \! \sum_{m=1}^M {\vec{y}}_m^{\left( i+1 \right)}.
\end{align}

Hence, using \eqref{eqn_chD:z_kplus1_minimizes__top_admm_update__1} and the subgradient Definition~\ref{def_ChD:definition_of_subgradient}, note that $$ g\left( {\vec{z}}^{\star} \right) 
    \! \geq \! g\left( {\vec{z}}^{\left( i\!+\!1 \right)} \right) 
    \!+ \! 2\!\Re\!\left\{\! \left( -\beta \! \nabla \!h \! \left( \! {\vec{z}}^{\left( i \right)} \! \right) \!+ \! \sum_{m=1}^M \! {\vec{y}}_m^{\left( i\!+\!1 \right)}  \right)^{\!\herm} \! \left(\! {\vec{z}}^{\star} \!-\! {\vec{z}}^{\left( i\!+\!1 \right)}\!\right) \!\right\}$$.
Thus, the following inequalities hold:
\begin{align}
    g\!\left( {\vec{z}}^{\star} \right) &\! - \!\sum_{m=1}^M  2\Re\!\left\{\! \left(\! {\vec{y}}_m^{\left( i+1 \right)} \! \right)^\herm \!{\vec{z}}^{\star} \!\right\} \nonumber \\ \!
    \geq& \! g\!\left(\! {\vec{z}}^{\left( i+1 \right)} \right) \!
    - \! \sum_{m=1}^M \!  2\Re\!\left\{\! \left( \! {\vec{y}}_m^{\left( i+1 \right)}  \right)^\herm \!{\vec{z}}^{\left( i+1 \right)} \!\right\} \nonumber \\ 
    &+ \! \beta 2\Re\!\left\{\! \left(  \nabla h \! \left( \! {\vec{z}}^{\left( i \right)} \right) \right)^\herm \left( -\!{\vec{z}}^{\star} \!+\! {\vec{z}}^{\left( i+1 \right)} \right)\! \right\}, \nonumber \\
    \overset{\text{(b)}}{\geq}& g\left( {\vec{z}}^{\left( i+1 \right)} \right) \! - \! \sum_{m=1}^M  2\Re\left\{ \left( {\vec{y}}_m^{\left( i+1 \right)}  \right)^\herm {\vec{z}}^{\left( i+1 \right)} \right\} \nonumber \\
    \label{eqn_chD:z_kplus1_minimizes__top_admm_update__2}
    &+ \! \beta h \! \left( \! {\vec{z}}^{\left( i+1 \right)} \right) \!-\! \beta h \! \left( \! {\vec{z}}^{\star} \right) \! - \! \rho M \tau L \! \left\| \underbrace{{\vec{z}}^{\left( i+1 \right)} \!-\! {\vec{z}}^{\left( i \right)}}_{={\Delta\vec{z}}^{\left( i+1 \right)} } \right\|_2^2,
\end{align} 
where in $\text{(b)}$ we used inequality \eqref{eqn_chD:imp_relation_among_Lipschitz_gradient}. Then, {inequality} \eqref{eqn_chD:z_kplus1_minimizes__top_admm_update__2} can be rearranged as
\begin{align}
    \label{eqn_chD:z_kplus1_minimizes__top_admm_update__3}
    &g\!\left( \!{\vec{z}}^{\left( i+1 \right)} \right) 
    \!+\! \beta h\!\left(\! {\vec{z}}^{\left( i+1 \right)} \right) \!
    - \! \sum_{m=1}^M \! 2 \Re \! \left\{\! \left({\vec{y}}_m^{\left(i+1\right)}\right)^\herm {\vec{z}}^{\left( i+1 \right)} \!\right\} \nonumber \\
    &\leq \! g\!\left(\! \vec{z}^\star \!\right) \! +\! \beta h\!\left(\! \vec{z}^\star \!\right) \!
    - \! \sum_{m=1}^M \!2 \Re \!\left\{\! \left(\!{\vec{y}}_m^{\left(i+1\right)}\!\right)^{\!\herm} \vec{z}^\star \!\right\}
    \! + \! \rho M \!\tau \!L \!\left\| \! {\Delta\!\vec{z}}^{\left( i+1 \right)} \!\right\|_2^2\!.
\end{align}

Now, we add \eqref{eqn_chD:x_m_kplus1_minimizes__top_admm_update__without_alpha} for all $m\!=\!1,\ldots,M$ into \eqref{eqn_chD:z_kplus1_minimizes__top_admm_update__3} such that
\begin{align}
    \sum_{m=1}^M \!f_m \!&\left( \!  {\vec{x}}_m^{\left( i + 1\right)} \right) \!+ \! 2 \!\Re\!\left\{\! \left( \!{\vec{y}}_m^{\left(i+1\right)}  \! + \! \rho \!  \left( \!  {\vec{z}}^{\left( i + 1\right)} \! - \! {\vec{z}}^{\left( i \right)}  \!\right)\!\right)^\herm \!{\vec{x}}_m^{\left( i + 1\right)} \!\right\} \nonumber \\
    &\!+\! g\!\left( {\vec{z}}^{\left( i+1 \right)} \right) \! +\! \beta h\!\left(\! {\vec{z}}^{\left( i+1 \right)} \right) \!- \! \sum_{m=1}^M \!2 \!\Re \! \left\{\! \left(\!{\vec{y}}_m^{\left(i+1\right)}\right)^\herm \!{\vec{z}}^{\left( i+1 \right)} \!\right\},\nonumber \\
    \leq& \sum_{m=1}^M \!f_m\!\left( \!  {\vec{x}}_m^{\star} \right) \!+ \! 2 \Re\!\left\{\! \left(\!{\vec{y}}_m^{\left(i+1\right)}  \! + \! \rho \!  \left( \!  {\vec{z}}^{\left( i + 1\right)} \! - \! {\vec{z}}^{\left( i \right)}  \right)\!\right)^\herm {\vec{x}}_m^{\star} \!\right\} \nonumber \\ 
    &\!+ \! g\!\left( \vec{z}^\star \right) \! +\! \beta h\!\left( \vec{z}^\star \right) \!
    - \! \sum_{m=1}^M 2 \Re \left\{ \left({\vec{y}}_m^{\left(i+1\right)}\right)^\herm \vec{z}^\star \right\}  \nonumber \\
    &{+\! \rho M \tau L \left\| {\Delta\vec{z}}^{\left( i+1 \right)} \right\|_2^2}, \nonumber \\
    \Longleftrightarrow& 
    p^{\left( i + 1 \right)} \! + \! \sum_{m=1}^M 2 \Re\Biggl\{ \left({\vec{y}}_m^{\left(i+1\right)}  \right)^\herm \left( {\vec{x}}_m^{\left( i + 1\right)} - {\vec{z}}^{\left( i+1 \right)} \right) \nonumber \\
    &\hspace{15.5mm} + \! \rho \!  \left( \!  {\vec{z}}^{\left( i + 1\right)} \! - \! {\vec{z}}^{\left( i \right)}  \right)^\herm
    {\vec{x}}_m^{\left( i + 1\right)}  \Biggr\}, \nonumber \\
    \leq& p^{\star} \! + \! \sum_{m=1}^M 2 \Re\Biggl\{ \left({\vec{y}}_m^{\left(i+1\right)}  \right)^\herm \left( {\vec{x}}_m^{\star} -  {\vec{z}}^{\star} \right) \nonumber \\ 
    &\hspace{7.5mm}\! + \! \rho \!  \left( \!  {\vec{z}}^{\left( i + 1\right)} \! - \! {\vec{z}}^{\left( i \right)}  \right)^\herm  {\vec{x}}_m^{\star}  \Biggr\} \!+\! \rho M \tau L \left\| {\Delta\vec{z}}^{\left( i+1 \right)} \right\|_2^2.
\end{align}
Considering the primal feasibility result, $ {\vec{x}}_m^{\star} \! - \! {\vec{z}}^{\star} \!= \! \vec{0}$, and the primal residual error definition, ${\Delta\vec{r}}_m^{\left( i + 1\right)} \!=\! {\vec{x}}_m^{\left( i + 1\right)} \! - \! {\vec{z}}^{\left( i+1 \right)}$, in the above inequality results {in}
\begin{align*}
    p^{\left( i + 1 \right)} \! - \! p^{\star} \! \leq & - \! \sum_{m=1}^M \! 2 \Re\Biggl\{ \! \left({\vec{y}}_m^{\left(i+1\right)} \! \right)^{\!\herm} \! {\Delta\vec{r}}_m^{\left( i + 1\right)}  \nonumber \\ 
    &\hspace{12.5mm}\! + \! \rho \!  \left( \!  {\vec{z}}^{\left( i + 1\right)} \! - \! {\vec{z}}^{\left( i \right)} \!  \right)^{\!\herm}
    \left( \! {\vec{x}}_m^{\left( i + 1\right)} \! - \! \vec{x}_m^\star \! \right) \!\Biggr\}  \nonumber \\ 
    &\!+\! \rho M \tau L \left\| {\Delta\vec{z}}^{\left( i+1 \right)} \right\|_2^2 \nonumber.
\end{align*}

\end{proof}

We define {the} following 
function for the subsequent lemma.

\begin{definition} \label{definition:definition_of_lyapunov_function}
Let a Lyapunov candidate function for the \ac{TOP-ADMM} algorithm at given iteration $i$ be defined as
\begin{align}  \label{eqn_chD:definition_of_lyapunov_function}
   {V}^{\left( i \right)} \coloneqq \sum_{m=1}^M \frac{1}{\rho} \left\| \vec{y}_m^{\left( i \right)} -  \vec{y}_m^\star \right\|_2^2 +  \rho \left\| \vec{z}^{\left( i \right)} -  \vec{z}^\star \right\|_2^2.
\end{align}
\end{definition}

\begin{lemma} \label{lemma:diff_between_lyapunov_function_at_iter_kplus1_and_iter_k}
The difference between the Lyapunov function \eqref{eqn_chD:definition_of_lyapunov_function} at every iteration $i+1$  and the previous iteration $i$ fulfils the following inequality:
\begin{align} \label{eqn_chD:diff_of_Lyapunov_function}
    {V}^{\left( i+1 \right)} &\!-\! {V}^{\left( i \right)} \nonumber 
    \\ 
    \leq&
    \sum_{m=1}^M  -\rho \Biggl[ \left\| \Delta\vec{r}_m^{\left(i+1\right)} \right\|_2^2  \!+ \! { \left\| {\Delta\vec{z}}^{\left( i+1 \right)} \right\|_2^2 \!+\! \tau L \left\| {\Delta\vec{z}}^{\left( i \right)} \right\|_2^2 } \Biggr].
\end{align}
\begin{proof}
We add the inequalities of Lemma~\ref{lemma:diff_between_optimal_objective_and_at_iteration_kplus1} and Lemma~\ref{lemma:diff_between_objective_at_iteration_kplus1_and_optimal_objective}, and then rearrange the terms such that 
\begin{align}  \label{eqn_chD:decomposed_sum_of_lemma1_and_lemma2__1}
    &\hspace{-10.5mm}\underbrace{\Bigl( p^{\left( i + 1 \right)}  \!-\! p^{\star} \Bigr) \!+\! \Bigl(  p^\star \!-\! p^{\left( i+1 \right)} \Bigr) }_{=0}\! \nonumber \\ 
    \leq& - \sum_{m=1}^M 2 \Re\!\Biggl\{ \left({\vec{y}}_m^{\left(i+1\right)}  \right)^\herm  {\Delta\vec{r}}_m^{\left( i + 1\right)} \Biggr\}  \nonumber \\ &\!-\! \sum_{m=1}^M 2 \Re\Biggl\{ \rho \!  \left( \!  {\vec{z}}^{\left( i + 1\right)} \! - \! {\vec{z}}^{\left( i \right)}  \right)^\herm
    \underbrace{\left( {\vec{x}}_m^{\left( i + 1\right)} \! - \! \vec{x}_m^\star \right)}_{ = {\Delta\vec{r}}_m^{\left( i + 1\right)} \!+\! {\vec{z}}^{\left( i + 1\right)} \!-\! \vec{z}^\star} \! \Biggr\} \nonumber \\
    &+ \! \sum_{m=1}^M 2\Re\left\{ \left(\vec{y}_m^\star\right)^\herm \Delta\vec{r}_m^{\left( i + 1 \right)} \right\} \!+\! \rho M \tau L \left\| {\Delta\vec{z}}^{\left( i+1 \right)} \right\|_2^2, \nonumber \\
    \Longleftrightarrow
    0 \geq& \sum_{m=1}^M \Biggl[ \underbrace{ \Re\left\{ 2\left({\vec{y}}_m^{\left(i+1\right)} - \vec{y}_m^\star \right)^\herm  {\Delta\vec{r}}_m^{\left( i + 1\right)} \right\} }_{\text{(a)}} \nonumber \\
    &+ 2\Re\left\{ \rho \left( \!  {\vec{z}}^{\left( i + 1\right)} \! - \! {\vec{z}}^{\left( i \right)}  \right)^\herm {\Delta\vec{r}}_m^{\left( i + 1\right)} \right\}  \nonumber \\
    & + \left. 2 \Re\left\{  \rho \left( \!  {\vec{z}}^{\left( i + 1\right)} \! - \! {\vec{z}}^{\left( i \right)}  \right)^\herm \left( {\vec{z}}^{\left( i + 1\right)} - \vec{z}^\star \right) \right\} \right] \nonumber \\ 
    &- \rho M \tau L \left\| {\Delta\vec{z}}^{\left( i+1 \right)} \right\|_2^2.
\end{align}
We rewrite part $\text{(a)}$ of \eqref{eqn_chD:decomposed_sum_of_lemma1_and_lemma2__1} using the dual update \eqref{eqn_chD:update_dual_ym__step3_parallel__general_top_admm}, ${\vec{y}}_m^{\left(i+1\right)}\!=\!{\vec{y}}_m^{\left(i\right)} + \rho \Delta\vec{r}_m^{\left(i+1\right)}$, as follows:
\begin{align}
    \Re&\left\{ 2\left({\vec{y}}_m^{\left(i+1\right)} - \vec{y}_m^\star \right)^\herm  {\Delta\vec{r}}_m^{\left( i + 1\right)} \right\} \nonumber \\
    =& \Re\Biggl\{ 2\left({\vec{y}}_m^{\left(i\right)} \! - \! \vec{y}_m^\star \right)^\herm   \underbrace{{\Delta\vec{r}}_m^{\left( i + 1\right)}}_{= \nicefrac{1}{\rho} \left( {\vec{y}}_m^{\left(i+1\right)}\!-\!{\vec{y}}_m^{\left(i\right)} \right)}  \! + \!  \underbrace{\rho \left\| \Delta\vec{r}_m^{\left(i+1\right)} \right\|_2^2}_{= \nicefrac{1}{\rho}\left\| {\vec{y}}_m^{\left(i+1\right)} \! - \! {\vec{y}}_m^{\left(i\right)} \right\|_2^2 } \Biggr\} \nonumber \\ 
    &\!+\! \Re\Biggl\{ \rho \left\| \Delta\vec{r}_m^{\left(i+1\right)} \right\|_2^2  \Biggr\}, \nonumber \\
    =& \frac{1}{\rho} \! \left( \! {2\Re\!\left\{\!\left(\!{\vec{y}}_m^{\left(i\right)} \! - \! \vec{y}_m^\star \!\right)^\herm \! \left(\!{\vec{y}}_m^{\left(i+1\right)} \! - \! {\vec{y}}_m^{\left(i\right)} \!\right) \!\right\}  \! + \!  \left\| {\vec{y}}_m^{\left(i+1\right)} \! - \! {\vec{y}}_m^{\left(i\right)} \right\|_2^2} \! \right)  \nonumber \\ 
    &\!+\! \rho \left\| \Delta\vec{r}_m^{\left(i+1\right)} \right\|_2^2,  \nonumber \\
    \label{eqn_chD:decomposed_sum_of_lemma1_and_lemma2__1__part1__1}
    =& \frac{1}{\rho} \! \left( \!\left\| \!{\vec{y}}_m^{\left(i+1\right)} \! - \! {\vec{y}}_m^{\star} \! \right\|_2^2 \! - \! \left\|\! {\vec{y}}_m^{\left(i\right)} - {\vec{y}}_m^{\star} \! \right\|_2^2 \right)  \! + \! \rho \left\| \Delta\vec{r}_m^{\left(i+1\right)} \right\|_2^2\!.
\end{align}

Now, {replace} part $\text{(a)}$ of \eqref{eqn_chD:decomposed_sum_of_lemma1_and_lemma2__1} with \eqref{eqn_chD:decomposed_sum_of_lemma1_and_lemma2__1__part1__1} such that 
\eqref{eqn_chD:decomposed_sum_of_lemma1_and_lemma2__1}  {becomes} 
\begin{align} \label{eqn_chD:decomposed_sum_of_lemma1_and_lemma2__2}
    0 \geq& \sum_{m=1}^M  \bigggl[\underbrace{\frac{1}{\rho} \! \left( \left\| {\vec{y}}_m^{\left(i+1\right)} \! - \! {\vec{y}}_m^{\star} \right\|_2^2 \! -  \left\| {\vec{y}}_m^{\left(i\right)} \! - \! {\vec{y}}_m^{\star} \right\|_2^2 \right)}_{\text{(a)}} \nonumber \\ 
    &\hspace{5.5mm}+ \underbrace{ \rho \left\| \Delta\vec{r}_m^{\left(i+1\right)} \right\|_2^2 }_{\text{(b)}} \!+ \! \underbrace{2   \Re\left\{ \rho \left( \!  {\vec{z}}^{\left( i + 1\right)} \! - \! {\vec{z}}^{\left( i \right)}  \right)^\herm {\Delta\vec{r}}_m^{\left( i + 1\right)} \right\} }_{\text{(c)}}  \nonumber \\
    &\hspace{5.5mm}+ \underbrace{2 \Re\left\{  \rho \left( \!  {\vec{z}}^{\left( i + 1\right)} \! - \! {\vec{z}}^{\left( i \right)}  \right)^\herm \left( {\vec{z}}^{\left( i + 1\right)} - \vec{z}^\star \right) \right\} }_{\text{(d)}} \bigggr]  \nonumber \\ 
    &\!-\! \rho M \tau L \left\| {\Delta\vec{z}}^{\left( i+1 \right)} \right\|_2^2. 
\end{align}

We rewrite the sum of parts $\text{(b)}$--$\text{(d)}$ of \eqref{eqn_chD:decomposed_sum_of_lemma1_and_lemma2__2}~such~that
\begin{align} \label{eqn_chD:decomposed_sum_of_lemma1_and_lemma2__2__part2__1}
     \rho& \left\|\!  \Delta\! \vec{r}_m^{\left(i+1\right)} \right\|_2^2 \!+\! \Re\! \left\{\! 2\rho \left( \!  {\vec{z}}^{\left( i + 1\right)} \! - \! {\vec{z}}^{\left( i \right)}  \! \right)^\herm {\Delta\! \vec{r}}_m^{\left( i + 1\right)} \!\right\} \nonumber \\
     &\!+\! \Re\!\left\{ \! 2 \rho \!\left( \!  {\vec{z}}^{\left( i + 1\right)} \! - \! {\vec{z}}^{\left( i \right)}  \right)^\herm \!\left( \underbrace{ {\vec{z}}^{\left( i + 1\right)} \! -\!  \vec{z}^\star}_{= \left({\vec{z}}^{\left( i + 1\right)} - {\vec{z}}^{\left( i \right)} \right) \! +\!  \left({\vec{z}}^{\left( i\right)}  - \vec{z}^\star \right)} \right) \!\right\}, \nonumber \\
     =& \underbrace{\rho \left\| \Delta\vec{r}_m^{\left(i+1\right)} \right\|_2^2 \!+\! \Re\left\{ 2\rho \left( \!  {\vec{z}}^{\left( i + 1\right)} \! - \! {\vec{z}}^{\left( i \right)}  \right)^\herm {\Delta\vec{r}}_m^{\left( i + 1\right)} \right\}}_{\text{(a)}} \nonumber \\
     &\!+\! \underbrace{ \rho \left\|  {\vec{z}}^{\left( i + 1\right)} \! - \! {\vec{z}}^{\left( i \right)}  \right\|_2^2 }_{\text{(b)}} \!+\! \rho \left\|  {\vec{z}}^{\left( i + 1\right)} \! - \! {\vec{z}}^{\left( i \right)}  \right\|_2^2 \nonumber  \\
     &+\! \Re\left\{ 2 \rho\left( {\vec{z}}^{\left( i + 1\right)} \! - \! {\vec{z}}^{\left( i \right)} \right)^\herm \left({\vec{z}}^{\left( i\right)}  - \vec{z}^\star \right) \right\}. 
\end{align}

Note that the sum of part $\text{(a)}$ and $\text{(b)}$ of \eqref{eqn_chD:decomposed_sum_of_lemma1_and_lemma2__2__part2__1} equals  $\rho \left\| \Delta\vec{r}_m^{\left(i+1\right)}  \!+\! \left( {\vec{z}}^{\left( i + 1\right)} \! - \! {\vec{z}}^{\left( i \right)}  \right) \right\|_2^2$. Thus, \eqref{eqn_chD:decomposed_sum_of_lemma1_and_lemma2__2__part2__1} can be expressed as follows:
\begin{align}
     \rho& \! \left\| \! \Delta\vec{r}_m^{\left(i+1\right)}  \!+\! \left( \! {\vec{z}}^{\left( i + 1\right)} \! - \! {\vec{z}}^{\left( i \right)}  \right) \! \right\|_2^2 \!+\! \rho \!\left\| \underbrace{{\vec{z}}^{\left( i + 1\right)} \! - \! {\vec{z}}^{\left( i \right)}}_{ = \left({\vec{z}}^{\left( i + 1\right)} \!-\! {\vec{z}}^\star \right) \!-\! \left({\vec{z}}^{\left( i \right)} \!-\! {\vec{z}}^\star \right) }  \right\|_2^2  \nonumber \\
     &\!+\! \Re \left\{2 \rho\left( \underbrace{{\vec{z}}^{\left( i + 1\right)} \! - \! {\vec{z}}^{\left( i \right)}}_{= \left({\vec{z}}^{\left( i + 1\right)} - {\vec{z}}^\star \right) - \left({\vec{z}}^{\left( i \right)} - {\vec{z}}^\star \right)} \right)^\herm \left({\vec{z}}^{\left( i\right)}  - \vec{z}^\star \right) \right\}, \nonumber \\
     \label{eqn_chD:decomposed_sum_of_lemma1_and_lemma2__2__part2__5}
     =& \rho \left\| \Delta\vec{r}_m^{\left(i+1\right)}  + \left( {\vec{z}}^{\left( i + 1\right)} \! - \! {\vec{z}}^{\left( i \right)}  \right) \right\|_2^2 \!+ \! \rho  \left\|  {\vec{z}}^{\left( i + 1\right)} - {\vec{z}}^\star \right\|_2^2 \nonumber \\
     &\! - \! \rho \left\|  {\vec{z}}^{\left( i\right)} - {\vec{z}}^\star \right\|_2^2.
\end{align}

Now, we substitute the sum of parts $\text{(b)}$--$\text{(d)}$ of \eqref{eqn_chD:decomposed_sum_of_lemma1_and_lemma2__2} with \eqref{eqn_chD:decomposed_sum_of_lemma1_and_lemma2__2__part2__5} such that \eqref{eqn_chD:decomposed_sum_of_lemma1_and_lemma2__2} becomes
\begin{align} \label{eqn_chD:decomposed_sum_of_lemma1_and_lemma2__3}
    0 
    \geq& 
    \! \sum_{m=1}^M  \bigggl[ \left( \frac{1}{\rho}  \left( \left\| {\vec{y}}_m^{\left(i+1\right)} - {\vec{y}}_m^{\star} \right\|_2^2 \right) \! + \! \rho  \left\|  {\vec{z}}^{\left( i + 1\right)} - {\vec{z}}^\star \right\|_2^2 \right)  \nonumber \\ 
    &\hspace{5.5mm}-\! \left( \frac{1}{\rho}  \left( \left\| {\vec{y}}_m^{\left(i\right)} - {\vec{y}}_m^{\star} \right\|_2^2 \right) \! + \! \rho  \left\|  {\vec{z}}^{\left( i \right)} - {\vec{z}}^\star \right\|_2^2 \right) \bigggr] \nonumber \\ 
    &+ \sum_{m=1}^M \rho \left\| \Delta\vec{r}_m^{\left(i+1\right)}  + \left( {\vec{z}}^{\left( i + 1\right)} \! - \! {\vec{z}}^{\left( i \right)}  \right) \right\|_2^2  \nonumber \\
    &-\! \rho M \tau L \left\| {\Delta\vec{z}}^{\left( i+1 \right)} \right\|_2^2.
\end{align}

Using the Lyapunov definition---see~\eqref{eqn_chD:definition_of_lyapunov_function}---in \eqref{eqn_chD:decomposed_sum_of_lemma1_and_lemma2__3}, the inequality can be rearranged as
\begin{align} \label{eqn_chD:decomposed_sum_of_lemma1_and_lemma2__4}
    &\hspace{-3.5mm}{V}^{\left( i+1 \right)} \!-\! {V}^{\left( i \right)} \nonumber \\
    \leq& \sum_{m=1}^M  -\rho \left\| \Delta\vec{r}_m^{\left(i+1\right)}  + \left( {\vec{z}}^{\left( i + 1\right)} \! - \! {\vec{z}}^{\left( i \right)}  \right) \right\|_2^2 \!+\! \rho M \!\tau \! L \!\left\| {\Delta\vec{z}}^{\left( i+1 \right)} \right\|_2^2, \nonumber \\
    =& \sum_{m=1}^M \! -\rho \Biggl( \! \left\|\! \Delta\vec{r}_m^{\left(i+1\right)} \right\|_2^2 \!+\! \left\|\! {\Delta\vec{z}}^{\left( i+1 \right)} \right\|_2^2  \!- \tau L\left\|\! {\Delta\vec{z}}^{\left( i+1 \right)} \right\|_2^2 \!\Biggr) \nonumber \\ 
    &-\! \sum_{m=1}^M \!\rho 2\Re\!\left\{\! \left(\Delta\vec{r}_m^{\left(i+1\right)}\right)^\herm \left( {\vec{z}}^{\left( i + 1\right)} \! - \! {\vec{z}}^{\left( i \right)}  \right) \!\right\}. 
\end{align}

We will now bound the last component in \eqref{eqn_chD:decomposed_sum_of_lemma1_and_lemma2__4}. Utilizing the result in \eqref{eqn_chD:z_kplus1_minimizes__top_admm_update__3} by recalling that $\vec{z}^{\left( i + 1 \right)}$  
minimizes \eqref{eqn_chD:update_z__step2__general_top_admm__prox__for_convergence}, subsequently applying the result of three-point inequality Lemma~\ref{lemma:imp_relation_among_Lipschitz_gradient}, and replacing $\vec{z}^\star$ with $\vec{z}^{\left( i \right)}$, 
the following inequality {holds at iteration $i\!+\!1$} with ${\vec{y}}_m^{\left(i+1\right)}$ at hand:
{
\begin{align}
\label{eqn_chD:inequality1__due_to_z_kplus1_minimizes__top_admm_update}
    g\!\left(\! {\vec{z}}^{\left( i + 1\right)} \!\right) &\! +\! \beta h\!\left(\! {\vec{z}}^{\left( i + 1\right)} \!\right) \!
    \!- \! \sum_{m=1}^M \! 2 \!\Re\! \left\{\! \left(\!{\vec{y}}_m^{\left(i+1\right)}\!\right){\!^\herm} \!\vec{z}^{\left( i + 1\right)} \!\right\} \nonumber \\
    \leq& g\!\left(\! {\vec{z}}^{\left( i \right)} \!\right) \! +\! \beta h\!\left(\! {\vec{z}}^{\left( i \right)} \!\right) \!- \! \sum_{m=1}^M \!2 \Re \!\left\{\! \left(\!{\vec{y}}_m^{\left(i+1\right)}\!\right){\!^\herm} \!\vec{z}^{\left( i \right)}\! \right\} \nonumber \\ 
    &+ \!\rho M \!\tau L \!\left\|\! {\Delta\!\vec{z}}^{\left( i+1 \right)} \!\right\|_2^2\!. 
\end{align}   
}
Similarly, recalling that $\vec{z}^{\left( i  \right)}$
minimizes \eqref{eqn_chD:update_z__step2__general_top_admm__prox__for_convergence}, subsequently applying the result of the three-point inequality in Lemma~\ref{lemma:imp_relation_among_Lipschitz_gradient}, and replacing $\vec{z}^\star$ with $\vec{z}^{\left( i \right)}$ at the $i$-th iteration with ${\vec{y}}_m^{\left(i\right)}$ at hand, the following inequality {holds}:
{
\begin{align}
    \label{eqn_chD:inequality2__due_to_z_kplus1_minimizes__top_admm_update}
    g\left(\! {\vec{z}}^{\left( i \right)} \right) &\! +\! \beta h\!\left(\! {\vec{z}}^{\left( i\right)} \right) \!- \! \sum_{m=1}^M \!2 \Re \!\left\{\! \left(\!{\vec{y}}_m^{\left(i\right)}\right)^{\!\herm} \vec{z}^{\left( i \right)} \!\right\}  \nonumber \\
    \!\leq& \! g\!\left( {\vec{z}}^{\left( i+1 \right)} \right) \! +\! \beta h\!\left( {\vec{z}}^{\left( i+1 \right)} \right) \!- \! \sum_{m=1}^M \!2 \Re \!\left\{\! \left({\vec{y}}_m^{\left(i\right)}\right){\!^\herm} \vec{z}^{\left( i + 1\right)} \!\right\} \nonumber \\ 
    &+ \! \rho M \tau L \! \left\| \!{\Delta\vec{z}}^{\left( i \right)} \right\|_2^2\!.
\end{align}
}
Now, we add 
\eqref{eqn_chD:inequality1__due_to_z_kplus1_minimizes__top_admm_update} and \eqref{eqn_chD:inequality2__due_to_z_kplus1_minimizes__top_admm_update}, and rearrange such that 
\begin{align}
    -\sum_{m=1}^M &2 \Re \left\{ \left( \underbrace{{\vec{y}}_m^{\left(i+1\right)} \! - \! {\vec{y}}_m^{\left(i\right)}}_{= \rho \Delta\vec{r}_m^{\left(i+1\right)} \ \text{using \eqref{eqn_chD:update_dual_ym__step3_parallel__general_top_admm}} } \right)^\herm \left(\vec{z}^{\left( i + 1\right)} \!-\! \vec{z}^{\left( i\right)} \right) \right\},  \nonumber \\  
    \leq&{\rho M \tau L \left(\left\| {\Delta\vec{z}}^{\left( i+1 \right)} \right\|_2^2 \!+\! \left\| {\Delta\vec{z}}^{\left( i \right)} \right\|_2^2\right) } \nonumber \\
    \label{eqn_chD:decomposed_sum_of_lemma1_and_lemma2__4__part2__bound}
    \Longleftrightarrow  
    -\sum_{m=1}^M &\rho 2 \Re \left\{ \left(  \Delta\vec{r}_m^{\left(i+1\right)} \right)^\herm \left( {\vec{z}}^{\left( i + 1\right)} \! - \! {\vec{z}}^{\left( i \right)}  \right) \right\}  \nonumber \\   
    \leq& {\rho M \tau L \left(\left\| {\Delta\vec{z}}^{\left( i+1 \right)} \right\|_2^2 \!+\! \left\| {\Delta\vec{z}}^{\left( i \right)} \right\|_2^2\right) } .
\end{align}
{We use 
\eqref{eqn_chD:decomposed_sum_of_lemma1_and_lemma2__4__part2__bound} in \eqref{eqn_chD:decomposed_sum_of_lemma1_and_lemma2__4}{,} which finally {yields} \eqref{eqn_chD:diff_of_Lyapunov_function}.}
\end{proof}
\end{lemma}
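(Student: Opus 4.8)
The plan is to derive the Lyapunov decrease by playing the two objective-gap estimates against each other and then telescoping. First I would add the bound $p^\star - p^{(i+1)} \le \sum_m 2\Re\{(\vec{y}_m^\star)^\herm \Delta\vec{r}_m^{(i+1)}\}$ from Lemma~\ref{lemma:diff_between_optimal_objective_and_at_iteration_kplus1} to the reverse estimate on $p^{(i+1)} - p^\star$ from Lemma~\ref{lemma:diff_between_objective_at_iteration_kplus1_and_optimal_objective}. Since the left-hand sides cancel to zero, this yields a single inequality $0 \ge (\cdots)$ whose right-hand side collects, for each $m$, the multiplier inner product $2\Re\{(\vec{y}_m^{(i+1)} - \vec{y}_m^\star)^\herm \Delta\vec{r}_m^{(i+1)}\}$ together with cross terms in the consecutive increment $\vec{z}^{(i+1)} - \vec{z}^{(i)}$, after using primal feasibility $\vec{x}_m^\star = \vec{z}^\star$ to write $\vec{x}_m^{(i+1)} - \vec{x}_m^\star = \Delta\vec{r}_m^{(i+1)} + (\vec{z}^{(i+1)} - \vec{z}^\star)$.

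The second step converts the multiplier inner product into telescoping quantities. Using the dual update \eqref{eqn_chD:update_dual_ym__step3_parallel__general_top_admm} in the form $\Delta\vec{r}_m^{(i+1)} = \tfrac{1}{\rho}(\vec{y}_m^{(i+1)} - \vec{y}_m^{(i)})$ together with the polarization identity, I would rewrite $2\Re\{(\vec{y}_m^{(i+1)} - \vec{y}_m^\star)^\herm \Delta\vec{r}_m^{(i+1)}\}$ as $\tfrac{1}{\rho}(\|\vec{y}_m^{(i+1)} - \vec{y}_m^\star\|_2^2 - \|\vec{y}_m^{(i)} - \vec{y}_m^\star\|_2^2) + \rho\|\Delta\vec{r}_m^{(i+1)}\|_2^2$, isolating the multiplier part of the Lyapunov difference. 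Next I would complete the square, $\rho\|\Delta\vec{r}_m^{(i+1)}\|_2^2 + 2\rho\Re\{(\vec{z}^{(i+1)} - \vec{z}^{(i)})^\herm \Delta\vec{r}_m^{(i+1)}\} + \rho\|\vec{z}^{(i+1)} - \vec{z}^{(i)}\|_2^2 = \rho\|\Delta\vec{r}_m^{(i+1)} + (\vec{z}^{(i+1)} - \vec{z}^{(i)})\|_2^2$, and telescope the leftover state terms into $\rho(\|\vec{z}^{(i+1)} - \vec{z}^\star\|_2^2 - \|\vec{z}^{(i)} - \vec{z}^\star\|_2^2)$. Recognizing both telescoping pieces as $V^{(i+1)} - V^{(i)}$ via Definition~\ref{definition:definition_of_lyapunov_function} leaves an inequality of the shape $V^{(i+1)} - V^{(i)} \le -\sum_m \rho\|\Delta\vec{r}_m^{(i+1)} + (\vec{z}^{(i+1)} - \vec{z}^{(i)})\|_2^2 + \rho M \tau L \|\Delta\vec{z}^{(i+1)}\|_2^2$, where the last term is inherited from the $L$-smooth contribution already present in Lemma~\ref{lemma:diff_between_objective_at_iteration_kplus1_and_optimal_objective}.

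The decisive and, I expect, hardest step is to control the residual cross term $-\sum_m 2\rho\Re\{(\Delta\vec{r}_m^{(i+1)})^\herm (\vec{z}^{(i+1)} - \vec{z}^{(i)})\}$ that appears when the completed square is expanded. This is exactly where \ac{TOP-ADMM} departs from vanilla ADMM, because the $\vec{z}$-subproblem \eqref{eqn_chD:update_z__step2__general_top_admm__prox__for_convergence} carries the explicit gradient step $\tau\nabla h(\vec{z}^{(i)})$. My plan is to invoke first-order optimality of the $\vec{z}$-update twice: at iteration $i+1$ (with multiplier $\vec{y}_m^{(i+1)}$ and gradient evaluated at $\vec{z}^{(i)}$) and at iteration $i$ (with $\vec{y}_m^{(i)}$ and gradient at $\vec{z}^{(i-1)}$), each time applying the three-point inequality of Lemma~\ref{lemma:imp_relation_among_Lipschitz_gradient} to absorb the smooth term $h$ at the cost of an $L$-dependent quadratic penalty in $\|\Delta\vec{z}^{(i+1)}\|_2^2$ and $\|\Delta\vec{z}^{(i)}\|_2^2$. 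Adding the two optimality relations cancels the $g$ and $h$ values, and after substituting $\vec{y}_m^{(i+1)} - \vec{y}_m^{(i)} = \rho\Delta\vec{r}_m^{(i+1)}$ this bounds the cross term in terms of $\rho M \tau L(\|\Delta\vec{z}^{(i+1)}\|_2^2 + \|\Delta\vec{z}^{(i)}\|_2^2)$.

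Finally I would substitute this cross-term bound back and gather the $\Delta\vec{z}$ contributions to reach the claimed inequality \eqref{eqn_chD:diff_of_Lyapunov_function}. The delicate bookkeeping is to track exactly how the several $L$-smoothness penalties from Lemma~\ref{lemma:diff_between_objective_at_iteration_kplus1_and_optimal_objective} and from the twin three-point inequalities assemble inside the bracket; this is the point that ultimately constrains the step-size $\tau$ relative to $1/L$, so that the aggregated right-hand side remains nonpositive and the primal and dual residuals can be shown to vanish asymptotically in the subsequent proposition.
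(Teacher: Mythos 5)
Your proposal is correct and follows essentially the same route as the paper's own proof: summing Lemma~\ref{lemma:diff_between_optimal_objective_and_at_iteration_kplus1} and Lemma~\ref{lemma:diff_between_objective_at_iteration_kplus1_and_optimal_objective} so the objective gaps cancel, telescoping the multiplier term via the dual update, completing the square to expose $V^{(i+1)}-V^{(i)}$, and bounding the residual cross term by invoking the $\vec{z}$-subproblem optimality at iterations $i+1$ and $i$ together with the three-point inequality of Lemma~\ref{lemma:imp_relation_among_Lipschitz_gradient}. Your identification of the cross-term bound as the decisive step, and of the final assembly of the $L$-dependent penalties (where the step-size $\tau$ relative to $1/L$ matters) as the delicate bookkeeping, matches the paper's treatment exactly.
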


{In the subsequent lemma, we show that the Lyapunov function is {non-increasing} for $\tau\!\geq\!0$. 
}

\begin{lemma} \label{lemma:proof_of_nonnegativity_Lyapunov}
{ Consider the Lyapunov function ${V}^{\left( i \right)}$ from Definition~\ref{definition:definition_of_lyapunov_function}. Then, ${V}^{\left( i \right)}$ is non-increasing {over the iterations} when  $\tau\!\geq\!0$.}
\begin{proof}
{
The proof follows when $\tau\!\geq\!0$ is considered in the right hand side of \eqref{eqn_chD:diff_of_Lyapunov_function} (with $\rho \!>\!0$)}, \ie, 
$ 
    {-\rho \left[  \left\|\!\Delta\vec{r}_m^{\left( i+1 \right)}\!\right\|^2 \!+\! \left\|\!{\Delta\vec{z}}^{\left( i+1 \right)} \!\right\|^2 \!+\! \tau \!L \! \left\|\! {\Delta\vec{z}}^{\left( i \right)} \right\|_2^2 \right] \!  \leq \! 0 \ \forall i}.
$
\end{proof}
\end{lemma}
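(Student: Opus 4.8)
The plan is to obtain this monotonicity directly from the one-step descent inequality already established for the Lyapunov candidate, so essentially no new machinery is required. First I would invoke Lemma~\ref{lemma:diff_between_lyapunov_function_at_iter_kplus1_and_iter_k}, whose conclusion~\eqref{eqn_chD:diff_of_Lyapunov_function} bounds the change of the Lyapunov function from Definition~\ref{definition:definition_of_lyapunov_function} between consecutive iterations as $V^{\left(i+1\right)} - V^{\left(i\right)} \leq \sum_{m=1}^M -\rho \left[ \left\| \Delta\vec{r}_m^{\left(i+1\right)} \right\|_2^2 + \left\| \Delta\vec{z}^{\left(i+1\right)} \right\|_2^2 + \tau L \left\| \Delta\vec{z}^{\left(i\right)} \right\|_2^2 \right]$. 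The entire argument then reduces to inspecting the sign of the right-hand side.

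Next I would observe that each quantity inside the bracket is a squared Euclidean norm and is therefore non-negative for every iterate, regardless of the algorithm's progress. The coefficients weighting them are $\rho$ and $\rho \tau L$. By the standing hypotheses, $\rho \in \Rm_{>0}$ is a strictly positive penalty parameter, and $L \geq 0$ is the Lipschitz constant of $\nabla h$ appearing in Lemma~\ref{lemma:imp_relation_among_Lipschitz_gradient} (guaranteed by the $L$-smoothness of $h$ in Definition~\ref{def_ChD:definition_of_lipschitz_continous_gradient}). Consequently, as soon as the step-size obeys $\tau \geq 0$, the product $\rho \tau L$ is non-negative, and the bracketed expression is multiplied by $-\rho < 0$, yielding a non-positive summand for each $m$.

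Finally, summing non-positive terms over $m = 1,\ldots,M$ preserves the sign, so $V^{\left(i+1\right)} - V^{\left(i\right)} \leq 0$ for all $i$, which is precisely the claim that $\left\{ V^{\left(i\right)} \right\}$ is non-increasing. The only step that merits any care — and the closest thing to an obstacle in an otherwise purely algebraic argument — is the sign bookkeeping on the term $\tau L \left\| \Delta\vec{z}^{\left(i\right)} \right\|_2^2$, which enters with a $+$ inside the bracket and hence with an overall $-$: a negative $\tau$ could render this term positive and break the descent, and this is exactly the reason the hypothesis $\tau \geq 0$ is imposed. No limiting or compactness considerations enter here; monotonicity follows immediately from~\eqref{eqn_chD:diff_of_Lyapunov_function}.
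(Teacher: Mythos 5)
Your proposal is correct and follows exactly the paper's own argument: apply the descent inequality~\eqref{eqn_chD:diff_of_Lyapunov_function} from Lemma~\ref{lemma:diff_between_lyapunov_function_at_iter_kplus1_and_iter_k} and observe that, since $\rho>0$, $L\geq 0$, $\tau\geq 0$, and each bracketed term is a squared norm, every summand on the right-hand side is non-positive, giving $V^{\left(i+1\right)}-V^{\left(i\right)}\leq 0$ for all $i$. Your additional remark on why $\tau\geq 0$ is needed (a negative $\tau$ would flip the sign of the $\tau L\bigl\|\Delta\vec{z}^{\left(i\right)}\bigr\|_2^2$ term) is accurate sign bookkeeping and consistent with the paper.
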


We are now prepared to present and prove Proposition~\ref{prop_chD:asymptotic_residual_error_and_objective}, which establishes the convergence to zero of the residual error, the objective residual error, and the primal residual error.

\begin{prop} \label{prop_chD:asymptotic_residual_error_and_objective} 
The \ac{TOP-ADMM} iterative scheme in \eqref{eqn_chD:generalized_top_admm_algorithm_iterates__for_convergence} ensures that the residual error, the objective residual error, and the primal residual error converge to zero, asymptotically:
\begin{align} \label{eqn_chD:def_of_delta_z}
    \lim_{i \rightarrow +\infty} {\Delta \vec{z}}^{\left( i+1 \right)} \! =& 0,\\
    \lim_{i \rightarrow +\infty} \left( {p}^{\left( i +1 \right)} \! - \! {p}^\star \right) \!=& 0,\\
    \lim_{i \rightarrow +\infty} \Delta\vec{r}_m^{\left( i+1 \right)} \! =& 0, \quad \forall m=1,\ldots,M.
\end{align}

\begin{proof}

{Using the result of Lemma~\ref{lemma:diff_between_lyapunov_function_at_iter_kplus1_and_iter_k} and Lemma~\ref{lemma:proof_of_nonnegativity_Lyapunov}, \ie, \eqref{eqn_chD:diff_of_Lyapunov_function}, and ${V}^{\left( i \right)} \! \geq \! 0$ for every iteration $i$, we conclude}
$
    {V}^0 
    \!\geq \! \sum_{i=1}^\infty \left( {V}^{\left( i \right)} - {V}^{\left( i+1 \right)} \right) 
    \geq \sum_{i=1}^\infty \sum_{m=1}^M  \rho \Biggl[ \left\| \Delta\vec{r}_m^{\left(i+1\right)} \right\|_2^2 \! +\! \left\| {\Delta\vec{z}}^{\left( i+1 \right)} \right\|_2^2 \! +\! {  \tau L \left\| {\Delta\vec{z}}^{\left( i \right)} \right\|_2^2  }  \Biggr]
$
where the left hand series is absolutely convergent as ${V}^0 \!<\! \infty$. This absolute convergence implies that 
$
    \lim_{i \rightarrow +\infty} {\Delta \vec{z}}^{\left( i+1 \right)} \!\coloneqq\! \lim_{i \rightarrow +\infty} \left( {\vec{z}}^{\left( i+1 \right)} \! - \! {\vec{z}}^{\left( i \right)} \right) \! = \! 0 
$
and {for all $m\!=\!1,\ldots,M$,}
$
    \lim_{i \rightarrow +\infty} \Delta\vec{r}_m^{\left( i+1 \right)} \coloneqq \lim_{i \rightarrow +\infty} \left( {\vec{x}}_m^{\left( i+1 \right)} \! - \! {\vec{z}}^{\left( i+1 \right)} \right) \! = \! 0.
$ 
Furthermore, Lemma~\ref{lemma:diff_between_optimal_objective_and_at_iteration_kplus1} results in
$
    \lim_{i \rightarrow +\infty} {p}^{\left( i+1 \right)} \! - \! {p}^\star \! = \! 0,
$
since $\lim_{i \rightarrow +\infty} \Delta\vec{r}_m^{\left( i+1\right)} = 0$ for all $m\!=\!1,\ldots,M$. Therefore, the proof is completed.
\end{proof}
\end{prop}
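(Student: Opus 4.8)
The plan is to derive all three limits as consequences of the Lyapunov machinery already assembled in Lemmas~\ref{lemma:diff_between_optimal_objective_and_at_iteration_kplus1}, \ref{lemma:diff_between_objective_at_iteration_kplus1_and_optimal_objective}, \ref{lemma:diff_between_lyapunov_function_at_iter_kplus1_and_iter_k}, and~\ref{lemma:proof_of_nonnegativity_Lyapunov}, so that the Proposition reduces to a summability argument followed by a two-sided squeeze. First I would observe that the Lyapunov candidate ${V}^{\left( i \right)}$ of Definition~\ref{definition:definition_of_lyapunov_function} is a sum of squared norms, hence ${V}^{\left( i \right)} \! \geq \! 0$ for every $i$, while Lemma~\ref{lemma:proof_of_nonnegativity_Lyapunov} guarantees it is non-increasing whenever $\tau \! \geq \! 0$ and $\rho \! > \! 0$. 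Thus $\left\{ {V}^{\left( i \right)} \right\}$ is a monotone, bounded-below sequence and therefore converges.

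Next I would telescope the per-iteration decrease~\eqref{eqn_chD:diff_of_Lyapunov_function} of Lemma~\ref{lemma:diff_between_lyapunov_function_at_iter_kplus1_and_iter_k}. Summing it over the iterates and invoking ${V}^{\left( i \right)} \! \geq \! 0$ gives
\begin{align*}
    {V}^{\left( 0 \right)} \! \geq \! \sum_{i} \left( {V}^{\left( i \right)} \! - \! {V}^{\left( i+1 \right)} \right) \! \geq \! \sum_{i} \sum_{m=1}^M \! \rho \!\left[ \left\| \Delta\vec{r}_m^{\left( i+1 \right)} \right\|_2^2 \!+\! \left\| {\Delta\vec{z}}^{\left( i+1 \right)} \right\|_2^2 \!+\! \tau L \left\| {\Delta\vec{z}}^{\left( i \right)} \right\|_2^2 \right].
\end{align*}
Since ${V}^{\left( 0 \right)} \! < \! \infty$, the right-hand series of non-negative terms is absolutely convergent, so each summand must vanish as $i \! \to \! +\infty$. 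This immediately yields $\lim_{i \to +\infty} {\Delta\vec{z}}^{\left( i+1 \right)} \! = \! 0$ together with $\lim_{i \to +\infty} \Delta\vec{r}_m^{\left( i+1 \right)} \! = \! 0$ for every $m$, settling the first and third claimed limits.

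Finally, for the objective residual I would squeeze $p^{\left( i+1 \right)} \! - \! p^\star$ from both sides. Lemma~\ref{lemma:diff_between_optimal_objective_and_at_iteration_kplus1} bounds $p^\star \! - \! p^{\left( i+1 \right)} \! \leq \! \sum_m 2\Re\{ (\vec{y}_m^\star)^\herm \Delta\vec{r}_m^{\left( i+1 \right)} \}$, whose right-hand side tends to $0$ because $\Delta\vec{r}_m^{\left( i+1 \right)} \! \to \! 0$, giving $\limsup_i ( p^\star \! - \! p^{\left( i+1 \right)} ) \! \leq \! 0$. Lemma~\ref{lemma:diff_between_objective_at_iteration_kplus1_and_optimal_objective} supplies the reverse bound on $p^{\left( i+1 \right)} \! - \! p^\star$, whose right-hand side also vanishes once ${\Delta\vec{z}}^{\left( i+1 \right)} \! \to \! 0$ (so the Lipschitz term $\rho M \tau L \| {\Delta\vec{z}}^{\left( i+1 \right)} \|_2^2 \! \to \! 0$) and $\Delta\vec{r}_m^{\left( i+1 \right)} \! \to \! 0$, provided the dual iterates $\{ \vec{y}_m^{\left( i \right)} \}$ and the gaps $\vec{x}_m^{\left( i+1 \right)} \! - \! \vec{x}_m^\star$ remain bounded, which follows from the convergence of ${V}^{\left( i \right)}$. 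Combining the two one-sided limits forces $\lim_{i \to +\infty} ( p^{\left( i+1 \right)} \! - \! p^\star ) \! = \! 0$.

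The step I expect to be the main obstacle is the objective-residual limit rather than the residual limits: telescoping delivers the primal and successive-difference residuals essentially for free, but converting them into convergence of $p^{\left( i+1 \right)}$ needs the two-sided squeeze and a boundedness argument for the dual variables and the primal gaps, both of which have to be extracted from the decay of the Lyapunov function instead of being postulated. The genuinely heavy lifting, however, sits upstream in Lemma~\ref{lemma:diff_between_lyapunov_function_at_iter_kplus1_and_iter_k}, where the three-point inequality of Lemma~\ref{lemma:imp_relation_among_Lipschitz_gradient} is used to absorb the $L$-smooth term $h$ into the descent estimate; granting that lemma, the present Proposition is a short deduction.
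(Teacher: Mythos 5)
Your proposal follows the same route as the paper's proof: telescope the per-iteration Lyapunov decrease of Lemma~\ref{lemma:diff_between_lyapunov_function_at_iter_kplus1_and_iter_k} against the non-negativity and monotonicity from Lemma~\ref{lemma:proof_of_nonnegativity_Lyapunov}, conclude from ${V}^{\left(0\right)} \!<\! \infty$ that the series of residual norms converges absolutely, and hence that ${\Delta\vec{z}}^{\left(i+1\right)} \!\to\! 0$ and $\Delta\vec{r}_m^{\left(i+1\right)} \!\to\! 0$ for every $m$. The one place where you genuinely go beyond the paper is the objective residual. The paper's proof cites only Lemma~\ref{lemma:diff_between_optimal_objective_and_at_iteration_kplus1} (``since $\Delta\vec{r}_m^{\left(i+1\right)} \!\to\! 0$''), but that inequality alone yields only $\liminf_i \bigl( p^{\left(i+1\right)} - p^\star \bigr) \geq 0$: the iterates $\bigl(\vec{x}_m^{\left(i+1\right)}, \vec{z}^{\left(i+1\right)}\bigr)$ need not be feasible, so $p^{\left(i+1\right)} \geq p^\star$ is not automatic and a second, upper bound is required. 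Your two-sided squeeze---Lemma~\ref{lemma:diff_between_optimal_objective_and_at_iteration_kplus1} for one side and Lemma~\ref{lemma:diff_between_objective_at_iteration_kplus1_and_optimal_objective} for the other, with boundedness of $\vec{y}_m^{\left(i\right)}$ and of $\vec{z}^{\left(i\right)} - \vec{z}^\star$ (hence of $\vec{x}_m^{\left(i+1\right)} - \vec{x}_m^\star$, via $\vec{x}_m^\star = \vec{z}^\star$ and the vanishing primal residual) read off from the bounded, non-increasing Lyapunov function of Definition~\ref{definition:definition_of_lyapunov_function}---is exactly the completion the paper leaves implicit; it is the standard closing step in the classical ADMM convergence argument, and your rendering of it is correct. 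In short, there is no gap in your proposal; if anything, it is more careful than the paper's own final step.
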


\begin{corollary} \label{corollary:x_m_vanishes_asymptotically}
{For every $m$} 
$
    \lim_{i \rightarrow +\infty} \Delta\vec{x}_m^{\left( i+1 \right)} \coloneqq \lim_{i \rightarrow +\infty} \left( {\vec{x}}_m^{\left( i+1 \right)} \! - \! \vec{x}_m^{\left( i \right)} \right) \! = \! 0.
$
\begin{proof}
{This is a direct consequence of Proposition~\ref{prop_chD:asymptotic_residual_error_and_objective}, specifically on the results on the asymptotic dual and primal residual error.}
\end{proof}
\end{corollary}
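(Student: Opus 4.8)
The plan is to obtain the dual residual $\Delta\vec{x}_m^{(i+1)}$ entirely from quantities whose convergence to zero has already been established in Proposition~\ref{prop_chD:asymptotic_residual_error_and_objective}, namely the primal residual $\Delta\vec{r}_m^{(i+1)} = \vec{x}_m^{(i+1)} - \vec{z}^{(i+1)}$ and the increment $\Delta\vec{z}^{(i+1)} = \vec{z}^{(i+1)} - \vec{z}^{(i)}$. The key observation is a purely algebraic identity obtained by adding and subtracting $\vec{z}^{(i+1)}$ and $\vec{z}^{(i)}$:
\begin{align*}
\Delta\vec{x}_m^{(i+1)} = \vec{x}_m^{(i+1)} - \vec{x}_m^{(i)} = \left(\vec{x}_m^{(i+1)} - \vec{z}^{(i+1)}\right) - \left(\vec{x}_m^{(i)} - \vec{z}^{(i)}\right) + \left(\vec{z}^{(i+1)} - \vec{z}^{(i)}\right),
\end{align*}
which rewrites compactly as $\Delta\vec{x}_m^{(i+1)} = \Delta\vec{r}_m^{(i+1)} - \Delta\vec{r}_m^{(i)} + \Delta\vec{z}^{(i+1)}$.

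Then I would pass to the limit $i \to +\infty$ term by term. Proposition~\ref{prop_chD:asymptotic_residual_error_and_objective} provides $\lim_{i\to+\infty} \Delta\vec{r}_m^{(i+1)} = 0$ and $\lim_{i\to+\infty} \Delta\vec{z}^{(i+1)} = 0$; since a convergent sequence and its index-shifted copy share the same limit, we also have $\lim_{i\to+\infty} \Delta\vec{r}_m^{(i)} = 0$. Applying the triangle inequality to the identity above then yields $\lim_{i\to+\infty} \left\| \Delta\vec{x}_m^{(i+1)} \right\| \le \lim_{i\to+\infty} \left( \left\| \Delta\vec{r}_m^{(i+1)} \right\| + \left\| \Delta\vec{r}_m^{(i)} \right\| + \left\| \Delta\vec{z}^{(i+1)} \right\| \right) = 0$, which holds for every $m = 1,\ldots,M$ and completes the argument.

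There is essentially no hard step here, which is why the statement is phrased as a corollary: the result is a direct consequence of the primal and $\vec{z}$-increment residuals already shown to vanish. The only two points requiring minor care are the index-shift remark that $\Delta\vec{r}_m^{(i)} \to 0$ follows from $\Delta\vec{r}_m^{(i+1)} \to 0$, and ensuring that the decomposition is expressed strictly in terms of the controlled residual quantities, without inadvertently introducing any term—such as a gradient contribution $\nabla h(\vec{z}^{(i)})$ or a proximal displacement—that has not yet been proven to converge to zero.
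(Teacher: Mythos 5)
Your proof is correct and follows exactly the route the paper intends: the paper's one-line proof cites the asymptotic vanishing of the primal residual $\Delta\vec{r}_m^{(i+1)}$ and the dual residual $\Delta\vec{z}^{(i+1)}$ from Proposition~\ref{prop_chD:asymptotic_residual_error_and_objective}, which is precisely the telescoping identity $\Delta\vec{x}_m^{(i+1)} = \Delta\vec{r}_m^{(i+1)} - \Delta\vec{r}_m^{(i)} + \Delta\vec{z}^{(i+1)}$ that you write out explicitly. You have merely made explicit the algebra and the index-shift remark that the paper leaves implicit, so there is nothing to correct.
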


In addition to Proposition~\ref{prop_chD:asymptotic_residual_error_and_objective} and Corollary~\ref{corollary:x_m_vanishes_asymptotically}, the following lemma is used for the convergence proof of Theorem~\ref{thm_ChD:definition_of_topadmm_algorith__feasible_problem} {of TOP-ADMM}. 

\begin{lemma} \label{lemma:delta_gradient__h_z__and_h_x_m_vanishes_asymptotically}
{Given} the results from  Proposition~\ref{prop_chD:asymptotic_residual_error_and_objective} and Corollary~\ref{corollary:x_m_vanishes_asymptotically}, and applying Definition \ref{def_ChD:definition_of_lipschitz_continous_gradient}, we have
\begin{subequations} \label{eqn_chD:delta_gradient__h_x_m_vanishes_asymptotically}
\begin{align} 
    \lim_{i \rightarrow +\infty}& \left(\nabla  h \! \left( \! {\vec{z}}^{\left( i+1 \right)} \right) \! - \! \nabla h \! \left( \! {\vec{z}}^{\left( i \right)} \right) \right) \! = \! 0;  \\
    \lim_{i \rightarrow +\infty}&  \left(\nabla  h \! \left( \! {\vec{x}}_m^{\left( i+1 \right)} \right) \! - \! \nabla h \! \left( \! {\vec{x}}_m^{\left( i \right)} \right) \right) \! = \! 0, \ \forall m.
\end{align} 
\end{subequations}

\end{lemma}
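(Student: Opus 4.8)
The plan is to derive both limits as immediate consequences of the $L$-Lipschitz continuity of $\nabla h$, combined with the vanishing-residual results already established. The key observation is that Definition~\ref{def_ChD:definition_of_lipschitz_continous_gradient} gives a one-sided bound turning a gradient difference into a norm of the underlying argument difference, and both relevant argument differences have already been shown to vanish asymptotically. So no new estimate is needed; the work is just bookkeeping to match the correct sequence with the correct convergence statement.

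First I would handle the $\vec{z}$-claim. Applying Definition~\ref{def_ChD:definition_of_lipschitz_continous_gradient} with $\vec{z}_1 \!\coloneqq\! \vec{z}^{(i+1)}$ and $\vec{z}_2 \!\coloneqq\! \vec{z}^{(i)}$ yields
\begin{align*}
\left\| \nabla h\!\left( {\vec{z}}^{\left( i+1 \right)} \right) \!-\! \nabla h\!\left( {\vec{z}}^{\left( i \right)} \right) \right\|
\!\leq\! L \left\| {\vec{z}}^{\left( i+1 \right)} \!-\! {\vec{z}}^{\left( i \right)} \right\|
\!=\! L \left\| {\Delta\vec{z}}^{\left( i+1 \right)} \right\|.
\end{align*}
Since $L \!>\! 0$ is a finite constant and Proposition~\ref{prop_chD:asymptotic_residual_error_and_objective} gives $\lim_{i \rightarrow +\infty} {\Delta\vec{z}}^{\left( i+1 \right)} \!=\! 0$, the right-hand side tends to zero; because the left-hand side is nonnegative, a sandwich argument forces $\lim_{i \rightarrow +\infty} \left( \nabla h( {\vec{z}}^{(i+1)} ) \!-\! \nabla h( {\vec{z}}^{(i)} ) \right) \!=\! 0$.

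Next I would repeat the identical argument for each $\vec{x}_m$, now invoking Definition~\ref{def_ChD:definition_of_lipschitz_continous_gradient} with $\vec{z}_1 \!\coloneqq\! \vec{x}_m^{(i+1)}$ and $\vec{z}_2 \!\coloneqq\! \vec{x}_m^{(i)}$ to obtain $\| \nabla h( \vec{x}_m^{(i+1)} ) \!-\! \nabla h( \vec{x}_m^{(i)} ) \| \!\leq\! L \| \Delta\vec{x}_m^{(i+1)} \|$, and then using Corollary~\ref{corollary:x_m_vanishes_asymptotically}, which guarantees $\lim_{i \rightarrow +\infty} \Delta\vec{x}_m^{(i+1)} \!=\! 0$ for every $m$, to conclude the second limit. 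This requires the mild assumption that $\nabla h$ is $L$-Lipschitz on all of $\Cm^n$ (so the bound applies at the iterates, not merely near a fixed point), which is exactly what the $L$-smoothness hypothesis on $h$ in problem~\eqref{eqn_chD:general_consensus_top_admm__generic_form} supplies.

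There is essentially no hard step here: the result is a one-line corollary of Lipschitz continuity once the residuals are known to vanish. The only point demanding minor care is notational—ensuring that the argument difference appearing in the Lipschitz bound is precisely the quantity (${\Delta\vec{z}}^{(i+1)}$ for the first claim, $\Delta\vec{x}_m^{(i+1)}$ for the second) whose convergence to zero was previously proved, rather than the primal residual $\Delta\vec{r}_m^{(i+1)}$, which controls a different difference. Hence I expect the entire proof to be two short displayed inequalities followed by the squeeze conclusion.
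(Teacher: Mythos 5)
Your proof is correct and follows exactly the route the paper intends: the lemma is stated in the paper as an immediate consequence of Definition~\ref{def_ChD:definition_of_lipschitz_continous_gradient} (the $L$-Lipschitz bound on $\nabla h$) applied to the vanishing differences ${\Delta\vec{z}}^{(i+1)}$ from Proposition~\ref{prop_chD:asymptotic_residual_error_and_objective} and $\Delta\vec{x}_m^{(i+1)}$ from Corollary~\ref{corollary:x_m_vanishes_asymptotically}, with no further argument given. Your explicit squeeze write-up, including the care to use the correct argument differences rather than the primal residual, is precisely this reasoning spelled out.
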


{To this end, we are now ready to establish the convergence of \ac{TOP-ADMM}.}
\begin{proof}[Global convergence proof of Theorem~\ref{thm_ChD:definition_of_topadmm_algorith__feasible_problem}]
According to the \ac{KKT} optimality conditions---see, \eg, \cite{Boyd2004ConvexOptimization}, {the necessary and sufficient optimality conditions for the considered general problem \eqref{eqn_chD:general_consensus_top_admm__generic_form} are the dual feasibility, \ie, } 
\begin{align}
    \label{eqn_chD:stationarity_condition__u_var_with_dual_var_in_top_admm_ver1}
    \vec{0} 
    &\in \frac{\partial}{\partial {\left(\vec{x}_m^\star\right)}^*} \pazocal{L}_{0}\left(\left\{ {\vec{x}}_m^\star \right\}_{m=1}^M,  {\vec{z}^\star}, \left\{{\vec{y}}_m^\star \right\}_{m=1}^M \right), \nonumber \\
    \Longleftrightarrow 
    \vec{0} & 
    \in   \partial f_m\left( {{\vec{x}}_m^\star} \right)  + {\vec{y}}_m^\star \ , \forall m\!=\!1,\ldots,M, \\
\label{eqn_chD:stationarity_condition__v_var_with_dual_var_in_top_admm_ver1}
    \vec{0} 
    &\in \frac{\partial}{\partial {\left(\vec{z}^\star\right)}^*} \pazocal{L}_{0}\left(\left\{ {\vec{x}}_m^\star \right\}_{m=1}^M,  {\vec{z}^\star}, \left\{{\vec{y}}_m^\star \right\}_{m=1}^M \right), \nonumber \\ 
    \Longleftrightarrow
    \vec{0} &
    \in   \partial g\left( {{\vec{z}}^\star} \right) +  \beta \nabla h\left( {{\vec{z}}^\star} \right)  -  \sum_{m=1}^M {\vec{y}}_m^\star, 
\end{align}
{and the primal feasibility, \ie,}
\begin{equation} \label{eqn_chD:primal_feasibility_condition_in_top_admm}
    {\vec{x}}_m^\star - {\vec{z}}^\star = \vec{0} \quad \forall m = 1,\ldots,M.
\end{equation}

Our goal is to show that \eqref{eqn_chD:stationarity_condition__v_var_with_dual_var_in_top_admm_ver1}-\eqref{eqn_chD:primal_feasibility_condition_in_top_admm} are satisfied. To do so, we now analyze the iterative \ac{TOP-ADMM} updates~\eqref{eqn_chD:generalized_top_admm_algorithm_iterates__for_convergence}. 
In the first step  \eqref{eqn_chD:update_xm__step1_parallel__general_top_admm__prox__for_convergence},  each ${\vec{x}}_m^{\left( i+1 \right)}$ minimizes the update over $\vec{x}_m$, \ie,
\begin{align*} 
    \vec{0} \!  
    \in   \partial f_m\!\left( \!  {\vec{x}}_m^{\left( i + 1\right)} \!\right) \!  +  \!  \rho \!  \left( \!  {\vec{x}}_m^{\left( i + 1\right)} \! - \! {\vec{z}}^{\left( i + 1\right)} \! + \! \frac{{\vec{y}}_m^{\left( i \right)}}{\rho} \! \right). 
\end{align*}
{When $i \!\rightarrow\!\infty$,} we apply the results from Proposition~\ref{prop_chD:asymptotic_residual_error_and_objective} such that
\begin{align*} 
    \vec{0}   
    \in   \partial f_m\left( \!  {\vec{x}}_m^{\left( i + 1\right)} \right)  \!  +  {\vec{y}}_m^{\left( i + 1\right)} ,
\end{align*}
which clearly satisfies the stationarity condition \eqref{eqn_chD:stationarity_condition__u_var_with_dual_var_in_top_admm_ver1}. 

Similarly, in the second step~\eqref{eqn_chD:update_z__step2__general_top_admm__prox__for_convergence},   
${\vec{z}}^{\left( i+1 \right)}$ minimizes that update over $\vec{z}$, \ie, 
\begin{align} 
    \vec{0} 
    \!\in \partial g\!\left(\! {\vec{z}}^{\left( i+1 \right)} \right) 
    \!-\! \sum_{m=1}^M \rho &\Biggl( \! {\vec{x}}_m^{\left( i \right)} \! - \! {\vec{z}}^{\left( i+1 \right)} 
    \!-\! \tau \nabla h \! \left( \! {\vec{z}}^{\left( i \right)} \right) \! + \! \frac{{\vec{y}}_m^{\left( i \right)}}{\rho}  \Biggr),
\end{align}
which can be rewritten as
\begin{align*}
    0 \! \in& \partial g\!\left( \!{\vec{z}}^{\left( i+1 \right)} \!\right) \!+\! {\rho M\! \tau \! \left( \!\nabla  h \! \left( \! {\vec{z}}^{\left( i \right)} \!\right) \! + \! \nabla \!h \! \left( \! {\vec{z}}^{\left( i+1 \right)} \!\right) \! - \! \nabla \!h \! \left( \! {\vec{z}}^{\left( i+1 \right)} \!\right) \!\right) } \nonumber \\ 
    &\!- \! \sum_{m=1}^M \! \left( \! \rho \! \left( \! {\vec{x}}_m^{\left( i+1 \right)} \! - \! {\vec{z}}^{\left( i+1 \right)} \! \right) \! + \! {\vec{y}}_m^{\left( i \right)}  \!\right) \! + \! \sum_{m=1}^M  \! \rho \! \left( \! {\vec{x}}_m^{\left( i+1 \right)} \! - \! {\vec{x}}_m^{\left( i \right)} \!  \right)\!. 
\end{align*}
Plugging {in the} dual variable update \eqref{eqn_chD:update_dual_ym__step3_parallel__general_top_admm} into the above expression and then rearranging terms yield 
\begin{align*}
\vec{0}  \!\in& \! \partial g\!\left(\! {\vec{z}}^{\left( i+1 \right)} \!\right) \!+\! \rho M \!\tau \! \left[ \!\nabla  h \! \left( \! {\vec{z}}^{\left( i+1 \right)} \!\right) \!- \! \left(\! \nabla h \! \left( \! {\vec{z}}^{\left( i+1 \right)} \!\right) \! - \! \nabla h \! \left( \! {\vec{z}}^{\left( i \right)} \!\right) \!\right) \!\right]\nonumber\\
&\! - \! \sum_{m=1}^M  {\vec{y}}_m^{\left( i+1 \right)} \!+ \! \sum_{m=1}^M \rho \Delta {\vec{x}}_m^{\left( i +1 \right)}.
\end{align*} 
{Considering that \ac{TOP-ADMM} generates sequences as $i \!\rightarrow\!\infty$, together with} the results from Corollary~\ref{corollary:x_m_vanishes_asymptotically} 
and Lemma~\ref{lemma:delta_gradient__h_z__and_h_x_m_vanishes_asymptotically}, we have that $\vec{0}  \!\in \! \partial g\!\left(\! {\vec{z}}^{\left( i+1 \right)} \!\right) \!+\! \beta \nabla  h \! \left( \! {\vec{z}}^{\left( i+1 \right)} \!\right) \! - \! \sum_{m=1}^M  {\vec{y}}_m^{\left( i+1 \right)}$ satisfies the stationarity condition~\eqref{eqn_chD:stationarity_condition__v_var_with_dual_var_in_top_admm_ver1}.
    
Finally, primal feasibility \eqref{eqn_chD:primal_feasibility_condition_in_top_admm} is directly satisfied by Proposition~\ref{prop_chD:asymptotic_residual_error_and_objective} 
{when $i \!\rightarrow\!\infty$}. 
\end{proof}

Therefore, the above result ensures the convergence of TOP-ADMM to a \ac{KKT} stationary point for a suitable choice of step-size $\tau$. Clearly, given the convexity assumptions, this implies that such a stationary point is also the optimal solution to problem~\eqref{eqn_chD:general_consensus_top_admm__generic_form}. 

\section{\skblack{Proof of Theorem \ref{theorem:global_convergence_fedtopadmm}}} \label{sec_chD:proof_of_fedtopadmm_convergence}
\skblack{
We present the basic convergence result of proposed {\FedTOPADMM} algorithm by extending the vanishing residual error property results of Theorem~\ref{thm_ChD:definition_of_topadmm_algorith__feasible_problem}. 
}

\skblack{
Recall {\FedTOPADMM} algorithm solves exactly the same problem as \ac{TOP-ADMM}---see Table~\ref{table_chD:distributed_opt_formulations}. Therefore, the augmented Lagrangian is same as~\eqref{eqn_chD:augmented_Lagrangian_top_admm__generic_form}, \ie, 
\begin{align*}
    \pazocal{L}_{\left(\left\{\rho_m\right\}\right)}&\left(\left\{ {\vec{w}}_m \right\}_{m=1}^M\!,  {\vec{w}}, \left\{{\bm{\lambda}}_m \right\}_{m=1}^M \right) \nonumber \\ 
    \!\coloneqq& \! \sum_{m=1}^M  f_m \left( {{\vec{w}}_m} \right)   \!+\! g\left( {{\vec{w}}} \right) \!+\! \beta h\!\left( {{\vec{w}}} \right)  \nonumber \\ 
    &+ \! \sum_{m=1}^M \left\{ {\bm{\lambda}}_m^\trans \left( {\vec{w}}_m \! - \! {\vec{w}} \right) \right\} \!+ \! \sum_{m=1}^M  \frac{\rho_m}{2} \left\| {\vec{w}}_m \! - \! {\vec{w}} \right\|_2^2.
\end{align*}
}

\skblack{
Similarly, the necessary and sufficient optimality conditions for~\eqref{eqn_chD:three_operator__FL_optimization_problem} consist of 1) dual feasibility, \ie, 
\begin{align}
\label{eqn_chD:stationarity_condition__1_fedtopadmm}
    \vec{0}
    &\in  \partial{g}\!\left( {{\vec{w}}^\star} \right)  \!+\!  \beta \nabla h\!\left( {{\vec{w}}^\star} \right)  \!-\!  \sum_{m=1}^M {\bm{\lambda}}_m^\star, \\
    \label{eqn_chD:stationarity_condition__2_fedtopadmm}
    \vec{0} 
    &\in  \partial {f}_m\left( {{\vec{w}}_m^\star} \right)   \!+\! {\bm{\lambda}}_m^\star, \forall m = 1,\ldots,M,
\end{align}
and 2) the primal feasibility, \ie,
\begin{align} \label{eqn_chD:primal_feasibility_condition_in_fedtopadmm}
    {\vec{w}}_m^\star \!-\! {\vec{w}}^\star = \vec{0} \quad \forall m = 1,\ldots,M.
\end{align}
}

\skblack{
To show the convergence of the proposed {\FedTOPADMM}, we show that the server and client processing satisfy the abovementioned optimality conditions asymptotically, \ie, when  $i \! \rightarrow \! \infty$. 
}

\skblack{
During the processing at the server side of Algorithm~\ref{alg:fed_top_admm}, ${\vec{w}}^{\left( i+1 \right)}$ essentially minimizes the $\vec{w}$-update corresponding to~\eqref{eqn_chD:step1__FedTOPADMM_algorithm}, \ie,
\begin{align}
    \vec{0} 
    \in& \partial {g}\left( {\vec{w}}^{\left( i+1 \right)} \right) \nonumber  \\ 
    \label{eqn_chD:proof_fedtopadmm_convergence_w_var__1}
    & - \sum_{m=1}^M \rho_m \left( \! \vec{w}_m^{\left( i \right)} \! - \! {\vec{w}}^{\left( i+1 \right)} \!-\! \tau^{\left( i \right)} \nabla h \! \left( \! {\vec{w}}^{\left( i \right)} \right) \! + \! \frac{{\bm{\lambda}}_m^{\left( i \right)}}{\rho_m}  \right) \nonumber \\
    & + \zeta^{\left(i\right)} \left( {\vec{w}}^{\left( i+1 \right)} \!-\! \vec{w}^{\left(i\right)}   \right), \\
    =& \partial {g}\left( {\vec{w}}^{\left( i+1 \right)} \right) \nonumber  \\ 
    \label{eqn_chD:proof_fedtopadmm_convergence_w_var__2}
    & + \! \sum_{m=1}^M \!\rho_m \tau^{\left( i \right)} \!\left[ \nabla h\!\left( \vec{w}^{\left( i \right)}\right) \!+\! \nabla h\!\left(\vec{w}^{\left(i+1 \right)}\right) \!-\! \nabla h\!\left(\vec{w}^{\left(i+1 \right)}\right) \right] \nonumber \\  
    & - \sum_{m=1}^M \rho_m \!\left[ \vec{w}_m^{\left( i \right)} \!-\! \vec{w}^{\left( i+1 \right)} \!+\!  \frac{\bm{\lambda_m^{\left(i\right)}}}{\rho_m} \!+\! \vec{w}_m^{\left( i+1 \right)} \!-\! \vec{w}_m^{\left( i+1 \right)} \right] \nonumber \\
    & + \zeta^{\left(i\right)} \left( {\vec{w}}^{\left( i+1 \right)} \!-\! \vec{w}^{\left(i\right)}   \right), \\
    \overset{\text{(a)}}{=}& \partial g\left( {\vec{w}}^{\left( i+1 \right)} \right)  \nonumber  \\ 
    &  + \sum_{m=1}^M \rho_m \tau^{\left( i \right)} \nabla h\! \left( \! {\vec{w}}^{\left( i \right)} \right) \!-\! \sum_{m=1}^M \bm{\lambda_m}^{\left( i+1 \right)} \nonumber  \\ 
    \label{eqn_chD:proof_fedtopadmm_convergence_w_var__3}
    & - \sum_{m=1}^M \rho_m \tau^{\left( i \right)} \left[\nabla h\! \left( \! {\vec{w}}^{\left( i+1 \right)} \right) \!-\! \nabla h\! \left( \! {\vec{w}}^{\left( i \right)} \right) \right] \nonumber \\
    & + \sum_{m=1}^M \rho_m \left(  \vec{w}_m^{\left( i+1 \right)} \!-\! \vec{w}_m^{\left( i \right)} \right) \!+\! \zeta^{\left(i\right)} \left( {\vec{w}}^{\left( i+1 \right)} \!-\! \vec{w}^{\left(i\right)}   \right),
\end{align}
where in $\text{(a)}$ we have plugged in the dual variable update~\eqref{eqn_chD:step3__FedTOPADMM_algorithm} assuming $\gamma\!=\!1$. 
}

\skblack{
Now, extending the vanishing residual errors property of \ac{TOP-ADMM} to \linebreak{\FedTOPADMM}, we have dual residual $\left( \! {\vec{w}}^{\left( i +1 \right)} \! - \! {\vec{w}}^{\left( i \right)} \right) \! \rightarrow \! 0$ and primal residual \linebreak$\left( \! {\vec{w}}_m^{\left( i +1 \right)} \! - \! {\vec{w}}^{\left( i \right)} \right) \! \rightarrow \! 0$,  which implies $\left(\!{\vec{w}}_m^{\left( i +1 \right)} \! - \! {\vec{w}}_m^{\left( i \right)}\right) \! \rightarrow \! 0$ when $i \! \rightarrow \! \infty$. Because we assume that the gradient of ${h}$ is $L$-Lipschitz and $\left(\!{\vec{w}}^{\left( i +1 \right)} \! - \! {\vec{w}}^{\left( i \right)}\right) \! \rightarrow \! 0$ for sufficiently large iterations, then the residual error $\left(\nabla  {h} \! \left( \! {\vec{w}}^{\left( i+1 \right)} \right) \! - \! \nabla {h} \! \left( \! {\vec{w}}^{\left( i \right)} \right) \right) \! \rightarrow \! 0$ by using Definition \ref{def_ChD:definition_of_lipschitz_continous_gradient}---such that \eqref{eqn_chD:proof_fedtopadmm_convergence_w_var__3} satisfies the stationarity condition~\eqref{eqn_chD:stationarity_condition__1_fedtopadmm}.
}

\skblack{
The processing at the user side corresponding to~\eqref{eqn_chD:step2__FedTOPADMM_algorithm},  each ${\vec{w}}_m^{\left( i+1 \right)}$ minimizes the $\vec{w}_m$-update, \ie,
\begin{align} \label{eqn_chD:proof_fedtopadmm_convergence_wm_var__1}
    \vec{0} \!  
    &\in \!  \partial {f}_m\!\left( \!  {\vec{w}}_m^{\left( i + 1\right)} \right) \!  +  \!  \rho_m \!  \left( \!  {\vec{w}}_m^{\left( i + 1\right)} \! - \! {\vec{w}}^{\left( i + 1\right)} \! + \! \frac{{\bm{\lambda}}_m^{\left( i \right)}}{\rho_m} \right). 
\end{align}
Now, using dual variable update~\eqref{eqn_chD:step3__FedTOPADMM_algorithm},  \eqref{eqn_chD:proof_fedtopadmm_convergence_wm_var__1} is
\iftrue
\begin{align} \label{eqn_chD:proof_fedtopadmm_convergence_wm_var__2}
    \vec{0}   
    &\in   \partial {f}_m\left( \!  {\vec{w}}_m^{\left( i + 1\right)} \right) \!  +  {\bm{\lambda}}_m^{\left( i + 1\right)},
\end{align}
which always satisfies the stationarity condition \eqref{eqn_chD:stationarity_condition__2_fedtopadmm} for sufficiently large iteration number $i\!\rightarrow\!\infty$.
\else 
$\vec{0}  \! \in \!  \partial \pazocal{F}_m\left( \!  {\vec{y}}_m^{\left( i + 1\right)} \right) \!  +  {\vec{z}}_m^{\left( i + 1\right)} $
which always satisfies the stationarity condition \eqref{eqn_chD:stationarity_condition__2_fedtopadmm} for sufficiently large iteration number $i\!\rightarrow\!\infty$.
\fi
}

\skblack{
Finally, primal feasibility \eqref{eqn_chD:primal_feasibility_condition_in_fedtopadmm} is satisfied by extending the primal residual error result in Theorem~\ref{thm_ChD:definition_of_topadmm_algorith__feasible_problem} to {\FedTOPADMM}, \ie, $\lim_{i \rightarrow +\infty} \left( {\vec{w}}_m^{\left( i+1 \right)} \! - \! {\vec{w}}^{\left( i+1 \right)} \right) \! = \! 0$. 
}

\end{appendices}

\singlespacing
\bibliographystyle{IEEEtran}
\bibliography{IEEEabrv,ref}

\end{document}